\font\msbm=msbm10
\numberwithin{equation}{section}
\theoremstyle{plain}
\newtheorem{theorem}{Theorem}[section]
\newtheorem{lemma}[theorem]{Lemma}
\newtheorem{corollary}[theorem]{Corollary}
\newtheorem{assumption}[theorem]{Assumption}
\def\mathbb#1{\hbox{\msbm{#1}}}
\newcommand{\be}{\boldsymbol{e}}
\newcommand{\br}{\boldsymbol{r}}
\newcommand{\bu}{\boldsymbol{u}}
\newcommand{\bv}{\boldsymbol{v}}
\newcommand{\bw}{\boldsymbol{w}}
\newcommand{\bx}{\boldsymbol{x}}
\newcommand{\by}{\boldsymbol{y}}
\newcommand{\bone}{\boldsymbol{1}}
\newcommand{\BA}{\boldsymbol{A}}
\newcommand{\BB}{\boldsymbol{B}}
\newcommand{\BC}{\boldsymbol{C}}
\newcommand{\BD}{\boldsymbol{D}}
\newcommand{\BH}{\boldsymbol{H}}
\newcommand{\BI}{\boldsymbol{I}}
\newcommand{\BJ}{\boldsymbol{J}}
\newcommand{\BM}{\boldsymbol{M}}
\newcommand{\BN}{\boldsymbol{N}}
\newcommand{\BP}{\boldsymbol{P}}
\newcommand{\BQ}{\boldsymbol{Q}}
\newcommand{\BU}{\boldsymbol{U}}
\newcommand{\BX}{\boldsymbol{X}}
\newcommand{\BY}{\boldsymbol{Y}}
\newcommand{\BZ}{\boldsymbol{Z}}
\newcommand{\bpi}{\boldsymbol{\pi}}
\newcommand{\BDelta}{\boldsymbol{\Delta}}
\newcommand{\bphi}{\boldsymbol{\phi}}
\newcommand{\bpsi}{\boldsymbol{\psi}}
\newcommand{\bvphi}{\boldsymbol{\varphi}}
\newcommand{\BLambda}{\boldsymbol{\Lambda}}
\newcommand{\BSigma}{\boldsymbol{\Sigma}}
\newcommand{\CC}{\mathbb{C}}
\newcommand{\I}{\boldsymbol{I}}
\newcommand{\reals}{{\mathbb{R}}}
\newcommand{\lag}{\langle}
\newcommand{\rag}{\rangle}
\newcommand{\lp}{\left(} % left parentheses
\newcommand{\rp}{\right)} % right parentheses
\newcommand{\ls}{\left[} % left square bracket
\newcommand{\rs}{\right]} % right square bracket
\newcommand{\lc}{\left\{} % left curly bracket
\newcommand{\rc}{\right\}} % right curly bracket
\DeclareMathOperator{\Real}{Re}
\DeclareMathOperator{\Var}{Var}
\DeclareMathOperator{\mi}{\mathrm{i}}
\DeclareMathOperator{\E}{\mathbb{E}}
\DeclareMathOperator{\diag}{diag}
\DeclareMathOperator{\SNR}{SNR}
\DeclareMathOperator{\sign}{sign}
\DeclareMathOperator{\argmin}{argmin}
\renewcommand{\Pr}{\mathbb{P}}
\begin{document}
\title{\bf Improved theoretical guarantee for rank aggregation via spectral method}
\author{Ziliang Samuel Zhong\thanks{Shanghai Frontiers Science Center of Artificial Intelligence and Deep Learning, New York University Shanghai, Shanghai, China. S.L. and Z.S.Z. are (partially) financially supported by the National Key R\&D Program of China, Project Number 2021YFA1002800, National Natural Science Foundation of China (NSFC) No.12001372, Shanghai Municipal Education Commission (SMEC) via Grant 0920000112, and NYU Shanghai Boost Fund. Z.S.Z. is also supported by NYU Shanghai Ph.D. fellowship.} \thanks{Center for Data Science, New York University.}, Shuyang Ling$^*$}
 
\maketitle

\begin{abstract}

Given pairwise comparisons between multiple items, how to rank them so that the ranking matches the observations? This problem, known as rank aggregation, has found many applications in sports, recommendation systems, and other web applications. As it is generally NP-hard to find a global ranking that minimizes the mismatch (known as the Kemeny optimization), we focus on the Erd\"os-R\'enyi outliers (ERO) model for this ranking problem. Here, each pairwise comparison is a corrupted copy of the true score difference. We investigate spectral ranking algorithms that are based on unnormalized and normalized data matrices. The key is to understand their performance in recovering the underlying scores of each item from the observed data. This reduces to deriving an entry-wise perturbation error bound between the top eigenvectors of the unnormalized/normalized data matrix and its population counterpart.
By using the leave-one-out technique, we provide a sharper  $\ell_{\infty}$-norm perturbation bound of the eigenvectors and also derive an error bound on the maximum displacement for each item, with only $\Omega(n\log n)$ samples. Our theoretical analysis improves upon the state-of-the-art results in terms of sample complexity, and our numerical experiments confirm these theoretical findings.
\end{abstract}

%\tableofcontents

%\input{introduction}

\section{Introduction}

Given a subset of pairwise comparisons $H_{ij},~(i,j)\in E \subseteq [n]^2$ among $n$ items (or players) for some edge set $E$, how do we find their global ranking or scores $r_i$? This problem, known as rank aggregation, is ubiquitous across various areas, such as PageRank~\cite{BP98,G15}, recommendation systems~\cite{BL08,GL11}, and sports tournament~\cite{CVF13}. 
In practice, the pairwise comparison $H_{ij}$ typically comes in two forms: (a) cardinal: the pairwise score comparison $H_{ij} = r_i - r_j$ between two items is given (such as the outcomes of sport games); (b) ordinal: only which item is preferred by the voters is known, i.e., $H_{ij} = 1 $ if the item $i$ is preferred over the item $j$ and $H_{ij} = -1$ if otherwise. 

One natural way to find the global ranking of all the items that fit the observed data is to minimize the total mismatch:
\begin{equation}\label{def:kopt}
\min_{r_i} \sum_{(i,j)\in E} |\sign(r_i - r_j) - \sign(H_{ij})|
\end{equation}
where $E$ is the edge set consisting of all the pairs of $(i,j)$ on which a comparison is observed.
However, the optimization problem above, also known as the 
Kemeny optimization or the (weighted) feedback arc set problem,  is an NP-hard combinatorial optimization problem in general~\cite{A06,BM08,JLYY11}.

Instead of considering the general ranking problem, we assume the observed data are noisy pairwise measurements between the players' scores~\cite{DCT21,GL11}
\[
H_{ij} = 
\begin{cases}
r_i - r_j + \text{random noise}, & (i,j)\in E, \\
0, & (i,j)\notin E.
\end{cases}
\]
In particular, we focus on the Erd\"os-R\'enyi Outliers model (ERO)~\cite{C16,DCT21,GL11}: the underlying network $G=([n],E)$ is an Erd\"os-R\'enyi graph with probability $p$; and for $(i,j)\in E,$ the pairwise comparison $H_{ij}$ is either a clean pairwise offset $r_i -r_j$ with probability $\eta$ or a random outlier with probability $1-\eta$. 
Our goal is to recover the underlying scores $r_i$ via an efficient algorithm with theoretical guarantees. More precisely, we will investigate spectral methods and study how their performance depends on the network structure and noise strength.

\subsection{Related works}

Here we will provide a brief and non-exhaustive review of relevant literature on the statistical models of the ranking problem. Interested readers may refer to the works such as~\cite{CFMW19,C16,DCT21,SW17} and the references therein.
A general statistical model for ordinal measurements is in the form of $H_{ij} \sim \text{Ber}(p_{ij})$ where $p_{ij}$ depends on their underlying ranking such as parametric models~\cite{BT52,T27}, noisy orders (the Mallow's model)~\cite{BM09,M57}, noisy sorting model~\cite{BM08,MWR18}, and stochastic transitivity~\cite{SBGW16}. In particular, statistical ranking under parametric models has attracted much attention~\cite{SBGW16}. Famous examples include the Bradley-Terry-Luce (BTL) model~\cite{BT52} which considers
\[
H_{ij} \sim \text{Ber}(\theta_{ij}),~~\theta_{ij} = \frac{e^{r_i}}{e^{r_i} +e^{r_j}},
\]
and the Thurstone model~\cite{T27}, under which $H_{ij}\sim \text{Ber}(\Phi(r_i - r_j))$ where $\Phi(\cdot)$ is the CDF of the standard normal distribution. 

For cardinal measurements, it has been pointed out in~\cite{C16,DCT21}, the ranking problem is equivalent to the synchronization problem, i.e., to recover $r_i$ from their pairwise measurement $f(r_i,r_j)$ for some known function $f(\cdot)$. In particular, if the measurement is of the form $H_{ij} = r_i-r_j$, it is actually a synchronization problem on the additive group on $\mathbb{R}$ or also known as translation synchronization~\cite{AKT22,HLB17}. 
By mapping the measurement to the complex unit circle, the cardinal ranking problem can also be reformulated as phase synchronization~\cite{S11,C16}.

For the ranking with either ordinal or cardinal measurements, the existing approaches can be viewed as minimizing a surrogate of the Kemeny optimization~\eqref{def:kopt}. The ordinal measurements with parametric model fit nicely into the framework of generalized linear model with specific link function. As a result, maximum likelihood estimation is a natural candidate to recover the hidden scores from the binary measurement. 
The algorithm and performance of the MLE for the BTL models are studied in works such as~\cite{CGZ22,GSZ23,H04}. On the other hand, finding the MLE for noisy sorting problem is usually NP-hard~\cite{BM08,A06}. However, efficient algorithms are available to achieve near-optimal statistical performance~\cite{BM08,MWR18}.

Least squares method is among the most popular approaches in recovering global rankings~\cite{SBGW16,CNTL21,JLYY11,HKW10}.
By assuming a statistical model on the data generation process, the least squares methods have been studied and analyzed in~\cite{SBGW16} for stochastically transitive models and~\cite{CNTL21} for a general family of parametric models. The work~\cite{HLB17} proposed a truncated least squares approach to handle translation synchronization with outliers and analyzed the convergence of the iterative re-weighted least squares method under both deterministic and random noise models. The least squares has been shown to have rich connections with Hodge theory in~\cite{JLYY11}. The ranking problem also fits into the framework of low-rank matrix recovery: ~\cite{GL11,LVG21} wrote the ranking problem into a matrix completion problem and solved it via nuclear norm minimization and alternating minimization.

Spectral method~\cite{FDV16} is another class of powerful approaches. For ordinal measurements, a random-walk-based algorithm called rank centrality was proposed in~\cite{APA18,NOS17} to estimate the global ranking. A finite sample error bound between the estimation and the true scores was obtained in~\cite{NOS17}. Recently,~\cite{CFMW19} showed the spectral methods nearly match the maximum likelihood estimation for the data drawn from the BTL models. In~\cite{FDV16}, the authors constructed a similarity matrix based on the pairwise comparisons, and proposed a spectral seriation algorithm that computes the Fiedler eigenvector of the Laplacian with respect to the similarity matrix and then carried out the ranking task.

\vskip0.2cm

Our work is most relevant to~\cite{DCT21} by d'Aspremont, etc. In~\cite{DCT21}, the authors considered the statistical ranking under the Erd\"os-R\'enyi outlier  (ERO) model. An SVD-based algorithm was proposed to recover the relative ranking of the hidden scores. In~\cite{DCT21}, the authors studied two types of spectral methods that are based on the unnormalized and normalized data matrix, and then obtained $\ell_2$ and $\ell_{\infty}$ error bound on the singular vectors. While the  $\ell_2$-norm error bound is near-optimal, the entry-wise error between the eigenvectors from the data matrix and its population counterpart is sub-optimal in terms of the sample complexity. Our work bridges this gap between theory and practice by using the leave-one-out technique. This technique has been successfully used  to establish near-optimal $\ell_{\infty}$-perturbation bounds in a series of applications such as community detection~\cite{AFWZ20}, angular synchronization~\cite{ZB18,L22}, and ranking under the BTL model~\cite{CFMW19}.
Our contribution is mainly on providing an $\ell_\infty$-norm perturbation on the eigenvector of both unnormalized or normalized data matrix under the ERO model.
Our results indicate that only $\Omega(n\log n)$ pairs of comparisons are needed to provide a sharp entry-wise error bound. Moreover, by using the $\ell_{\infty}$-norm eigenvector perturbation, we also provide an error bound on the maximum displacement for each item, i.e., the total number of mismatched pairs for each item.  

\subsection{Outline of the paper}

The paper is organized as follows: we will introduce the Erd\"os-R\'enyi outliers model and spectral methods in Section~\ref{s:prelim}, and the main theorems are presented in Section~\ref{s:main}. Numerics are provided in Section~\ref{s:numerics} to complement our theoretical results; and the detailed proofs are deferred to Section~\ref{s:A}-\ref{s:C}.

\subsection{Notations}
We denote $\mi = \sqrt{-1}$ as the imaginary unit. Let $\BA \in \mathbb{C}^{n \times m}$ be a complex matrix and denote its $(i,j)$-entry by $A_{ij}$. We denote its transpose and  conjugate transpose as $\BA^{\top}$ and $\BA^H$ respectively.  The $\ell_2$-norm of a vector $\bv$ is denoted by $\norm{\bv} = \sqrt{\bv^H\bv} = \sqrt{\sum_{j=1}^n{|v_j|^2}}$ and its $\ell_\infty$ norm is denoted by $\norm{\bv}_{\infty} = \max_{1\le k \le n} |v_k|$, where  $v_k$ is $\bv$'s $k$-th entry. The inner product between two complex vectors $\bu$ and $\bv$ is defined as $\lag \bu,\bv\rag = \bu^H\bv$. For two vectors $\bu$ and $\bv$, we denote $\bu\propto \bv$ if they are parallel. 
We denote the operator 2-norm of $\BA$ as $\norm{\BA}$ which is the largest singular value of $\BA$. We denote the all-one vector in $\reals^n$ as $\bone_n$ and the all-one matrix in $\reals^{n \times n}$ as $\BJ_n$. 

Let $\BA$ and $\BB \in \mathbb{C}^{n \times m}$, and their Hadamard product is denoted as $\BA \circ \BB = \BC$ where $C_{ij} = A_{ij}B_{ij}$. We say $f(x) \gtrsim g(x)$ (or $f(x) = \Omega(g(x))$) and $f(x) \lesssim g(x)$ (or $f(x) = O(g(x))$) if there exists an absolute constant $C>0$ such that $f(x)\geq Cg(x)$ and $f(x) \leq Cg(x)$; $f(x)\asymp g(x)$ (or $f(x) = \Theta(g(x))$) if $c_1f(x)\leq g(x) \leq c_2f(x)$ for two absolute positive constants $c_1$ and $c_2$. We say $f(x) = \tilde{O}(g(x))$ if there exist absolute constants $C>0$ and $c_3 \in \mathbb{R}$ such that $f(x)\le Cg(x)\log^{c_3}(x)$. 

\section{Preliminaries}\label{s:prelim}

In this section, we will introduce the main statistical model, and spectral methods that are used to perform the ranking tasks. 

\subsection{Erd\"os-R\'enyi Outliers model (${\rm ERO}(\eta,p,n)$)}
\label{sec:problem setting}

We now introduce the Erd\"os-R\'enyi Outliers (or ${\rm ERO}(\eta,p,n)$) model. Let $\br = (r_1,\dots, r_n)^\top \in \reals^n$ be the unknown score vector whose $i$-th entry $r_i$ is associated with the $i$-the player, $1 \le i \le n$. The score value $r_i$ is assumed to be uniformly bounded: $r_i \in [-M,M]$, $1 \le i \le n$.
The pairwise score difference $H_{ij}$ between $i$-th and $j$-th players obeys the following statistical model:

\begin{equation}\label{def:H}
H_{ij} = 
\begin{cases}
r_i - r_j, & \text{with probability } \eta p, \\
Z_{ij} \overset{\text{i.i.d.}}{\sim} {\cal U}[-M,M], & \text{with probability }(1-\eta)p, \\
0, & \text{with probability } 1-p,
 \end{cases} ~\forall i < j
\end{equation}
and $H_{ij} = -H_{ji}$ where $Z_{ij}\sim{\cal U}[-M,M]$ stands for the uniform distribution over $[-M,M]$. It is easy to see that the probability of observing a pairwise measurement is $p$, i.e., $\Pr(H_{ij} \neq 0) = p$. Each observed pairwise comparison is either a clean score difference $r_i-r_j$ with probability $\eta$ or is corrupted by uniform random noise with probability $1-\eta$. In other words, $p \in (0,1]$ controls the proportion of observed pairwise measurements and  $\eta \in (0,1]$ is the corruption level.

To make the setting more clear, we write $H_{ij}$ into a unified form. Let $X_{ij}\sim$Bernoulli($p$) and $Y_{ij}\sim$Bernoulli($\eta$) be independent random variables. Then for each $i<j$, it holds
\begin{align*}
H_{ij} & = X_{ij}( Y_{ij} (r_i - r_j) + (1-Y_{ij})Z_{ij}) \\
& = \eta p(r_i - r_j) + (X_{ij} Y_{ij} - \eta p) (r_i - r_j) + X_{ij}(1-Y_{ij})Z_{ij}.
\end{align*}
By this definition, the data matrix $\BH$ is anti-symmetric, i.e., $\BH^\top = -\BH$ which satisfies
\begin{equation}\label{def:Hmat}
\BH = \eta p(\br\bone_n^{\top}- \bone_n\br^{\top})+ (\BX\circ \BY - \eta p \BJ_n)\circ (\br\bone_n^{\top}- \bone_n\br^{\top}) + \BX\circ(\BJ_n - \BY)\circ \BZ
\end{equation}
where $\BX$ and $\BY$ are symmetric and consist of $X_{ij}$ and $Y_{ij}$ respectively; the corruption matrix $\BZ$ is anti-symmetric.
To motivate the algorithm, it is more convenient to decompose the data matrix into $\BH = \bar{\BH} + \BDelta$ where 

\begin{equation}
\label{def:H_bar}
	\bar{\BH} = \E \ls \BH \rs =\eta p (\br\bone_n^{\top} - \bone_n\br^{\top})
\end{equation} 
is the rank-2 signal and 
\begin{equation}\label{def:Delta}
\BDelta = \BH - \bar{\BH} = (\BX\circ \BY - \eta p\BJ_n)\circ (\br\bone_n^{\top}- \bone_n\br^{\top}) + \BX\circ(\BJ_n - \BY)\circ \BZ
\end{equation}
 is the random noise.
The noise $\BDelta$ consists of the randomness from $X_{ij}$, $Y_{ij},$ and $Z_{ij}.$
The rank-2 signal $\bar{\BH}$ and the noise $\BDelta$ are both anti-symmetric. 

In this work, we will try to (a) answer how to recover the rank $r_1,\ldots,r_n$ from the noisy pairwise measurements $\BH$; (b) provide an error bound between the true and estimated ranking.

\subsection{Spectral ranking algorithm}
\label{s:Clgo}

We will introduce the two types of ranking algorithms based on the eigenvectors of the data matrix $\mi\BH$ and its normalized version.

\paragraph{Unnormalized spectral ranking}
Our goal is to recover the underlying ranking score $\br$ from the noisy pairwise measurements $\BH$. Before introducing the algorithm, we first note that multiplying an anti-symmetric matrix by the imaginary unit $\mi = \sqrt{-1}$ gives a Hermitian matrix with real eigenvalues and mutually orthogonal eigenvectors. Therefore, it suffices to consider $\mi\BH$ in our algorithm and analysis. To motivate the spectral method and see why it works, let's first consider the spectral decomposition of $\mi\bar{\BH}$. Given the SVD of $\bar{\BH}$:
\begin{align}
    \bar{\BH} = \eta p (\br \bone_n - \bone_n \br^\top)= \bar{\sigma} \lp \bar{\bu}_1 \bar{\bu}_2^{\top} - \bar{\bu}_2 \bar{\bu}_1^{\top} \rp,
\end{align}	
where
\[
\bar{\sigma} = \eta p\sqrt{n}\|\br - \alpha\bone_n\|, \qquad \bar{\bu}_1 =  -\frac{\bone_n}{\sqrt{n}}, \qquad \bar{\bu}_2 = \frac{\br-\alpha\bone_n}{\|\br-\alpha\bone_n\|}, \qquad\alpha = \frac{1}{n}\br^{\top}\bone_n,
\]
we know that the eigen-pairs of $\mi \bar{\BH}$ are given by
\[
\begin{aligned}
\bar{\bphi}_1 & = \frac{1}{\sqrt{2}}\lp\bar{\bu}_2 + \mi\bar{\bu}_1\rp, ~~\text{with eigenvalue }~\bar{\sigma},  \\
\bar{\bphi}_2 & = \frac{1}{\sqrt{2}}\lp\bar{\bu}_2 - \mi\bar{\bu}_1\rp, ~~\text{with eigenvalue }~ -\bar{\sigma},
\end{aligned}
\] 
or equivalently $\mi\bar{\BH} = \bar{\sigma}(\bar{\bphi}_1\bar{\bphi}_1^H - \bar{\bphi}_2\bar{\bphi}_2^H)$.
Note that the real part of $\bar{\bphi}_1$ is a 
centered and shifted version of the unknown ranking scores $\br$ and the imaginary part is uninformative. 

This motivates us to use the real part of the top eigenvector $\bphi_1$ (w.r.t. eigenvalue $\sigma$) of $\mi\BH$ to perform the ranking. Before doing that, we also need to resolve some ambiguities.
It is easy to see that $e^{i\theta}\bphi_1 $ is an eigenvector corresponding to $\sigma$ for any $ \theta \in [0,2\pi)$. To resolve this rotation ambiguity, we will choose the eigenvector corresponding to $\sigma$ such that the real part of $e^{\mi\theta}\bphi_1$ is orthogonal to $\bone_n$: 
\begin{align*}
& \Re\lag e^{\mi\theta}\bphi_1,\bone_n\rag =  \lag\Re \bphi_1,\bone_n \rag \cos\theta -  \lag \Im \bphi_1, \bone_n\rag \sin \theta = 0, \\
& \Im\lag e^{\mi\theta}\bphi_1,\bone_n\rag = -\lag \Im (e^{\mi\theta}\bphi_1), \bone_n\rag = -(\lag \Re\bphi_1,\bone_n\rag\sin\theta +\lag \Im\bphi_1,\bone_n\rag\cos\theta ) \geq 0.
\end{align*}
Then the angle of rotation is given by  
\begin{align}
\label{eqn:unnormalized angle}
	\widehat{\theta} =
\begin{cases}
\arctan (\frac{\lag\Re \bphi_1,\bone_n \rag}{ \lag \Im \bphi_1, \bone_n\rag}), &  \lag \Im \bphi_1, \bone_n\rag < 0, \\
-\frac{\pi}{2}\sign(\lag\Re \bphi_1, \bone_n\rag), &  \lag \Im \bphi_1, \bone_n\rag = 0, \\
\pi + \arctan (\frac{\lag\Re \bphi_1,\bone_n \rag}{ \lag \Im \bphi_1, \bone_n\rag}), & \lag \Im \bphi_1, \bone_n\rag > 0.
\end{cases}
\end{align}
We will use the real part of $e^{\mi\widehat{\theta}}\bphi_1$ to perform the ranking. 
Motivated by the discussion above, we introduce the spectral method that is summarized in Algorithm \ref{algo:HER}.

\begin{algorithm}[h]
\caption{Unnormalized spectral ranking}
\label{algo:HER}
\begin{algorithmic}[1]
\State {\bf Input:} Measurement matrix $\BH$ constructed in Section~\ref{sec:problem setting}
\State {\bf Output:} Rank estimations $\widehat{\bpi}$
\State Compute an eigenvector $\bphi_1$ of $\mi \BH$ w.r.t. its largest eigenvalue $\sigma$
\State Find the angle of rotation $\widehat{\theta}$ in \eqref{eqn:unnormalized angle}
\State {\bf Rank recovery:} Obtain the estimated rank $\widehat{\bpi}$ of $\br$ based on the entries of $\bx = \Real(e^{\mi\widehat{\theta}}\bphi_1)$
\end{algorithmic}
\end{algorithm}

\paragraph{Normalized spectral ranking}

Note that the players with higher scores are more likely to influence the eigenvector. To mitigate this issue, we also consider the spectral method based on the normalized measurement matrix. 
Define the degree matrix $\BD$ of the measurements $\BH$ as
\begin{equation}\label{def:D}
\BD := \diag(|\BH|\bone_n)=  \diag\left(\sum_{j=1}^n |H_{1j}|,\ldots, \sum_{j=1}^n |H_{nj}|\right)
\end{equation}
where $|\BH|$ takes the absolute value of each entry in $\BH.$
The left normalized measurement matrix is defined by
\begin{align}\label{eqn:definition of HL}
\BH_{\rm L} := \BD^{-1}\BH.
\end{align}
The normalized spectral ranking algorithm first computes the top eigenvector of $\BH_{\rm L}$ and then uses it to rank the items.

To see why it works, we consider a population version of $\BH_{\rm L}$, defined by 
\begin{equation}\label{def:HLbar}
\bar{\BH}_{\rm L} := \bar{\BD}^{-1}\bar{\BH} ~~\text{ where } ~~\bar{\BD} = \diag(\E [|\BH|] \bone_n).
\end{equation}
Note that $\mi\bar{\BH}_{L}$ is rank-2 with the normalized top eigen-pair (see Section~\ref{s:B} for details):
\[
\bar{\bpsi}_1 \propto \frac{\bar{\BD}^{-1}(\br - \gamma\bone_n)}{\norm{\bar{\BD}^{-1/2}(\br - \gamma\bone_n)}} - \mi\frac{\bar{\BD}^{-1}\bone_n}{\norm{\bar{\BD}^{-1/2}\bone_n}},~~~\bar{\xi} = \eta p \sqrt{n}\norm{\bar{\BD}^{-1/2}(\br - \gamma \bone_n)}\norm{\bar{\BD}^{-1/2} \bone_n}
\] 
where $\gamma = (\bone_n^\top \bar{\BD}^{-1}\bone_n)^{-1}\br^\top \bar{\BD}^{-1}\bone_n$ which ensures $\lag \Re\bar{\bpsi}_1, \bone_n\rag = 0.$
The real part of $\bar{\BD}\bar{\bpsi}_1$ is a rescaled and shifted version of $\br$ and the imaginary part is parallel to $\bone_n$.  

As a result, one can use $\BD\Re\bpsi_1$ to do the ranking where 
$(\xi,\bpsi_1)$ is the top eigen-pair of $\mi \BH_{\rm L}$. Similar to the unnormalized case, $\bpsi_1$ is unique modulo a global phase factor. In the algorithm, we conduct the ranking based on $\BD\Re(e^{\mi\theta} \bpsi_1)$ where $\theta$ is chosen in the same way as the unnormalized algorithm:
\[
\lag \operatorname{Re}(e^{\mi\theta} \bpsi_1), \bone_n \rag =\lag  \operatorname{Re}(e^{\mi\theta} \bpsi_1 ), \bone_n \rag = \lag \Re\bpsi_1, \bone_n\rag\cos\theta -  \lag \Im \bpsi_1, \bone_n\rag\sin\theta= 0
\]
and $\lag \operatorname{Im}(e^{\mi\theta}\bpsi_1 ), \bone_n \rag \geq 0. $
The phase $\theta$ is equal to
\begin{align}
\label{eqn:normalized angle}
	\widehat{\theta} =
\begin{cases}
\arctan (\frac{\lag\Re \bpsi_1, \bone_n \rag}{ \lag \Im \bpsi_1, \bone_n\rag}), &  \lag \Im \bpsi_1, \bone_n\rag < 0, \\
-\frac{\pi}{2} \sign(\lag\Re \bpsi_1, \bone_n \rag), &  \lag \Im \bpsi_1, \bone_n\rag  = 0,\\
\pi + \arctan (\frac{\lag\Re \bpsi_1, \bone_n \rag}{ \lag \Im \bpsi_1, \bone_n\rag}), & \lag \Im \bpsi_1, \bone_n\rag > 0.
\end{cases}
\end{align}
The normalized spectral ranking algorithm is summarized in Algorithm \ref{algo:NHER}.

\begin{algorithm}[h]
\caption{Normalized spectral ranking}
\label{algo:NHER}
\begin{algorithmic}[1]
\State {\bf Input:} Measurement matrix $\BH$ constructed in Section~\ref{sec:problem setting}
\State {\bf Output:} Rank estimations: $\widehat{\bpi}$
\State Compute an eigenvector $\bpsi_1$ of $\mi \BD^{-1}\BH$ w.r.t. its largest eigenvalue $\xi$
\State Find the angle of rotation $\widehat{\theta}$ in \eqref{eqn:normalized angle}
\State Obtain the estimated rank $\widehat{\bpi}$ of $\br$ based on the entries of $\BD\bx$ where $\bx = \Real(e^{\mi\widehat{\theta}}\bpsi_1)$
\end{algorithmic}
\end{algorithm}

\section{Main results}\label{s:main}

A natural question is how the performance of the spectral algorithm depends on the signal and noise strength.
A simple reason why this spectral method works is based on the eigenvector perturbation argument that if the noise $\norm{\BDelta}$ is small, then the top eigenvectors of $\mi\BH$ and $\mi\bar{\BH}$ are close. Therefore, the top eigenvector $\bphi_1$ provides an accurate estimation of $\bar{\bphi}_1$.  
Throughout our presentation, we will frequently refer to the following  assumption on the signal-to-noise ratio (SNR).
\begin{assumption}
The signal-to-noise ratio $(\SNR)$ is defined by
\begin{equation}\label{def:SNR}
     \SNR(\eta,p,n,\br,M) : = \sqrt{\frac{\eta^2 p n}{\log n}}\cdot \frac{\|\br - \alpha \bone_n\|}{\sqrt{n}M}\gtrsim 1,~~~\alpha = \frac{\lag \br, \bone\rag}{n}.
\end{equation}
For simplicity, we abbreviate $\SNR(\eta,p,n,\br,M)$ to $\SNR.$ 
\end{assumption}
Note that under $\SNR\gtrsim 1$, we automatically have $p \gtrsim n^{-1}\log n$ since $\|\br - \alpha\bone_n\| \lesssim \sqrt{n}M$ for $r_i\in[-M,M].$
Let's give a brief discussion of the $\SNR.$ Note that in Algorithm~\ref{algo:HER}, we compute the top eigenvector $\bphi_1$ of $\mi\BH$ to approximate the top eigenvector $\bar{\bphi}_1$ of its expectation $\mi\bar{\BH}$. It is a classical problem in matrix perturbation theory. The Davis-Kahan theorem~\cite{DK70} immediately gives the $\ell_2$-norm perturbation bound
\[
\min_{|\beta|=1} \norm{\bphi_1 -\beta \bar{\bphi}_1} \lesssim \frac{\norm{\BDelta}}{\bar{\sigma} - \norm{\BDelta}}.
\]
The bound is meaningful only when the right hand side is $O(1)$, that is, the signal is stronger than noise. We claim in Lemma \ref{lem:delta} that under the ERO model, the noise matrix $\BDelta$ satisfies $\|\BDelta\|\lesssim M\sqrt{pn\log n}$ with high probability. This motivates the definition of SNR in~\eqref{def:SNR}:
\[
\frac{\bar{\sigma}}{\|\BDelta\|} \approx \frac{\eta p \sqrt{n}\|\br - \alpha\bone_n\|}{M\sqrt{pn\log n}} =: \SNR \gtrsim 1.
\]  
A similar argument also applies to the normalized scenario. 

In other words, the assumption $\SNR\gtrsim 1$ ensures that the spectral algorithm provides a meaningful rank estimation under $\ell_2$-norm. It also ensures that the informative eigenvalues of $\mi \BH$ are well-separated. Let $\sigma = \sigma_1\geq \cdots \geq \sigma_n = -\sigma$  be the eigenvalues of $\mi\BH$. By Weyl's inequality (Theorem \ref{thm:weyl}), it holds that
\[
\bar{\sigma} - \norm{\BDelta} \le \sigma \le \bar{\sigma} + \norm{\BDelta},\qquad - \norm{\BDelta} \le \sigma_i \le \norm{\BDelta} \quad (i = 2,\ldots, n-1).
\]
This implies that under Assumption~\ref{def:SNR}, only the largest and smallest eigenvalues of $\mi \BH$ are significant and the others are located in $(- \norm{\BDelta}, \|\BDelta\|)$.

\subsection{$\ell_{\infty}$-norm perturbation}
\label{sec:main theorems}

Despite the Davis-Kahan bound usually provides a sharp bound under $\ell_2$-norm, it does not automatically yield a tight $\ell_{\infty}$-norm error bound between $\bphi_1$ and $\bar{\bphi}_1$ if we simply use $\ell_2$-norm to control the $\ell_{\infty}$-norm. On the other hand, $\ell_{\infty}$-norm perturbation bound is more useful in providing an error bound on the mismatch for each individual item. 

In the following theorems, we provide a sharp $\ell_\infty$-norm perturbation bounds for Algorithm~\ref{algo:HER} and~\ref{algo:NHER}, which only depends on the $\SNR$ defined in \eqref{def:SNR}. Define the relative $\ell_{\infty}$-error between $\bx$ and $\by$ by
\begin{equation}\label{def:S}
R(\bx,\by) := \min_{s\in\{\pm 1\}} \frac{ \norm{\bx /\|\bx\| - s\by/\|\by\|}_{\infty}}{\|\by\|_{\infty}/\|\by\|} = \min_{s\in \{\pm 1\}}  \frac{1}{\|\by\|_{\infty}}\left\| \frac{\|\by\|}{\|\bx\|} \bx - s\by \right\|_{\infty}
\end{equation}
The first theorem establishes an $\ell_{\infty}$-norm error bound between the top eigenvectors of $\mi\BH$ and its expectation $\mi\bar{\BH} = \mi\eta p (\br \bone_n^{\top} - \bone_n\br^{\top})$.

\begin{theorem}[\bf $\ell_\infty$-perturbation bound for Algorithm~\ref{algo:HER}]\label{thm:mainHER}
Let $\bphi_1$ and $\bar{\bphi}_1$ be the top eigenvector of $\mi\BH$ and $\mi\bar{\BH}$ respectively. Under Assumption~\ref{def:SNR}, it holds with high probability that
\begin{equation}\label{eq:HER_error}
\min_{|\beta|=1}\norm{\bphi_1 - \beta\bar{\bphi}_1}_{\infty} \lesssim \SNR^{-1}\norm{\bar{\bphi}_1}_{\infty}
\end{equation}
provided that $\SNR \gtrsim 1$ where $\beta = \lag \bar{\bphi}_1,\bphi_1\rag/|\lag \bar{\bphi}_1,\bphi_1\rag|.$ Moreover, 
\begin{equation}\label{eq:HER_error2}
R(\bx, \bar{\bx}) \lesssim \SNR^{-1},
\end{equation}
where $\bx = \Real(\bphi_1)$ and $\bar{\bx} = \Real(\bar{\bphi}_1).$
\end{theorem}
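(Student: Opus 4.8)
The plan is to run a leave-one-out analysis directly on the eigen-equation for $\bphi_1$, reducing the $\ell_\infty$ bound to a concentration estimate for a single row of $\BDelta$ tested against a copy of the eigenvector that is statistically independent of that row. First I would convert the eigen-equation $\sigma\bphi_1 = \mi\BH\bphi_1 = \mi\bar{\BH}\bphi_1 + \mi\BDelta\bphi_1$ into an aligned perturbation identity: substituting the spectral decomposition $\mi\bar{\BH} = \bar{\sigma}(\bar{\bphi}_1\bar{\bphi}_1^H - \bar{\bphi}_2\bar{\bphi}_2^H)$ and using $\lag\bar{\bphi}_1,\bphi_1\rag = \beta\,|\lag\bar{\bphi}_1,\bphi_1\rag|$ yields
\[
\bphi_1 - \beta\bar{\bphi}_1
= \beta\Big(\tfrac{\bar{\sigma}}{\sigma}\big|\lag\bar{\bphi}_1,\bphi_1\rag\big| - 1\Big)\bar{\bphi}_1
- \tfrac{\bar{\sigma}}{\sigma}\lag\bar{\bphi}_2,\bphi_1\rag\,\bar{\bphi}_2
+ \tfrac{1}{\sigma}\mi\,\BDelta\bphi_1 .
\]
Taking $\ell_\infty$ norms, the first two terms are governed by scalars already controlled by the $\ell_2$ theory: $|\tfrac{\bar{\sigma}}{\sigma}|\lag\bar{\bphi}_1,\bphi_1\rag| - 1|\lesssim\SNR^{-1}$ and $|\lag\bar{\bphi}_2,\bphi_1\rag|\lesssim\SNR^{-1}$ both follow from Weyl's inequality (Theorem~\ref{thm:weyl}) and Davis--Kahan in the regime $\SNR\gtrsim 1$, while $\|\bar{\bphi}_2\|_\infty = \|\bar{\bphi}_1\|_\infty$ because $\bar{\bphi}_2 = \overline{\bar{\bphi}_1}$. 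Everything therefore reduces to showing $\sigma^{-1}\|\BDelta\bphi_1\|_\infty \lesssim \SNR^{-1}\|\bar{\bphi}_1\|_\infty$; since $\sigma\asymp\bar{\sigma}$ and $\bar{\sigma}\,\SNR^{-1}\asymp M\sqrt{pn\log n}\asymp\|\BDelta\|$ (Lemma~\ref{lem:delta}), this is equivalent to $\max_m|\be_m^\top\BDelta\bphi_1|\lesssim M\sqrt{pn\log n}\,\|\bar{\bphi}_1\|_\infty$.

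For each $m$ I would introduce the leave-one-out matrix $\BH^{(m)} = \bar{\BH} + \BDelta^{(m)}$, where $\BDelta^{(m)}$ agrees with $\BDelta$ except that its $m$-th row and column are zeroed out, and let $\bphi_1^{(m)}$ be the top eigenvector of $\mi\BH^{(m)}$ with its global phase fixed so that $\lag\bphi_1,\bphi_1^{(m)}\rag\ge 0$. The crucial gain is that $\bphi_1^{(m)}$ is independent of the $m$-th row $\BDelta_{m,\cdot}$. Splitting
\[
\be_m^\top\BDelta\bphi_1 = \be_m^\top\BDelta\bphi_1^{(m)} + \be_m^\top\BDelta\big(\bphi_1 - \bphi_1^{(m)}\big),
\]
I would bound the first term by a Bernstein inequality applied to the sum $\sum_j \Delta_{mj}(\bphi_1^{(m)})_j$ of independent, mean-zero, bounded summands with per-entry variance $O(pM^2)$, obtaining $\lesssim M\sqrt{p\log n} + M\log n\,\|\bphi_1^{(m)}\|_\infty$; both pieces fall below the target once $\|\bphi_1^{(m)}\|_\infty\asymp\|\bar{\bphi}_1\|_\infty$ and $pn\gtrsim\log n$ (guaranteed by $\SNR\gtrsim 1$). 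The second term I would bound by Cauchy--Schwarz as $\|\BDelta_{m,\cdot}\|\,\|\bphi_1 - \bphi_1^{(m)}\|$, with the row-norm estimate $\|\BDelta_{m,\cdot}\|\lesssim M\sqrt{pn}$ holding with high probability.

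The $\ell_2$ leave-one-out stability $\|\bphi_1-\bphi_1^{(m)}\|$ would be obtained by applying Davis--Kahan to the one-row/column difference $\BH-\BH^{(m)}$, giving $\|\bphi_1-\bphi_1^{(m)}\|\lesssim\bar{\sigma}^{-1}\big(|\be_m^\top\BDelta\bphi_1^{(m)}| + |(\bphi_1^{(m)})_m|\,\|\BDelta_{\cdot,m}\|\big)$. The entry $(\bphi_1^{(m)})_m$ here is bounded by $\|\bphi_1^{(m)}\|_\infty\approx\|\bar{\bphi}_1\|_\infty$, so the $\ell_\infty$ magnitude of the eigenvector that I am trying to bound reappears inside the bound for it: this circular dependence is the main obstacle. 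I would resolve it with a self-consistent (bootstrap/induction) argument on a single high-probability event obtained by a union bound over $m\in[n]$: posit the a-priori estimates $\|\bphi_1\|_\infty\lesssim\|\bar{\bphi}_1\|_\infty$ and $\max_m\|\bphi_1-\bphi_1^{(m)}\|\lesssim\bar{\sigma}^{-1}\|\bar{\bphi}_1\|_\infty M\sqrt{pn}$, feed them back through the two displays above to recover the same inequalities with strictly improved constants, and close by contraction. Additional care is needed to align the free phases of the $n$ vectors $\bphi_1^{(m)}$ with $\bphi_1$ simultaneously; this phase bookkeeping, absent in the Hermitian rank-one template, is where the complex rank-two anti-symmetric structure makes the argument heavier.

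Finally, \eqref{eq:HER_error2} would follow from \eqref{eq:HER_error} by passing to real parts. Writing $\beta = a + \mi b$ with $a^2+b^2 = 1$ and using $\Re\bar{\bphi}_1 = \bar{\bu}_2/\sqrt{2}$ and $\Im\bar{\bphi}_1 = \bar{\bu}_1/\sqrt{2}\propto\bone_n$, the $\bone_n$-component of $\Re(\beta\bar{\bphi}_1)$ is proportional to $b$; the normalization $\lag\Re\bphi_1,\bone_n\rag = 0$ then forces $b = O(\SNR^{-1})$, so $\beta$ is real up to $O(\SNR^{-1})$ and $\Re(\beta\bar{\bphi}_1)\approx\pm\bar{\bu}_2/\sqrt{2} = \pm\bar{\bx}$. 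Substituting into the definition \eqref{def:S} of $R(\bx,\bar{\bx})$ and using $\|\bar{\bx}\|\asymp 1$ together with $\|\bar{\bphi}_1\|_\infty\asymp\|\bar{\bx}\|_\infty$ converts the $\ell_\infty$ bound on $\bphi_1-\beta\bar{\bphi}_1$ into $R(\bx,\bar{\bx})\lesssim\SNR^{-1}$.
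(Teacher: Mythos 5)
Your proposal is correct and follows essentially the same route as the paper's proof: a one-step expansion of the eigen-equation for $\bphi_1$, the leave-one-out matrices $\BH^{(m)}$ with Bernstein applied to the decoupled term $\be_m^\top\BDelta\bphi_1^{(m)}$ and Cauchy--Schwarz plus Davis--Kahan applied to $\bphi_1-\bphi_1^{(m)}$, a self-bounding (bootstrap) step to close the circular dependence on $\|\bphi_1\|_{\infty}$, and a projection onto $\bone_n$ showing $\Imag\beta = O(\SNR^{-1})$ to pass from the complex bound to $R(\bx,\bar{\bx})$. The only difference is organizational: you expand $\mi\bar{\BH}\bphi_1$ into scalar-weighted multiples of $\bar{\bphi}_1,\bar{\bphi}_2$ and run the leave-one-out on $\BDelta\bphi_1$ directly, whereas the paper introduces the surrogate $\widetilde{\bphi}_1 = \mi\BH\bar{\bphi}_1/\sigma$ and splits the noise term as $\BDelta(\bphi_1-\beta\bar{\bphi}_1)$ plus $\BDelta\bar{\bphi}_1$ (its terms $E_1$--$E_4$ and $T_1$, $T_2$); the underlying estimates coincide.
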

Now we proceed to present the main theorem for the normalized spectral ranking. Note that the normalized spectral ranking involves inverting $\BD$ in \eqref{def:D}. Therefore, for theoretical analysis, we impose a slightly stricter assumption.
Under the ERO model, direct computation gives
\begin{align*}%\label{def:barD}
\bar{D}_{ii} & = \E D_{ii} = \sum_{j=1}^n \E |H_{ij}| = p\eta \sum_{j=1}^n |r_i - r_j| + \frac{p(1-\eta)(n-1)M}{2}.
\end{align*}
Define
\begin{equation}\label{def:lambda}
\lambda(\eta,p,n,\br,M) :=\frac{1}{pnM} \min_{1\leq i\leq n} \bar{D}_{ii} = \frac{1}{n} \left(\frac{\eta \min_{1\leq i\leq n}\|r_i \bone_n - \br\|_1}{M} + \frac{(1-\eta)(n-1)}{2} \right).
\end{equation}
We abbreviate $\lambda(\eta,p,n,\br,M)$ to $\lambda$ if no confusion arises.
Note that
\[
\lambda pnM \leq\bar{D}_{ii} \leq 2p\eta nM + \frac{p(1-\eta)(n-1)M}{2} \leq 2pnM
\]
and therefore, the condition number $\kappa$ of $\bar{\BD}$ satisfies
$\kappa(\bar{\BD}) \leq 2/\lambda.$

\begin{assumption}
For the $\SNR$ defined in~\eqref{def:SNR}, we assume:
\begin{equation}\label{def:SNR2}
\SNR \gtrsim \lambda^{-3} ~~\text{ implying  }~~\sqrt{\frac{pn}{\log n}} \gtrsim \lambda^{-3}.
\end{equation}
\end{assumption}
In fact, the assumption should be interpreted as $\SNR \gtrsim (2/\lambda)^3$ where $2/\lambda$ is an upper bound of the condition number. But for simplicity, we omit the constant. 
In particular, when $r_k = k$, $k=1,\cdots,n$ and $M=n$, it holds $\min_{1\leq i\leq n}\|r_i \bone_n - \br\|_1 \approx n^2/4 $ and then $\lambda\approx 1/4$. The next theorem for the normalized spectral ranking holds under~\eqref{def:SNR2}. 
%As a result, $\kappa(\BD) = O(1)$ and the performance of the unnormalized and normalized spectral algorithms are similar. 

\begin{theorem}[\bf $\ell_\infty$-perturbation bound for Algorithm~\ref{algo:NHER}]\label{thm:mainNHER}
Let $\bpsi_1$ and $\bar{\bpsi}_1$ be the top eigenvector of $\mi\BD^{-1}\BH$ and $\mi\bar{\BD}^{-1}\bar{\BH}$. Under Assumption~\ref{def:SNR2} it holds with high probability that 
\begin{equation}\label{eq:NHER_error}
\min_{|\beta|=1}\norm{\bpsi_1 - \beta\bar{\bpsi}_1}_{\infty} \lesssim \lambda^{-3}\SNR^{-1}\norm{\bar{\bpsi}_1}_{\infty}
\end{equation}
where $\beta = \lag \bar{\bpsi}_1,\bpsi_1\rag/|\lag \bar{\bpsi}_1,\bpsi_1\rag|.$ Moreover, it holds that
\begin{equation}\label{eq:NHER_error2}
R(\BD\bx, \bar{\BD}\bar{\bx})\lesssim \lambda^{-8} \SNR^{-1}
\end{equation}
where $\bx = \Real \bpsi_1$, $\bar{\bx} = \Real \bar{\bpsi}_1$, and $\bar{\BD}\bar{\bx}\propto \br - \gamma\bone_n.$
\end{theorem}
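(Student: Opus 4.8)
The plan is to reduce the non-Hermitian normalized problem to the Hermitian setting already handled for Algorithm~\ref{algo:HER}, and then pay the extra factors of $\lambda^{-1}$ created by the degree normalization. The key observation is that $\BD^{-1}\BH$ is similar to the anti-symmetric matrix $\BS := \BD^{-1/2}\BH\BD^{-1/2}$ through $\BD^{-1}\BH = \BD^{-1/2}\BS\BD^{1/2}$, so $\mi\BS$ is Hermitian and its top eigenvector $\bw_1$ satisfies $\bpsi_1 \propto \BD^{-1/2}\bw_1$. In the population, $\bar{\bpsi}_1 \propto \bar{\BD}^{-1/2}\bar{\bw}_1$ where $\bar{\bw}_1$ is the top eigenvector of $\mi\bar{\BS}$ with $\bar{\BS} := \bar{\BD}^{-1/2}\bar{\BH}\bar{\BD}^{-1/2}$, matching the eigenvector listed in Section~\ref{s:B}. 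Thus it suffices to prove an $\ell_\infty$ bound between $\bw_1$ and $\bar{\bw}_1$ and then de-normalize.

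First I would control the random, $\BH$-coupled degree matrix. Each $D_{ii} = \sum_j |H_{ij}|$ is a sum of independent bounded terms, so a Bernstein bound gives $\max_i |D_{ii} - \bar{D}_{ii}| \lesssim M\sqrt{pn\log n}$ with high probability. Since $\bar{D}_{ii} \geq \lambda pnM$, this yields $\norm{\bar{\BD}^{1/2}\BD^{-1/2} - \I} \lesssim \lambda^{-1}\sqrt{\log n/(pn)}$, which is $\ll 1$ under Assumption~\ref{def:SNR2}. Writing $\BS - \bar{\BS} = \BD^{-1/2}\BDelta\BD^{-1/2} + (\BD^{-1/2}\bar{\BH}\BD^{-1/2} - \bar{\BD}^{-1/2}\bar{\BH}\bar{\BD}^{-1/2})$ and bounding the first summand by $\lambda^{-1}(pnM)^{-1}\norm{\BDelta}$ via Lemma~\ref{lem:delta} and the second by the degree-concentration estimate applied to the rank-two signal, I obtain $\norm{\BS - \bar{\BS}} \lesssim \lambda^{-c}\,\bar{\xi}\,\SNR^{-1}$ for a bounded power $c$. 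Weyl's inequality (Theorem~\ref{thm:weyl}) then separates the top eigenvalue $\xi$ of $\mi\BS$ from the bulk, and Davis--Kahan yields the $\ell_2$ bound $\min_{|\beta|=1}\norm{\bw_1 - \beta\bar{\bw}_1} \lesssim \lambda^{-c}\SNR^{-1}$.

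The core of the argument is the leave-one-out step for $\bw_1$, mirroring the proof of Theorem~\ref{thm:mainHER}. For each index $l$ I would build $\BS^{(l)}$ by replacing the $l$-th row and column of $\BDelta$ together with the $l$-th degree contributions by their expectations, so that $\BS^{(l)}$, and hence its top eigenvector $\bw_1^{(l)}$, is independent of the randomness in the $l$-th row of $\BH$. Starting from the eigenvector identity $(\bw_1)_l = \xi^{-1}(\mi\BS\bw_1)_l$ and splitting $\mi\BS\bw_1 = \mi\bar{\BS}\bw_1 + \mi(\BS-\bar{\BS})\bw_1$, the signal term is handled by the rank-two structure of $\bar{\BS}$, while the noise term is compared against $\mi(\BS-\bar{\BS})\bw_1^{(l)}$: the latter is a sum of independent mean-controlled terms amenable to a Bernstein bound, and the difference is absorbed by $\norm{\bw_1 - \bw_1^{(l)}} \lesssim \lambda^{-c}\SNR^{-1}\norm{\bar{\bw}_1}_\infty$, obtained from the $\ell_2$ perturbation between $\BS$ and $\BS^{(l)}$. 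Solving the resulting self-bounding inequality gives $\min_{|\beta|=1}\norm{\bw_1 - \beta\bar{\bw}_1}_\infty \lesssim \lambda^{-c}\SNR^{-1}\norm{\bar{\bw}_1}_\infty$.

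Finally I would de-normalize. Since $\bpsi_1 \propto \BD^{-1/2}\bw_1$ and $\BD^{-1/2}$ is close to $\bar{\BD}^{-1/2}$ in relative operator norm, each multiplication by $\BD^{\pm 1/2}$ costs a bounded factor of $\lambda^{-1/2}$ in the $\ell_\infty/\ell_2$ ratios, and the entrywise degree error contributes an additional $\lambda^{-1}\SNR^{-1}$ through cross terms; collecting these produces \eqref{eq:NHER_error} with the factor $\lambda^{-3}$. For the displacement bound \eqref{eq:NHER_error2} I would multiply through by $\BD$ to compare $\BD\bx$ with $\bar{\BD}\bar{\bx} \propto \br - \gamma\bone_n$, re-weighting both the eigenvector error and the degree error $\norm{\BD - \bar{\BD}}$ by $\BD$ and $\bar{\BD}$; the further powers of $\lambda^{-1}$ from $\norm{\bar{\BD}^{-1/2}\bone_n}$, $\norm{\bar{\BD}^{-1/2}(\br-\gamma\bone_n)}$, and the relative-norm denominators in $R(\cdot,\cdot)$ accumulate to the stated $\lambda^{-8}\SNR^{-1}$. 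The main obstacle is the leave-one-out step: because $\BD$ is itself a function of the entire data matrix $\BH$, the leave-one-out perturbation must simultaneously decouple the noise $\BDelta$ and the degree normalization in row $l$ while keeping $\norm{\BS - \BS^{(l)}}$ small, which makes both the independence structure and the propagation of the $\lambda$-factors considerably more delicate than in the unnormalized case.
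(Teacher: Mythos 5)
Your overall skeleton (degree concentration, rank-two signal plus noise, leave-one-out, de-normalization) matches the paper's in spirit, but the step you yourself flag as ``the main obstacle'' --- how to run leave-one-out when the normalization $\BD$ is a function of the entire data matrix --- is precisely the step this theorem's proof has to supply, and your proposal does not supply it. Constructing $\BS^{(l)}$ by replacing the $l$-th row/column of $\BDelta$ \emph{and} the $l$-th degree contributions by expectations does not by itself restore the independence your Bernstein step needs: the $l$-th row of $\BS - \bar{\BS}$ has entries $D_{ll}^{-1/2}H_{lj}D_{jj}^{-1/2} - \bar{D}_{ll}^{-1/2}\bar{H}_{lj}\bar{D}_{jj}^{-1/2}$, and these summands are not independent of one another (they share the random common factor $D_{ll}^{-1/2}$, and each $D_{jj}$ also retains a dependence on $H_{lj}$). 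To apply Bernstein you would first have to swap every sample degree for a deterministic surrogate uniformly over $l$ and control the accumulated error --- exactly the delicate propagation you defer. Relatedly, the $\lambda$ bookkeeping is left at ``a bounded power $c$,'' so the stated rates $\lambda^{-3}$ in \eqref{eq:NHER_error} and $\lambda^{-8}$ in \eqref{eq:NHER_error2} are never actually derived.

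The paper's proof avoids this difficulty with two moves you do not make. First, it never symmetrizes the sample matrix in the leave-one-out argument: it keeps $\BD^{-1}\BH$, so that in the noise term $E_2 = \xi^{-1}\norm{\BD^{-1}\BDelta(\bpsi_1 - \beta\bar{\bpsi}_1)}_\infty$ the data-dependent normalization is pulled out as the scalar factor $d_{\min}^{-1}$, and the remaining inner products $\lag \BDelta_k, \cdot\rag$ involve the raw noise vector $\BDelta_k$ whose entries are independent. Second, the leave-one-out eigenvector $\bpsi_1^{(k)}$ is defined as the top eigenvector of $\mi\bar{\BD}^{-1}\BH^{(k)}$, i.e., with the \emph{deterministic population} degree matrix $\bar{\BD}$; since $\bar{\BD}$ is nonrandom, $\bpsi_1^{(k)}$ is automatically independent of $\BDelta_k$, and no leave-one-out degree matrix is ever needed. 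The price is that one must compare eigenvectors of two differently normalized matrices, $\BD^{-1}\BH$ and $\bar{\BD}^{-1}\BH^{(k)}$, and this is exactly what the generalized Davis--Kahan theorem for pencils $\BN^{-1}\BM$ (Theorem~\ref{thm:dk}, quoted from~\cite{DLS21}) is invoked for; the factor $\sqrt{\kappa(\BD)} \lesssim \lambda^{-1/2}$ it carries, together with $d_{\min} \gtrsim \lambda pnM$ and $\norm{\BDelta_{\rm sym}} \lesssim \lambda^{-2}\sqrt{\log n /(pn)}$, is where the explicit powers of $\lambda^{-1}$ enter. If you wish to salvage your symmetrized route, you must first prove a uniform-in-$l$ degree-replacement lemma before any concentration step; that lemma is the real content of the proof, not a detail to be absorbed.
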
 

To see why the two theorems above provide a sharp characterization of the perturbation, we consider a special case  when $r_k=k$ for $k=1,2,\ldots,n$. Then $\norm{\br - \alpha \bone_n} = \Theta(n^{3/2})$ and $\norm{\bar{\bphi}_1}_{\infty} = O(1/\sqrt{n})$.  Under Assumption~\ref{def:SNR}, $\bx = \Re\bphi_1$ satisfies
\[
\min_{s\in \{\pm 1\}}\| \bx - s \bar{\bx} \|_{\infty} \lesssim \SNR^{-1} \frac{\norm{\br-\alpha \bone_n}_{\infty}}{\norm{\br - \alpha \bone_n}} \lesssim \frac{\SNR^{-1}}{\sqrt{n}}
\]
with high probability where
\[
\SNR \approx \sqrt{\frac{\eta^2pn}{\log n}},\qquad~~\frac{\norm{\br-\alpha \bone_n}_{\infty}}{\norm{\br - \alpha \bone_n}} = O\left(\frac{1}{\sqrt{n}}\right).
\]

Figure \ref{fig:u2 plot uniform} visualizes the error bar for each entry of $\bx$ over $25$ experiments. We can observe that under both $\SNR =0.5$ and $0.8$, the error $\bx - \hat{s}\bar{\bx}$  distributes evenly across each entry where $\hat{s} = \argmin_{s\in\{\pm 1\}}\|\bx - s\bar{\bx}\|_{\infty}$. This empirically suggests that in this special example, the $\ell_{\infty}$ perturbation bound improves the naive $\ell_{\infty}$-error bound via $\ell_2$-bound by a dimension factor $n^{-1/2}$. 
\begin{figure}[h!]
\subfloat[$\SNR = 0.5$]{\includegraphics[width=0.42\columnwidth]{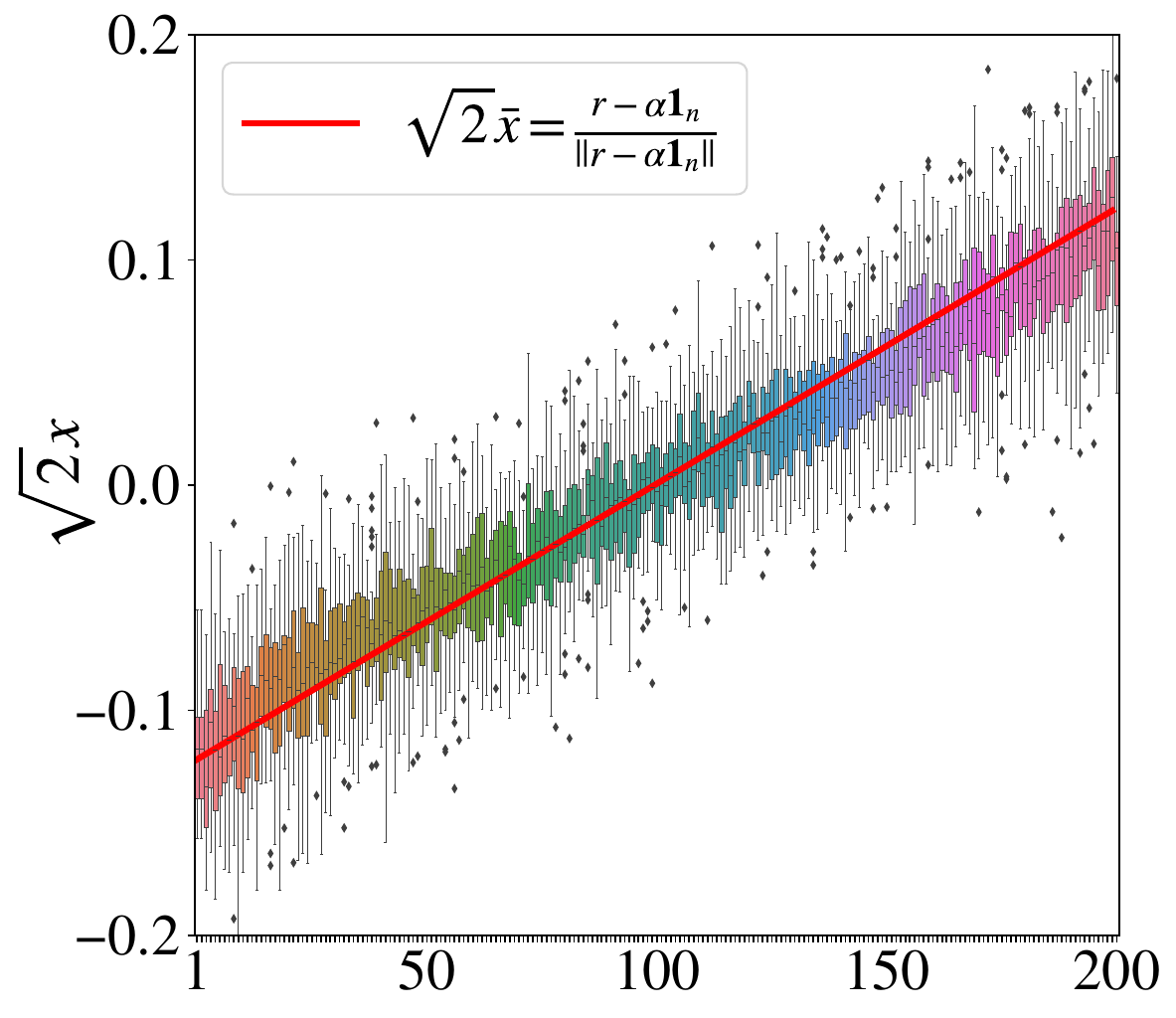}}
\hfill
\subfloat[$\SNR = 0.8$]{\includegraphics[width=0.42\columnwidth]{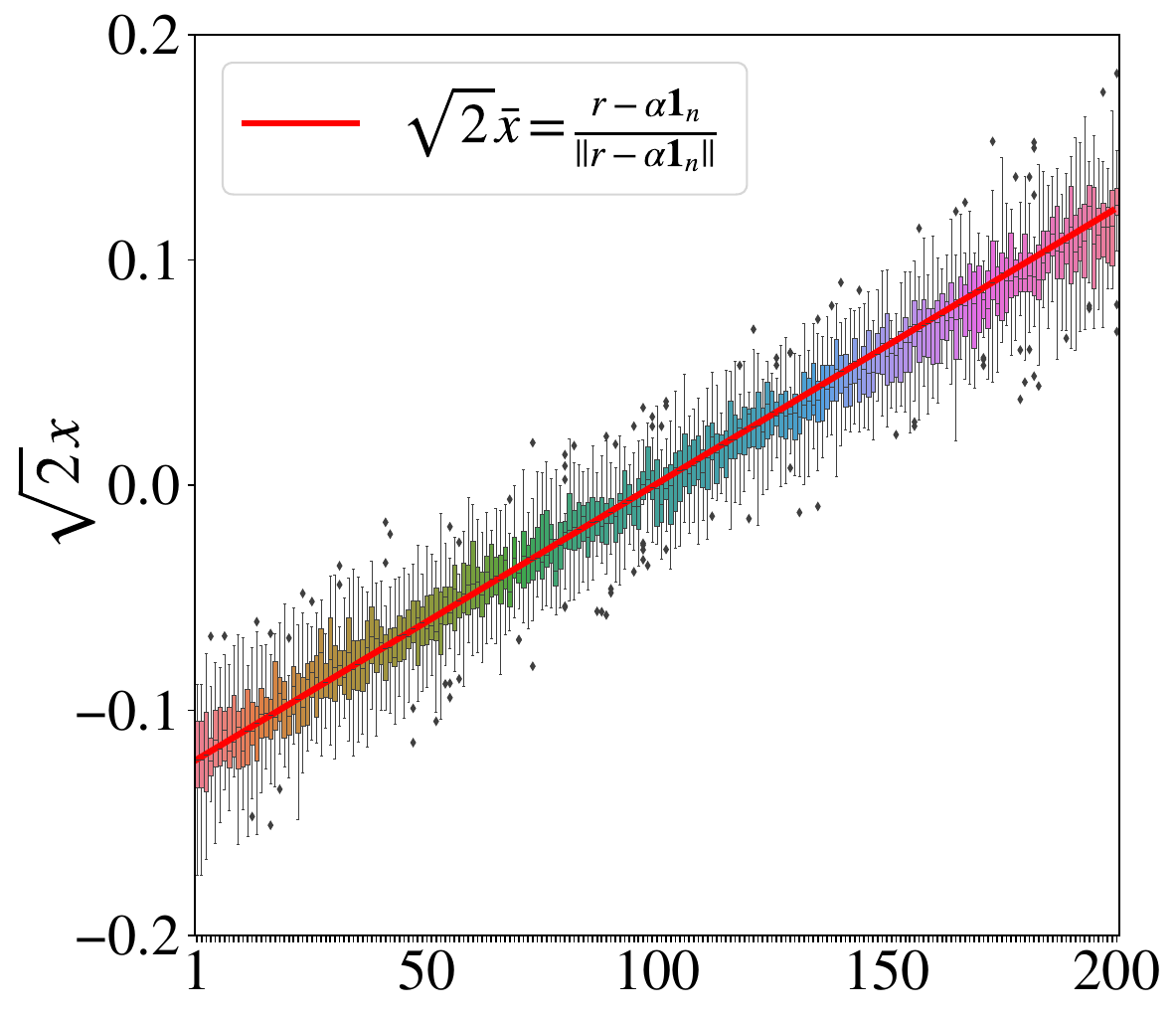}}
\caption{The error bar of $\bx$ and $\bar{\bx}$ (red line) for Algorithm~\ref{algo:HER} with ground-truth $r_k = k$, $1\leq k\leq 200$ with 25 experiments. The range of error bar  is even across each entry of $\bx - \hat{s}\bar{\bx}$. }
\label{fig:u2 plot uniform}
\end{figure}

Now we make a brief comparison between the state-of-the-art results and ours. One can verify that Algorithm~\ref{algo:HER} and~\ref{algo:NHER} are equivalent to the SVD-RS and SVD-NRS proposed in~\cite{DCT21} respectively. The authors of~\cite{DCT21} provide the $\ell_2$ perturbation bounds for both unnormalized and normalized algorithms and the $\ell_{\infty}$ perturbation bound only for the unnormalized algorithm. 
In particular, if $r_k =k$, their analysis guarantees that the unnormalized algorithm requires $\Omega(n^{4/3}\log^{3/2}n )$ observations to achieve $\ell_\infty$-error of order $\tilde{O}(n^{-1/2})$.

We provide $\ell_{\infty}$ perturbation bounds for both unnormalized and normalized spectral ranking algorithms.
Regarding the sample complexity, we note it suffices to ensure $\SNR \gtrsim 1$, i.e., $p \gtrsim \eta^{-2}n^{-1}\log n$, and then $p\binom{n}{2} \gtrsim \Omega(\eta^{-2}n\log n)$ pairwise measurements are needed to achieve $\ell_\infty$-norm error of order $O(n^{-1/2})$. 
Our results improve the state-of-the-art result in~\cite{DCT21} in terms of reducing sample complexity from $\Omega(n^{4/3}\log^{3/2}n)$ to $\Omega(n\log n)$.

\vskip0.25cm

Finally, we look into the distance between the recovered rank induced by $\bx$ and the ground truth rank induced by $\bar{\bx}$. We will first define metrics to measure the distance between permutations: let $\bpi_1$ and $\bpi_2$ be two arbitrary $n-$dimensional permutations, and then the displacement at index $i$ is defined by
\begin{align}
	\rho_i(\bpi_1,\bpi_2) := \frac{1}{n-1} \lp \sum_{j:\pi_1(j)>\pi_1(i)} \mathbf{1}_{\{\pi_2(j) <\pi_2(i)\}} + \sum_{j:\pi_1(j)<\pi_1(i)}\mathbf{1}_{\{\pi_2(j) >\pi_2(i)\}} \rp.
\end{align}
The displacement $\rho_i(\bpi_1,\bpi_2)$ counts the number of element pairs $(i,j)$ whose order under $\bpi_1$ is not preserved under $\bpi_2$ for each fixed $1\le i \le n$. The normalization factor $1/(n-1)$ makes $0\le \rho_i(\bpi_1,\bpi_2) \le 1$. Then we define the maximum displacement error $\rho_{\infty}(\bpi_1,\bpi_2)$ and the average displacement error $\bar{\rho}(\bpi_1,\bpi_2)$ as below
\begin{align}
\text{maximum displacement error:}\quad & \rho_{\infty}(\bpi_1,\bpi_2):= \max_{1\le i \le n} \rho_i(\bpi_1,\bpi_2), \label{eq:rhoinf}\\
\text{average displacement error:}\quad & \bar{\rho}(\bpi_1,\bpi_2):= \frac{1}{n}\sum_{i=1}^n \rho_i(\bpi_1,\bpi_2). \label{eq:rhobar}
\end{align}
The maximum displacement error $\rho_{\infty}(\bpi_1,\bpi_2)$ computes the maximum number of order violations over each entry. 
It will be used as a worst-case performance analysis of the algorithms and $\bar{\rho}(\bpi_1,\bpi_2)$ quantifies the average performance across all items.

For the ranking purpose, we assume that the entries of $r_i$'s are distinct, that is $r_i \neq r_j$ for any $1\le i< j\le n$. The following corollaries present how the $\ell_{\infty}$-norm perturbation bounds guarantee the maximum displacement error bounds $\rho_{\infty}(\bpi, \widehat{\bpi})$ where $\bpi$ is the rank induced by $\bar{\bx}$ and $\widehat{\bpi}$ is computed from $\bx$ for Algorithm~\ref{algo:HER} (from $\BD\bx$ for Algorithm~\ref{algo:NHER}). The proof follows exactly from Theorem 7 in~\cite{DCT21}, and we do not repeat it here.

\begin{corollary}[{\bf maximum displacement bound for Algorithm \ref{algo:HER}}]
    \label{cor:unnorm_disp}
    Conditioned on~\eqref{eq:HER_error2} in Theorem \ref{thm:mainHER}, the maximum displacement error is upper bounded by
    \[
     \rho_{\infty}(\bpi,\widehat{\bpi})\lesssim \frac{\norm{\br - \alpha \bone_n}}{n(\min_{i\neq j} |r_i-r_j|)}\SNR^{-1}\frac{\norm{\bar{\bx}}_{\infty}}{\|\bar{\bx}\|}.
    \]
\end{corollary}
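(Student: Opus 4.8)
The plan is to convert the $\ell_\infty$ control on the recovered score vector into a bound on the number of order violations for each fixed index $i$. Fix $1 \le i \le n$ and recall that $\bpi$ is the ranking induced by $\bar{\bx} \propto \br - \alpha\bone_n$ while $\widehat{\bpi}$ is induced by $\bx$. A pair $(i,j)$ contributes to $\rho_i(\bpi,\widehat{\bpi})$ precisely when the relative order of the $i$-th and $j$-th entries is flipped between $\bar{\bx}$ and $\bx$. Since flipping the order of two entries requires the gap $|\bar{x}_i - \bar{x}_j|$ to be smaller than the combined perturbation at indices $i$ and $j$, I would first argue that if $|\bar{x}_i - \bar{x}_j| > 2\|\bx - \hat{s}\bar{\bx}\|_\infty$ (after the optimal sign alignment $\hat{s}$), then the order of $i$ and $j$ is preserved. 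Thus any violating $j$ must satisfy $|\bar{x}_i - \bar{x}_j| \le 2\|\bx - \hat{s}\bar{\bx}\|_\infty$.

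The second step is to count how many indices $j$ can fall within this window. Because $\bar{\bx} \propto \br - \alpha\bone_n$, the entries of $\bar{\bx}$ are an affine rescaling of the true scores $r_j$, and the assumption that the $r_i$ are distinct gives a minimum separation $\min_{i\ne j}|r_i - r_j|$ between distinct score levels. After normalizing $\bar{\bx}$ to unit $\ell_2$-norm (as in the definition of $R(\cdot,\cdot)$), the rescaling factor between score gaps and $\bar{\bx}$-gaps is $\|\bar{\bx}\|^{-1}\cdot(\text{proportionality constant})$, which up to constants is $1/\|\br-\alpha\bone_n\|$. Consequently the width of the window in the original score scale is controlled by $\|\br - \alpha\bone_n\| \cdot R(\bx,\bar{\bx})$, and dividing by the minimal gap $\min_{i\ne j}|r_i - r_j|$ bounds the number of violating indices. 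The normalization factor $1/(n-1)$ in the definition of $\rho_i$ then produces the $n^{-1}$ appearing in the stated bound.

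Assembling these pieces, the number of violations at index $i$ is at most, up to an absolute constant,
\[
\frac{1}{n-1}\cdot \frac{\|\br-\alpha\bone_n\| \cdot R(\bx,\bar{\bx})}{\min_{i\ne j}|r_i-r_j|}\cdot \frac{\|\bar{\bx}\|_\infty}{\|\bar{\bx}\|},
\]
and substituting the bound $R(\bx,\bar{\bx}) \lesssim \SNR^{-1}$ from \eqref{eq:HER_error2} yields the claimed estimate on $\rho_\infty(\bpi,\widehat{\bpi})$ after taking the maximum over $i$. Since the argument is uniform in $i$, the maximum displacement inherits the same bound.

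The main obstacle is the bookkeeping that relates the $\ell_\infty$-error in the definition of $R(\bx,\bar{\bx})$ (which is measured relative to the normalized vector $\bar{\bx}/\|\bar{\bx}\|$ and rescaled by $\|\bar{\bx}\|_\infty/\|\bar{\bx}\|$) to an additive gap threshold in the original score scale. Care is needed to track the optimal sign $\hat{s}$ and the $\ell_2$-normalization so that the factor $\|\bar{\bx}\|_\infty/\|\bar{\bx}\|$ emerges correctly; this is precisely the factor carried through in Theorem 7 of~\cite{DCT21}, which is why the corollary reduces to that argument verbatim. The counting step itself is elementary once the window width is pinned down, so the only genuine subtlety is the scale conversion rather than any probabilistic estimate.
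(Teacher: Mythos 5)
Your proposal is correct and follows essentially the same route as the paper: the paper simply invokes Theorem 7 of~\cite{DCT21}, whose content is exactly the gap-counting argument you reconstruct — a flipped pair $(i,j)$ forces $|\bar{x}_i-\bar{x}_j|\le 2\|\bx-\hat{s}\bar{\bx}\|_\infty$, the affine relation $\bar{\bx}\propto \br-\alpha\bone_n$ converts this window to the score scale, and the minimum separation $\min_{i\neq j}|r_i-r_j|$ bounds the number of indices in the window, uniformly in $i$. Your bookkeeping of the normalization (so that the factor $\|\bar{\bx}\|_\infty/\|\bar{\bx}\|$ appears) and the substitution $R(\bx,\bar{\bx})\lesssim \SNR^{-1}$ from~\eqref{eq:HER_error2} are exactly what is needed, so there is no gap.
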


\begin{corollary}[{\bf maximum displacement bound for Algorithm \ref{algo:NHER}}]
    \label{cor:norm_disp}
    Conditioned on~\eqref{eq:NHER_error2} in Theorem \ref{thm:mainNHER}, the maximum displacement error is upper bounded by
    \[
     \rho_{\infty}(\bpi,\widehat{\bpi})\lesssim \frac{\norm{\br - \gamma \bone_n}}{n(\min_{i\neq j} |r_i-r_j|)}\lambda^{-8}\SNR^{-1}\frac{\norm{\bar{\BD}\bar{\bx}}_{\infty}}{\norm{\bar{\BD}\bar{\bx}}}.
    \]
\end{corollary}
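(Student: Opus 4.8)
The plan is to treat the statement as a deterministic consequence of the relative $\ell_{\infty}$ bound \eqref{eq:NHER_error2}: on the event where Theorem~\ref{thm:mainNHER} holds, I would convert the closeness of $\BD\bx$ to $\bar{\BD}\bar{\bx}$ into a count of order violations, mirroring the unnormalized Corollary~\ref{cor:unnorm_disp} (and Theorem~7 of~\cite{DCT21}) with the degree-weighted vectors $\BD\bx,\ \bar{\BD}\bar{\bx}$ replacing $\bx,\ \bar{\bx}$. First I would unfold the definition \eqref{def:S} of $R$: \eqref{eq:NHER_error2} yields a sign $s\in\{\pm1\}$ and the positive scalar $c=\norm{\bar{\BD}\bar{\bx}}/\norm{\BD\bx}$ with
\[
\abs{\,c(\BD\bx)_i-s(\bar{\BD}\bar{\bx})_i\,}\le\delta:=R(\BD\bx,\bar{\BD}\bar{\bx})\,\norm{\bar{\BD}\bar{\bx}}_{\infty}\lesssim\lambda^{-8}\SNR^{-1}\norm{\bar{\BD}\bar{\bx}}_{\infty}
\]
for every $1\le i\le n$. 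Set $\bw:=c\,\BD\bx$ and $\bar{\bv}:=s\,\bar{\BD}\bar{\bx}$, with entries $w_i,\bar v_i$. The global sign is the modeling orientation fixed by the phase step \eqref{eqn:normalized angle}, so we may take $s$ to be the minimizing sign; the positive rescaling by $c$ does not change the induced ranking, hence $\widehat{\bpi}$ is the ranking of $\bw$ and $\bpi$ that of $\bar{\bv}$ (equivalently of $\br-\gamma\bone_n$).

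Next I would isolate a deterministic ``a reversal forces a near-collision'' estimate. If the order of a pair $(i,j)$ is flipped, say $\bar v_i>\bar v_j$ while $w_i\le w_j$, then
\[
\bar v_i-\bar v_j=(\bar v_i-w_i)+(w_i-w_j)+(w_j-\bar v_j)\le\delta+0+\delta=2\delta ,
\]
so every reversed pair obeys $\abs{\bar v_i-\bar v_j}\le2\delta$. Because $\bar{\BD}\bar{\bx}\propto\br-\gamma\bone_n$, I can write $\bar v_i=\kappa\,(r_i-\gamma)$ with $\kappa=\norm{\bar{\BD}\bar{\bx}}/\norm{\br-\gamma\bone_n}>0$, so $\abs{\bar v_i-\bar v_j}=\kappa\,\abs{r_i-r_j}$ and a reversal requires $\abs{r_i-r_j}\le2\delta/\kappa=:L$. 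For a fixed $i$, the number of indices $j$ with $r_j$ in the length-$2L$ window $[r_i-L,r_i+L]$ is $\lesssim L/\min_{k\ne l}\abs{r_k-r_l}$, since distinct scores are separated by at least the minimum gap (and when $L$ is below that gap there is no reversal at all); thus $i$ has $\lesssim L/\min_{k\ne l}\abs{r_k-r_l}$ reversed partners.

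Dividing by the factor $n-1$ in the definition of $\rho_i$ and maximizing over $i$ would then give
\[
\rho_{\infty}(\bpi,\widehat{\bpi})\lesssim\frac{1}{n}\cdot\frac{L}{\min_{k\ne l}\abs{r_k-r_l}},\qquad L=\frac{2\delta}{\kappa}\lesssim\lambda^{-8}\SNR^{-1}\,\norm{\br-\gamma\bone_n}\,\frac{\norm{\bar{\BD}\bar{\bx}}_{\infty}}{\norm{\bar{\BD}\bar{\bx}}},
\]
where the displayed value of $L$ comes from substituting $\delta$ and $\kappa$; this is exactly the claimed bound, with the factors $\lambda^{-8}$ and $\SNR^{-1}$ inherited verbatim from \eqref{eq:NHER_error2} and no fresh probabilistic estimate needed. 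The genuinely delicate points are minor: first, the bookkeeping of the scale $c$ and the global sign $s$ so that the two rankings are compared in a consistent orientation; and second, making the window-counting constant uniform across the regimes $L\ge\min_{k\ne l}\abs{r_k-r_l}$ and $L<\min_{k\ne l}\abs{r_k-r_l}$, which the vanishing-count observation in the second paragraph handles. Since the whole argument is deterministic given \eqref{eq:NHER_error2}, I expect no real obstacle beyond this bookkeeping.
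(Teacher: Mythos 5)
Your proposal is correct and follows essentially the same route as the paper: the paper proves this corollary by invoking Theorem~7 of~\cite{DCT21}, whose argument is precisely your deterministic ``reversal forces $|r_i-r_j|\lesssim\delta/\kappa$'' step combined with window-counting against the minimum gap, here applied to the degree-weighted vectors $\BD\bx$ and $\bar{\BD}\bar{\bx}$ with the factor $\lambda^{-8}\SNR^{-1}$ inherited from~\eqref{eq:NHER_error2}. Your handling of the scale $c$, the sign $s$, and the small/large $L$ regimes is sound, so you have in effect written out the proof the paper delegates to the citation.
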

In particular, when we consider $r_k = k$, i.e., $\bpi = {\rm id}$ and let $\widehat{\bpi}$ be rank induced by $\bx$, then it holds
$\rho_{\infty}({\rm id},\widehat{\bpi}) \lesssim \SNR^{-1} $
where $\|\br - \alpha\bone_n\| = O(n^{3/2})$ and $\|\bar{\bx}\|_{\infty}/\|\bar{\bx}\| = O(1/\sqrt{n})$ for Algorithm~\ref{algo:HER} and $ \rho_{\infty}({\rm id},\widehat{\bpi})\lesssim \lambda^{-8} \SNR^{-1}$
where $\|\br - \gamma\bone_n\| = O(n^{3/2})$ and $\|\bar{\BD}\bar{\bx}\|_{\infty}/\| \bar{\BD}\bar{\bx}\| = O(1/\sqrt{n})$ for Algorithm~\ref{algo:NHER}.

\subsection{Main technique: the leave-one-out technique}\label{ss:tech}
We will introduce the main techniques and provide a sketch of proof of Theorem~\ref{thm:mainHER} and~\ref{thm:mainNHER}. The main idea follows from approximating the top eigenvector via one-step power approximation in~\cite{AFWZ20}. The approximation error can be well estimated via the leave-one-out technique. We will briefly discuss how this technique is implemented in each scenario. The detailed proofs of Theorem~\ref{thm:mainHER} and~\ref{thm:mainNHER} are deferred to Section~\ref{s:A} and~\ref{s:B} respectively.

\paragraph{Proof sketch for Theorem~\ref{thm:mainHER}}
Let $\beta = \lag \bar{\bphi}_1,\bphi_1\rag/|\lag \bar{\bphi}_1,\bphi_1\rag|.$ To bound $\norm{\bphi_1 - \beta\bar{\bphi}_1}_{\infty}$, the main idea is to find a surrogate $\widetilde{\bphi}_1$ that is close to $\bar{\bphi}_1$, and moreover $\norm{\bphi_1 - \beta\widetilde{\bphi}_1}_{\infty}$ is simple to estimate. For the unnormalized algorithm, we choose
\begin{equation}\label{def:tphi1}
\widetilde{\bphi}_1 = \frac{\mi\BH\bar{\bphi}}{\sigma}.
\end{equation}
Note that it is relatively simple to control the $\ell_{\infty}$-error between $\widetilde{\bphi}_1$ and $\bar{\bphi}_1$. Therefore, it suffices to show that $\|\bphi_1 - \beta\widetilde{\bphi}_1\|_{\infty}$ is rather small. 
The $\ell_\infty$ error between $\bphi_1$ and $\beta\widetilde{\bphi}_1$ satisfies
\begin{equation}
\begin{aligned}
 \norm{\bphi_1 - \beta\widetilde{\bphi}_1}_\infty  = \frac{1}{\sigma} \| \BH (\bphi_1 - \beta \bar{\bphi}_1) \|_{\infty} \le \underbrace{\frac{1}{\sigma} \norm{ \bar{\BH}(\bphi_1-\beta\bar{\bphi}_1)}_\infty}_{E_1}
+\underbrace{\frac{1}{\sigma} \norm{ \BDelta(\bphi_1-\beta\bar{\bphi}_1)}_\infty}_{E_2} 
\end{aligned}
\end{equation}
where $\mi\BH\bphi_1 = \sigma\bphi_1$ holds since $\bphi_1$ is the top eigenvector of $\mi\BH$ with eigenvalue $\sigma.$ The estimation of $E_1$ is straighforward while controlling $E_2 = \sigma^{-1} \norm{\mi \BDelta(\bphi_1-\beta\bar{\bphi}_1)}_\infty $ is the most technical  part because $\bphi_1-\beta\bar{\bphi}_1$ and each row of $\BDelta$ are statistically dependent. As a result, we cannot directly apply concentration inequalities to $E_2$ to have a tight bound. The remedy is to use the ``leave-one-out" technique to avoid the statistical dependence between $\BDelta$ and $\bphi_1-\beta\bar{\bphi}_1$. 

We introduce the following sequence of auxiliary matrices: for $1\leq k\leq n$, 
\begin{equation}\label{eq:loo}
\BH^{(k)} = \BH + \BDelta^{(k)} \qquad 
\Delta^{(k)}_{ij} 
= \begin{cases}
\Delta_{ij}, \quad i\neq k \text{ and }j\neq k, \\
0, \quad {\rm else}.
\end{cases}
\end{equation}
In other words, $\BDelta^{(k)}$ equals $\BDelta$ except the  $k$-th row and column. Let $\bphi^{(k)}_1$ be the top eigenvector of $\BH^{(k)}$ with eigenvalue $\sigma^{(k)}$. We will use $\bphi^{(k)}_1$ in the place of $\bphi_1$ and approximate $E_2$ by 
\begin{equation}\label{def:Tterm}
\begin{aligned}
E_2 &\leq \frac{1}{\sigma} \lvert \BDelta^\top_k(\bphi_1-\beta\bar{\bphi}_1)\rvert 
 = \frac{1}{\sigma} \lvert \BDelta^\top_k(\bphi_1-\beta^{(k)}\bphi^{(k)}_1 +\beta^{(k)}\bphi^{(k)}_1  - \beta\bar{\bphi}_1)\rvert \\
&  \le \underbrace{ \frac{1}{\sigma} \lvert \BDelta^\top_k(\bphi_1-\beta^{(k)}\bphi^{(k)}_1)\rvert}_{T_1}  
+ \underbrace{  \frac{1}{\sigma}\lvert \BDelta^\top_k(\beta^{(k)}\bphi^{(k)}_1  - \beta\bar{\bphi}_1)\rvert}_{T_2}
\end{aligned}
\end{equation}
where
\begin{equation}\label{def:beta_k}
\begin{aligned}
\beta^{(k)} &:= \argmin_{s \in \mathbb{C},|s|=1} \norm{\bphi_1-s\bphi^{(k)}_1 } = \frac{\lag \bphi_1^{(k)}, \bphi_1\rag}{|\lag \bphi_1^{(k)}, \bphi_1\rag|}.
\end{aligned}
\end{equation}

For $T_1$, Cauchy-Schwarz inequality implies that
\[
T_1\leq \sigma^{-1}\|\BDelta_k\| \|\bphi_1 - \beta^{(k)}\bphi_1^{(k)}\|.
\]
We will later show that $\norm{(\BDelta - \BDelta^{(k)})\bphi_1^{(k)}}$ is small, which controls $\|\bphi_1 - \beta^{(k)}\bphi_1^{(k)}\|$ by using the Davis-Kahan theorem.

For $T_2$, we take advantage of the independence among $\bar{\bphi}_1$, $\bphi^{(k)}_1 $ and $\BDelta_k$. By this statistical independency, using the matrix Bernstein inequality (Theorem \ref{thm:bernstein}) provides a sharp bound of $T_2$. The detailed estimations of $T_1$ and $T_2$ are deferred to Lemma~\ref{lem:Tterm}.

\paragraph{Proof sketch for Theorem~\ref{thm:mainNHER}}
The proof for the normalized algorithm is similar so we will only point out the differences in this section. The details are provided in Section~\ref{s:B}. We choose 
\[
\widetilde{\bpsi}_1 = \frac{\mi\BD^{-1}\BH\bar{\bpsi}_1}{\xi}
\] 
as an approximation to $\bpsi_1$. The key is to control the $\ell_{\infty}$ error between $\bpsi_1$ and $\widetilde{\bpsi}_1$.
Let $\beta = \lag \bar{\bpsi}_1,\bpsi\rag/| \lag \bar{\bpsi}_1,\bpsi\rag|,$ and then $\|\bpsi_1 - \beta\widetilde{\bpsi}_1\|_{\infty}$ satisfies
\begin{equation}\label{eq:psi_dec1}
\begin{aligned}
\|\bpsi_1 - \beta\widetilde{\bpsi}_1\|_{\infty} \le \frac{1}{\xi} \norm{ \mi \BD^{-1}\BH(\bpsi_1 - \beta\bar{\bpsi}_1) }_\infty 
\le \underbrace{\frac{1}{\xi} \norm{\BD^{-1}\bar{\BH}(\bpsi_1-\beta\bar{\bpsi}_1)}_\infty}_{E_1} + \underbrace{\frac{1}{\xi} \norm{ \BD^{-1}\BDelta(\bphi_1-\beta\bar{\bphi}_1)}_\infty}_{E_2} 
\end{aligned}
\end{equation} 
where $\BH = \bar{\BH} + \BDelta$ and $\BD^{-1}\BH\bpsi_1 = \xi\bpsi_1.$ We mainly focus on $E_2$:
\begin{align*}
E_2 & = \frac{1}{\xi} \norm{\BD^{-1}\BDelta(\bpsi_1-\beta\bar{\bpsi}_1)}_\infty \leq  \frac{1}{\xi d_{\min}} \max_{1\leq k\leq n} |\lag \BDelta_k, \bpsi_1 - \beta\bar{\bpsi}_1\rag|
\end{align*}
where $\BDelta_k$ is the $k$-th column of $\BDelta$ and $d_{\min} = \min_{1\le k \le n} D_{kk}$. Due to the statistical dependence between $\BDelta_k$ and $\bpsi_1 - \beta\bar{\bpsi}_1$, we introduce the same auxiliary sequence as \eqref{eq:loo}. Let $\bpsi^{(k)}_1$ be the top eigenvector of $\mi \bar{\BD}^{-1} \BH^{(k)}$ with eigenvalue $\xi^{(k)}$, i.e., $\mi \bar{\BD}^{-1} \BH^{(k)}\bpsi^{(k)}_1 = \xi^{(k)}\bpsi^{(k)}_1$. Then we will decompose $\xi^{-1}d_{\min}^{-1} | \BDelta_k^{\top}(\bpsi_1-\beta\bar{\bpsi}_1)|$ into two terms and find an upper bound of each one, i.e.,
\begin{equation}\label{def:Tterm_norm}
\begin{aligned}
\frac{1}{\xi d_{\min}} | \BDelta_k^{\top}(\bpsi_1-\beta\bar{\bpsi}_1)| 
& \lesssim \underbrace{\frac{1}{\xi d_{\min}}  | \BDelta_k^{\top}(\bpsi_1-\beta^{(k)}\bpsi_1^{(k)})|}_{T_1}  
+ \underbrace{\frac{1}{\xi d_{\min}}  | \BDelta_k^{\top}(\beta^{(k)}\bpsi_1^{(k)}-\beta\bar{\bpsi}_1)| }_{T_2}
\end{aligned}
\end{equation}
where $\beta^{(k)} := \argmin_{s \in \mathbb{C},|s|=1} \|\bpsi_1-s\bpsi^{(k)}_1\|$. The estimation of $T_1$ follows from Cauchy-Schwarz inequality and a variant of Davis-Kahan theorem (Theorem~\ref{thm:dk}); and $T_2$ uses the independence among $\bpsi_1^{(k)}$, $\bar{\bpsi}_1$, and $\BDelta_k$. The detailed estimation of $T_1$ and $T_2$ is provided in Lemma~\ref{lem:Tterm_norm}.

\section{Numerics}\label{s:numerics}
\subsection{Relative $\ell_{\infty}$ error/maximum displacement error v.s. ${\rm SNR}$}
\label{ss:l_inf}

We start with providing numerical evidence on the relative $\ell_{\infty}$-error v.s. $\SNR$ introduced in Theorem~\ref{thm:mainHER} and~\ref{thm:mainNHER}. 
We choose uniform $\br$ ($r_k = k, k=1,\dots,n$) as the ground truth. For each triplet $(\eta,n,p)$, we sample the data matrix $\BH$ from the ERO model, compute the top eigenvector $\bphi_1$ of $\BH$, and then calculate the average relative $\ell_{\infty}$-error $R(\bphi_1,\bar{\bphi}_1)$ in~\eqref{def:S} over 25 random instances. Figure~\ref{fig:l_inf_error} reports: (a) the average relative error v.s. varying SNR for different $200\leq n\leq1000$; (b) the average relative error v.s. $(p,\eta)$ for fixed $n=1000.$ Here for the uniform $\br$, we only present the results based on Algorithm~\ref{algo:HER} as it is highly similar to those from Algorithm~\ref{algo:NHER}. 

From Figure~\ref{fig:l_inf_error}, we can see that if the SNR is greater than 0.5, the relative error is roughly below 0.3; moreover, as the SNR increases, the relative error decreases. In particular, Figure~\ref{fig:l_inf_error}(b) shows the relative error on the red curve ($\SNR =0.5$), the green curve ($\SNR =0.8$) and the blue curve ($\SNR = 1.7$) are approximately $0.8$, $0.5$ and $0.2$ respectively. This confirms the relative error decays at the rate of $\SNR^{-1}$, as shown in Theorem~\ref{thm:mainHER}.

Figure~\ref{fig:l_inf_displace} shows the corresponding maximum displacement error by computing the estimated ranking $\widehat{\bpi}$ from $\bphi_1$ and the true ranking $\bpi = \text{id}$.
Figure~\ref{fig:l_inf_displace}(b) demonstrates that the maximum displacement error on the red curve ($\SNR =0.5$), the green curve ($\SNR =0.8$) and the blue curve ($\SNR = 1.7$) are approximately $0.4$, $0.3$ and $0.15$. This justifies the RHS of the error bound in Corollary~\ref{cor:unnorm_disp}.

\begin{figure}[h!]
\centerline{\includegraphics[width=0.9\columnwidth]{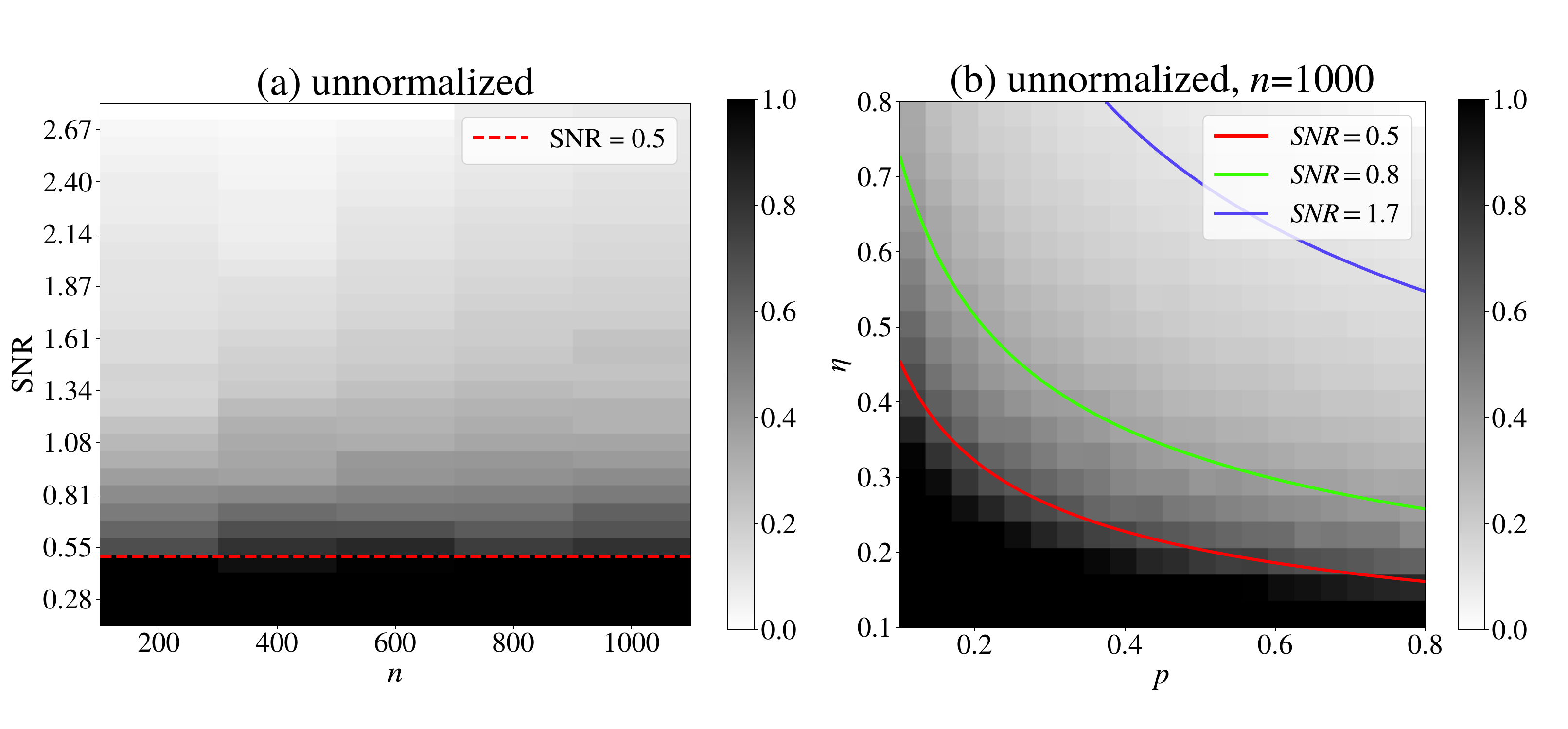}}
\caption{Relative $\ell_{\infty}$-error $R(\bx,\bar{\bx})$ for Algorithm~\ref{algo:HER} with ground-truth $r_k=k$, $k=1,\ldots,n$. Black region: error close to $1$; white region: error close to $0$.}
\label{fig:l_inf_error}
\end{figure}

\begin{figure}[h!]
\centerline{\includegraphics[width=0.9\columnwidth]{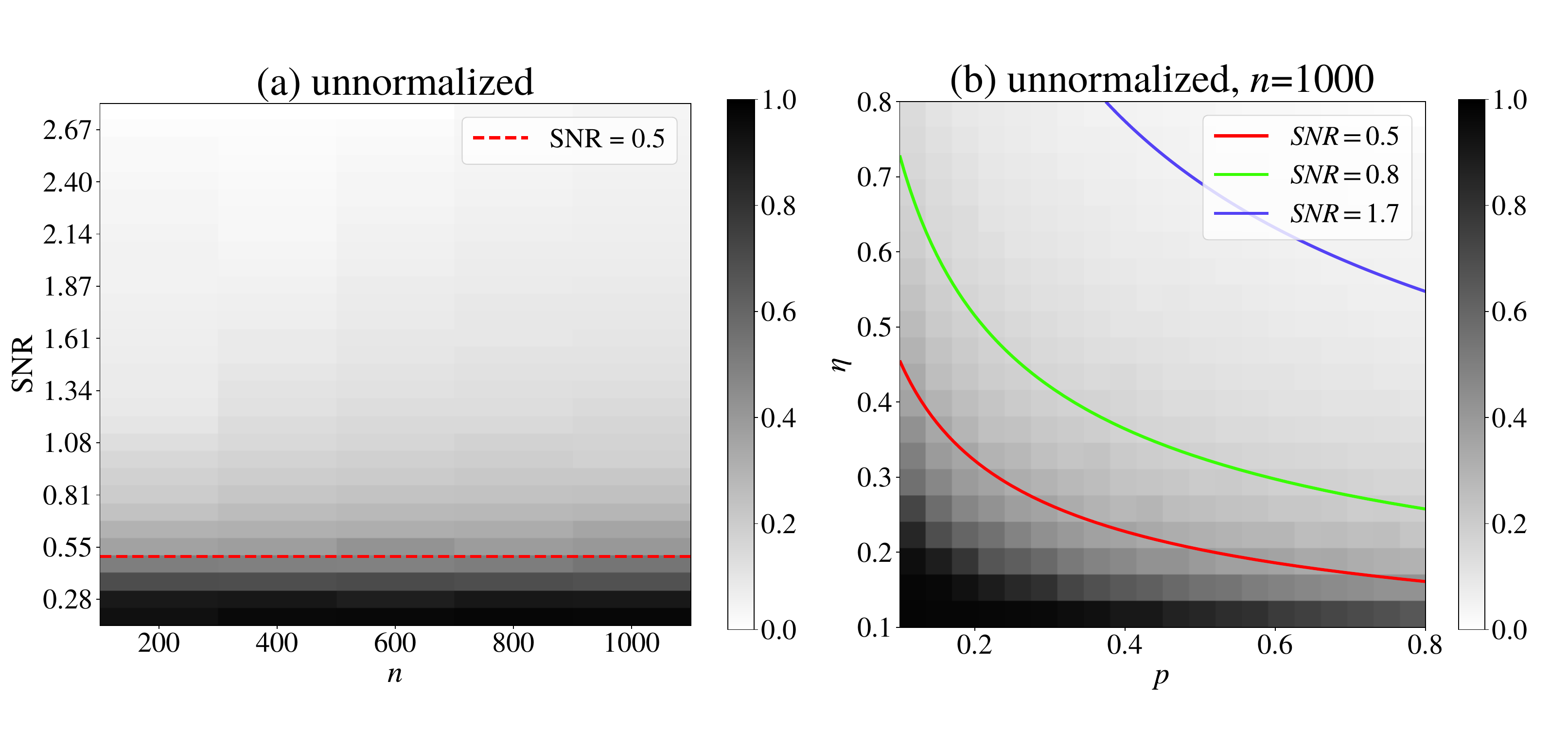}}
\caption{Maximum displacement error $\rho_{\infty}(\bpi,\widehat{\bpi})$ for Algorithm~\ref{algo:HER}  with ground-truth $r_k=k$, $k=1,\ldots,n$. Black region: error close to $1$; white region: error close to $0$.}
\label{fig:l_inf_displace}
\end{figure}

\subsection{Comparison between Algorithm~\ref{algo:HER} and~\ref{algo:NHER}}
\label{ss:algo_comp}

To study the difference between Algorithm~\ref{algo:HER} and~\ref{algo:NHER}, we will choose the sorted Gamma distributed $\br$ (each $r_k$ is sampled from Gamma distribution with parameters $a=b=1$ and $\br$ is sorted so that $\bpi = {\rm id}$) as the ground-truth.
This $\Gamma(1,1)$ distributed $\br$ makes the node degree skewed.
Here we compare two algorithms on $\br$ sampled from Gamma distribution.

 The settings of $\SNR$ and also $(\eta,n,p)$ are the same as those in Section~\ref{ss:l_inf}. We compute the average relative $\ell_{\infty}$-error $R(\bx,\bar{\bx})$ for Algorithm~\ref{algo:HER} and $R(\BD\bx,\bar{\BD}\bar{\bx})$ for Algorithm~\ref{algo:NHER} over 25 instances. 
In Figure~\ref{fig:l_inf_gamma}, we can see the main performance difference between two algorithms occurs when $p<0.2$. In this case, the measurement graph is not highly connected, which can lead to high variance in node degree. This issue can be mitigated by normalizing the data matrix via the degree so that all the items are of similar strengths.
For the corresponding maximum displacement error $\rho_{\infty}(\bpi,\widehat{\bpi})$, Figure~\ref{fig:maxdisplace_gamma} shows both algorithms perform similarly.

\begin{figure}[h!]
\begin{minipage}{0.45\textwidth}
%\subfloat[$R(\bx,\bar{\bx})$, $n=1000$ unnormalized]{
\includegraphics[width=72mm]{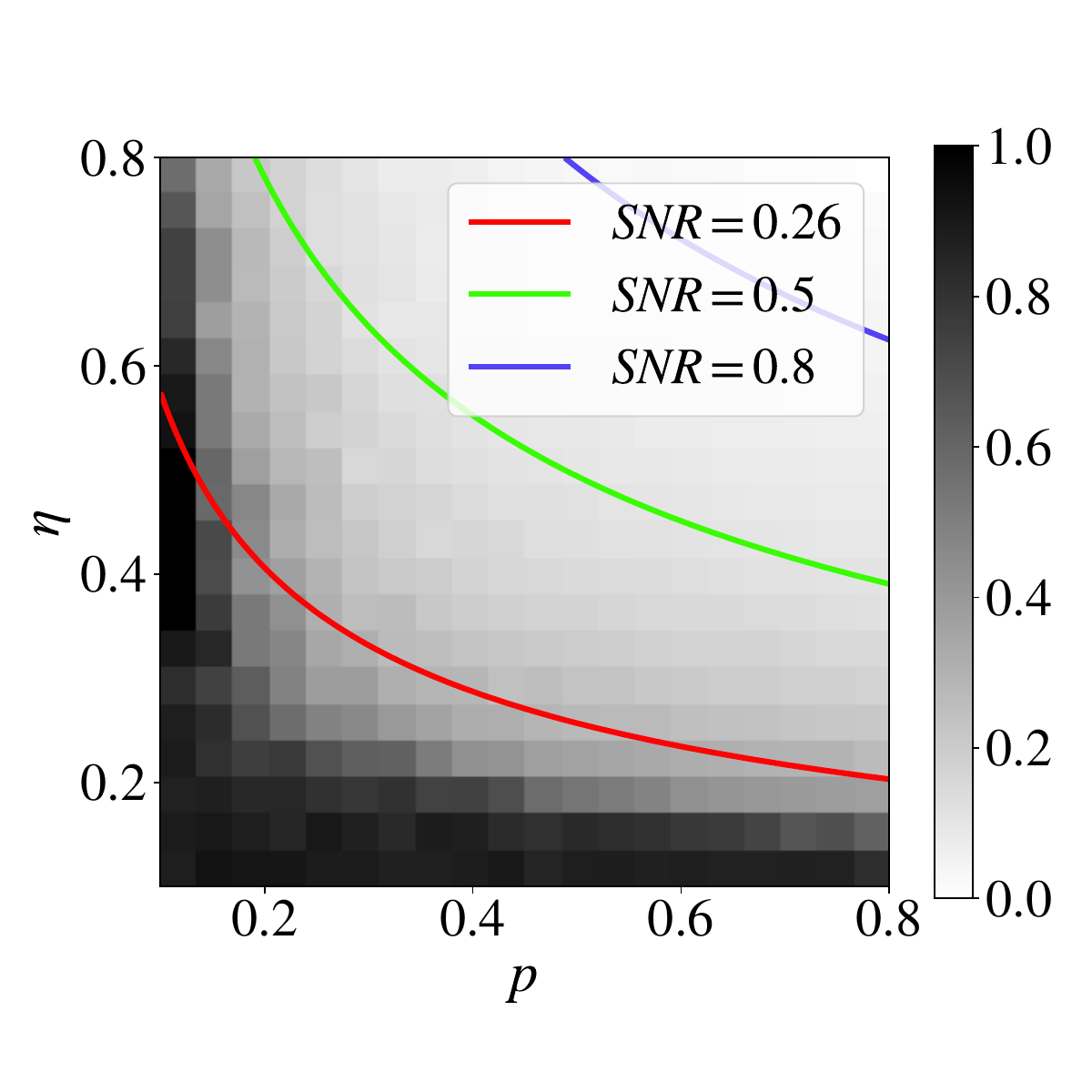}
\end{minipage}
\hfill
%\subfloat[$R(\BD\bx,\bar{\BD}\bar{\bx})$, $n=1000$, normalized]{
\begin{minipage}{0.45\textwidth}
\includegraphics[width=72mm]{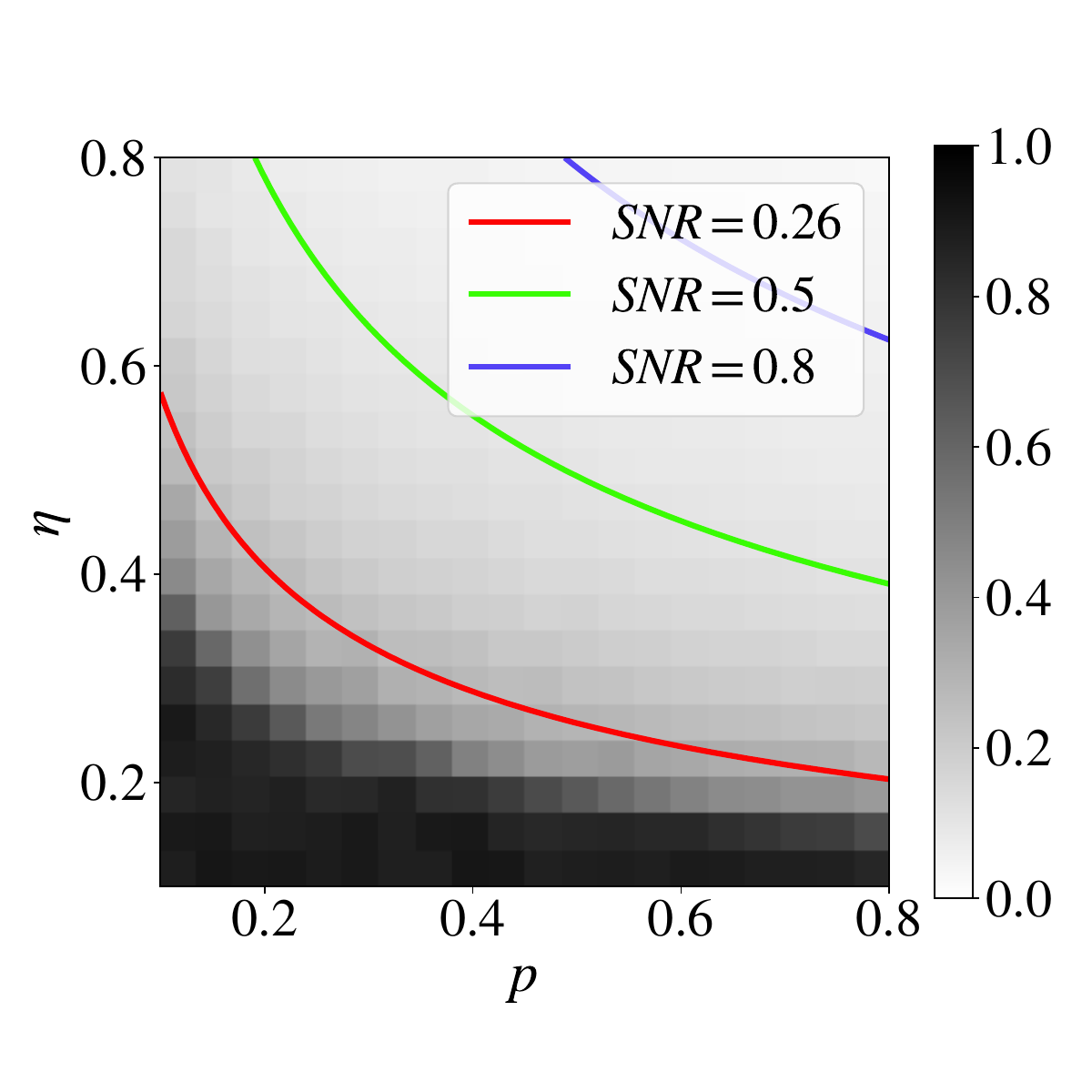}%}
\end{minipage}
\caption{Relative $\ell_{\infty}$-error with $\br$ sampled from Gamma distribution with $n=1000$. Left: $R(\bx,\bar{\bx})$ for Algorithm~\ref{algo:HER}; Right: $R(\BD\bx,\bar{\BD}\bar{\bx})$ for Algorithm~\ref{algo:NHER}}
\label{fig:l_inf_gamma}
\end{figure}

\begin{figure}[H]
\begin{minipage}{0.45\textwidth}
\includegraphics[width=72mm]{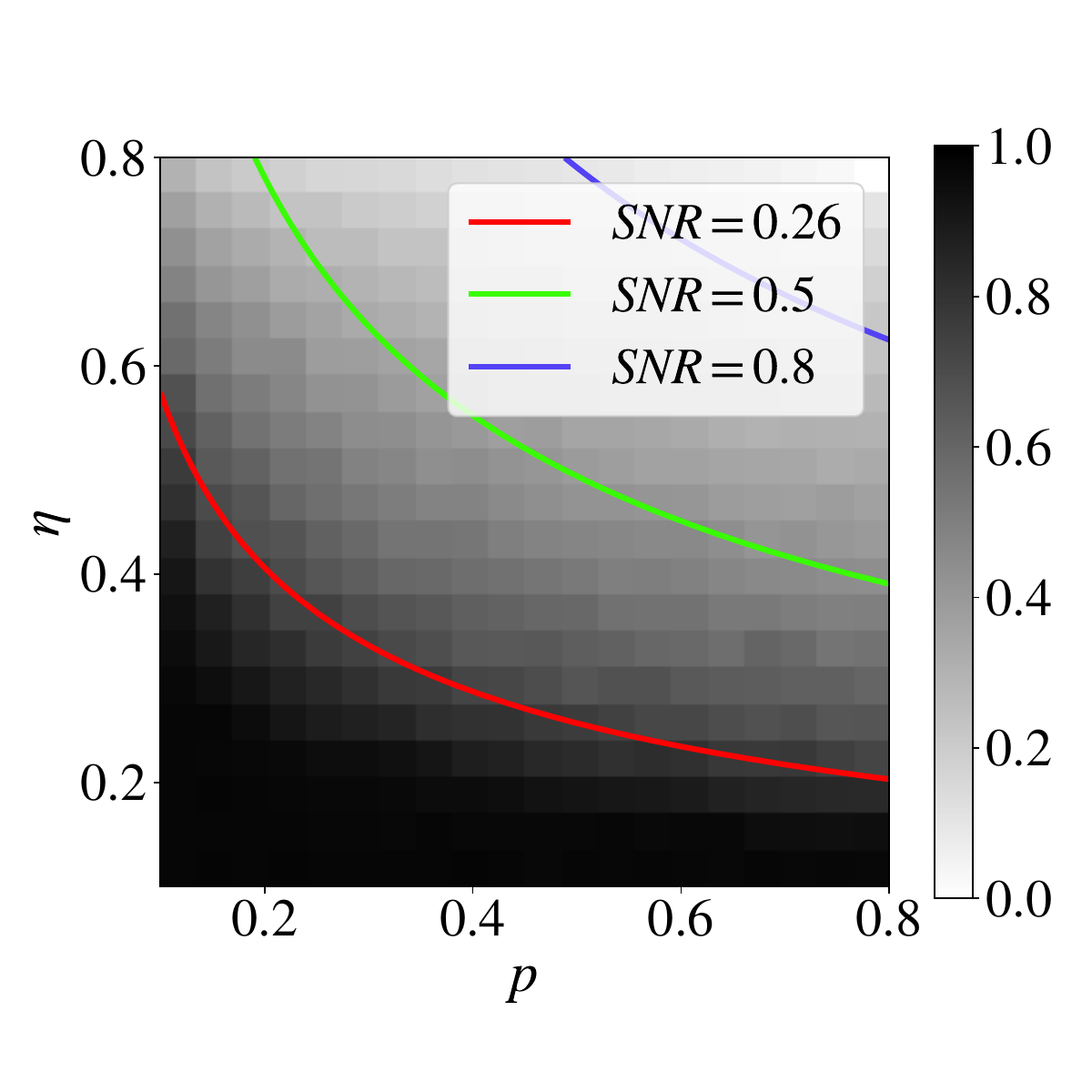}
\end{minipage}
\hfill
\begin{minipage}{0.45\textwidth}
\includegraphics[width=72mm]{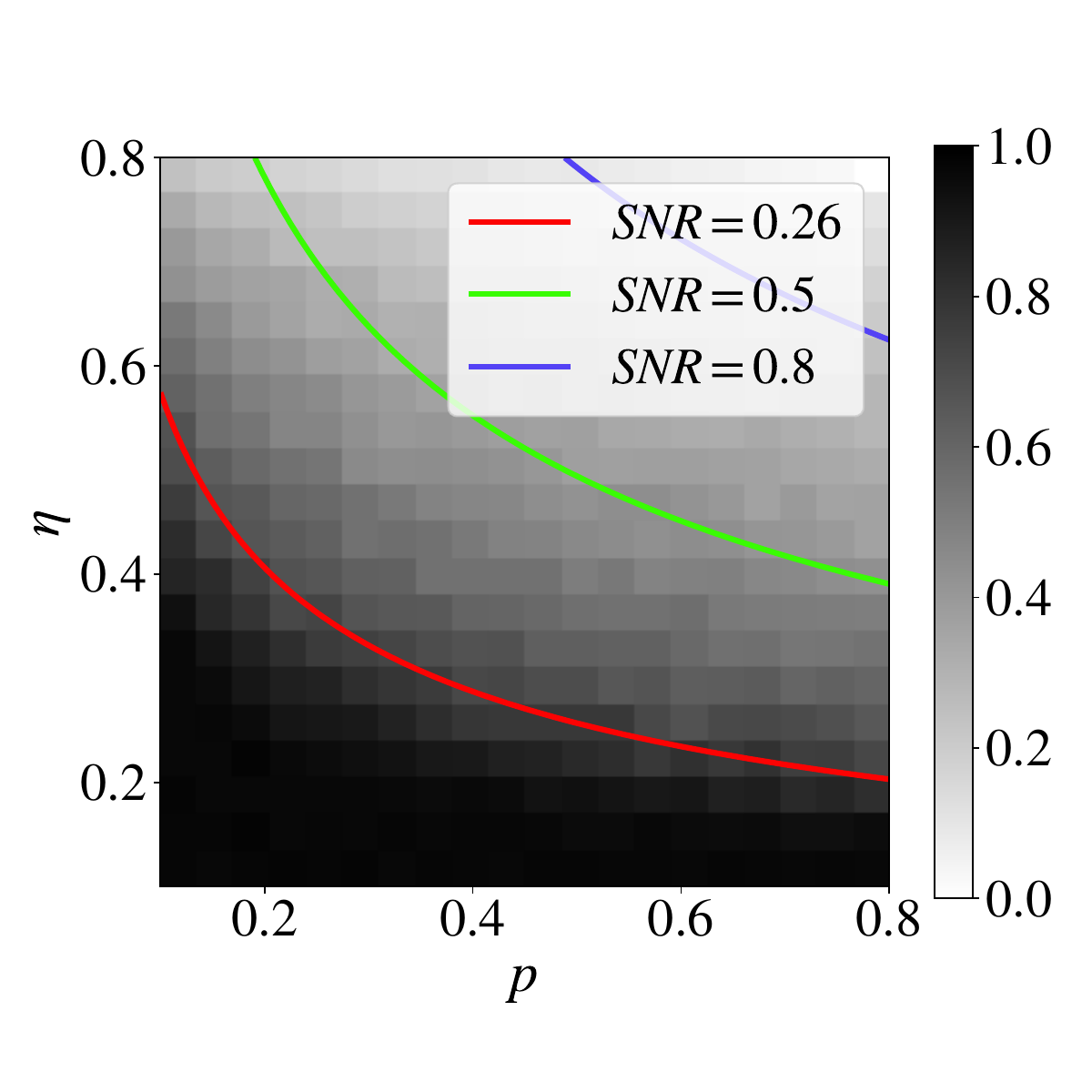}
\end{minipage}
\vfill
\begin{minipage}{0.45\textwidth}
\includegraphics[width=72mm]{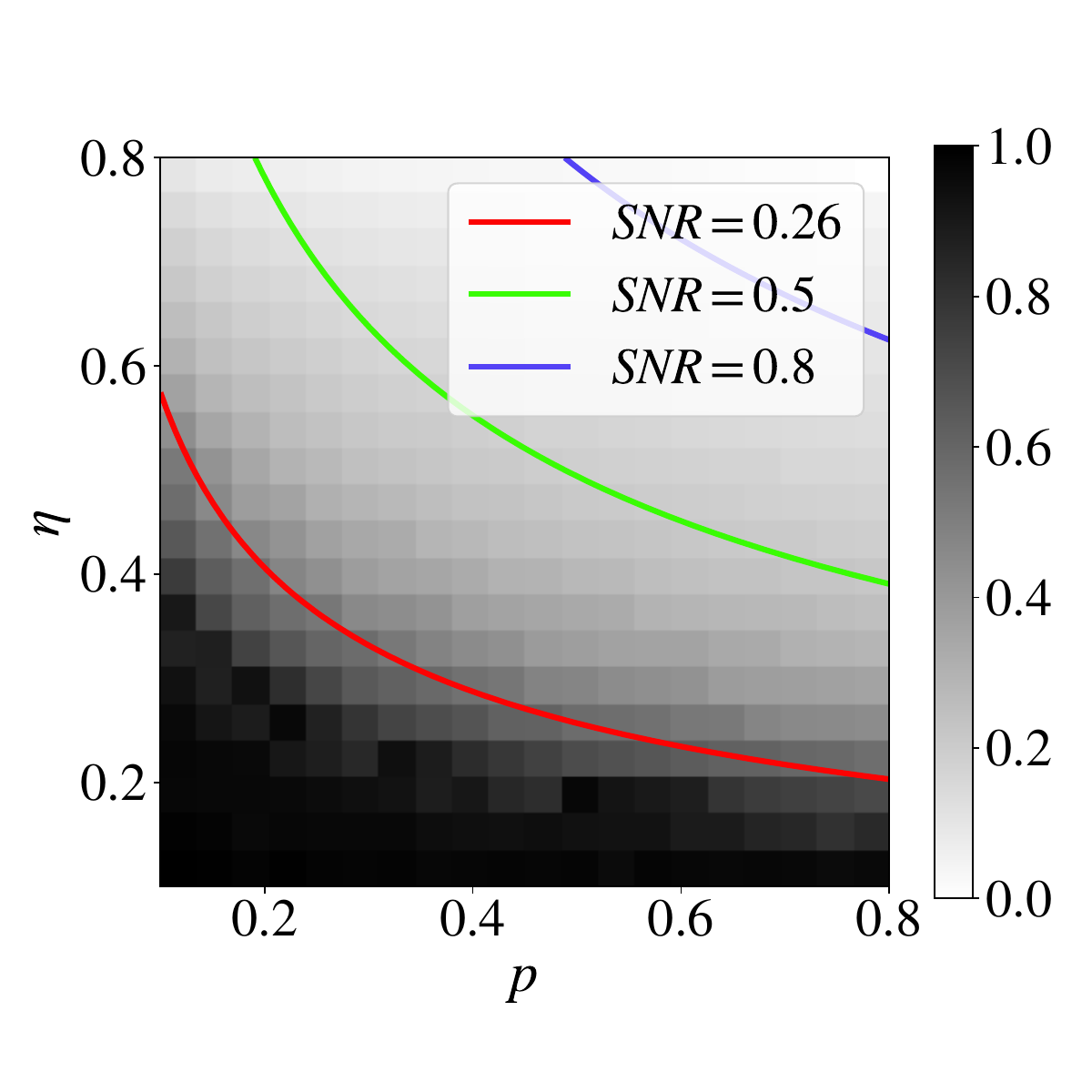}
\end{minipage}
\hfill
\begin{minipage}{0.45\textwidth}
\includegraphics[width=72mm]{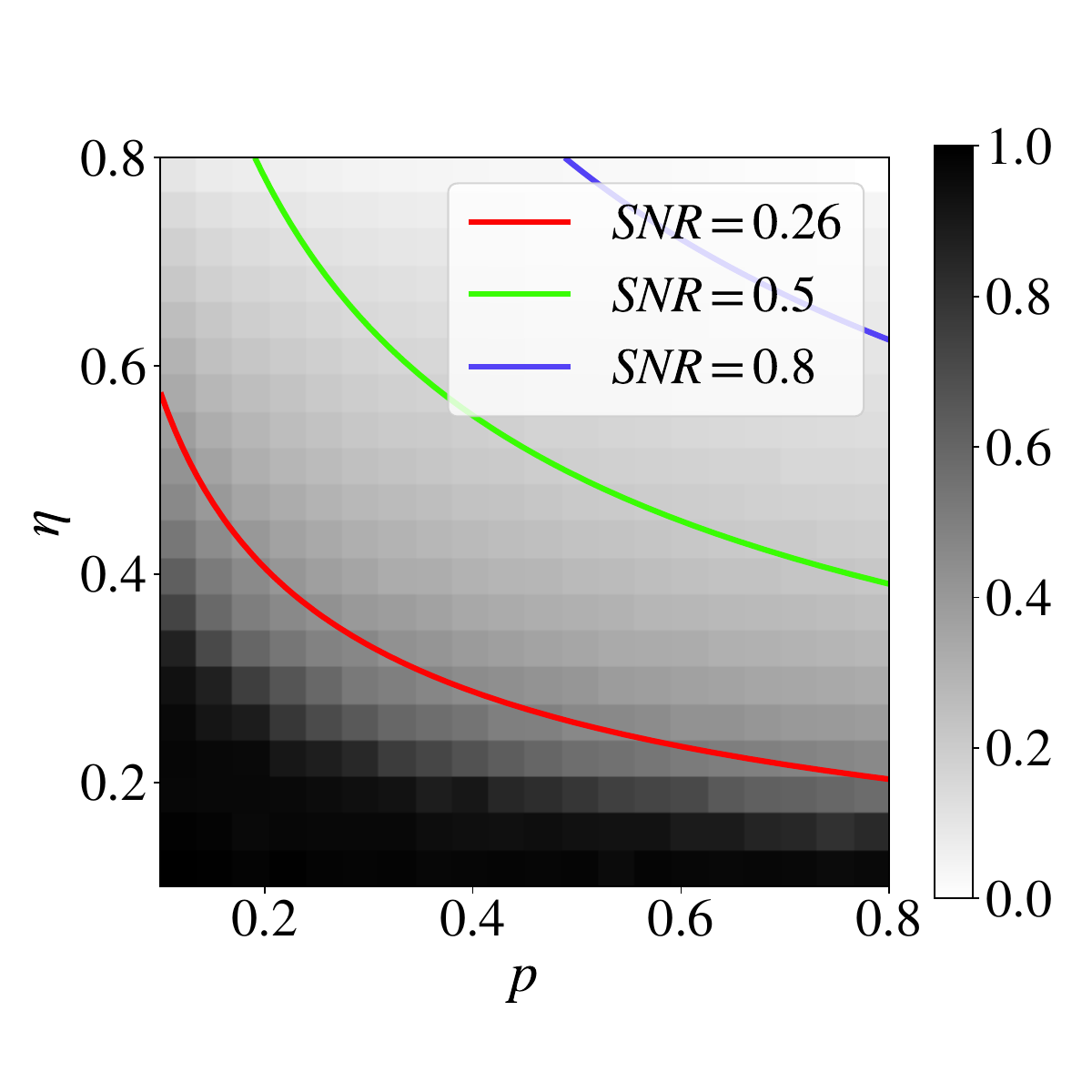}
\end{minipage}
\caption{Maximum displacement error ($\rho_{\infty}(\bpi,\widehat{\bpi})$, top row) and the average displacement error ($\bar{\rho}(\bpi,\widehat{\bpi})$, bottom row) for both algorithms with ground-truth sorted Gamma distributed $\br$. \\ Top left: $\rho_{\infty}(\bpi,\widehat{\bpi})$ for Algorithm~\ref{algo:HER}; top right: $\rho_{\infty}(\bpi,\widehat{\bpi})$ for Algorithm~\ref{algo:NHER}; bottom left: $\bar{\rho}(\bpi,\widehat{\bpi})$ for Algorithm~\ref{algo:HER}; bottom right: $\bar{\rho}(\bpi,\widehat{\bpi})$ for Algorithm~\ref{algo:NHER}}
\label{fig:maxdisplace_gamma}
\end{figure}

One interesting observation is: Figure~\ref{fig:l_inf_error}(b) and~\ref{fig:l_inf_gamma}(a) show the relative $\ell_{\infty}$-error $R(\bx,\bar{\bx})$ looks similar
under the uniform and Gamma distributed $\br$. However, the corresponding $\rho_{\infty}(\bpi, \widehat{\bpi})$ behaves very differently  under the same SNR: $\rho_{\infty}(\bpi, \widehat{\bpi})$ is much larger for the skewed distributed $\br$, as shown in Figure~\ref{fig:l_inf_displace}(b) and~\ref{fig:maxdisplace_gamma}(top left).
If we look into the average displacement error in Figure~\ref{fig:maxdisplace_gamma}, we can see $\bar{\rho}(\bpi,\widehat{\bpi})$ is much smaller than $\bar{\rho}_{\infty}(\bpi,\widehat{\bpi})$. This motivates us to understand why the average and maximum displacement error differ much. 

Corollary~\ref{cor:unnorm_disp} and~\ref{cor:norm_disp} imply the maximum displacement is proportional to the inverse of minimum separation between among the true ranking scores. For Gamma distribution, this separation can be small compared with the uniform scores.  Moreover, Figure~\ref{fig:displacement} indicates the error bars of $\rho_{i}(\bpi,\widehat{\bpi})$ (over 25 instances) are not even for each index $i$. This also explains why $\rho_{\infty}$ differs much from $\bar{\rho}$ for $\br$ sampled from Gamma distribution. 

\begin{figure}[H]
\subfloat[$\SNR =0.26$, unnormalized]{\includegraphics[width=0.45\columnwidth]{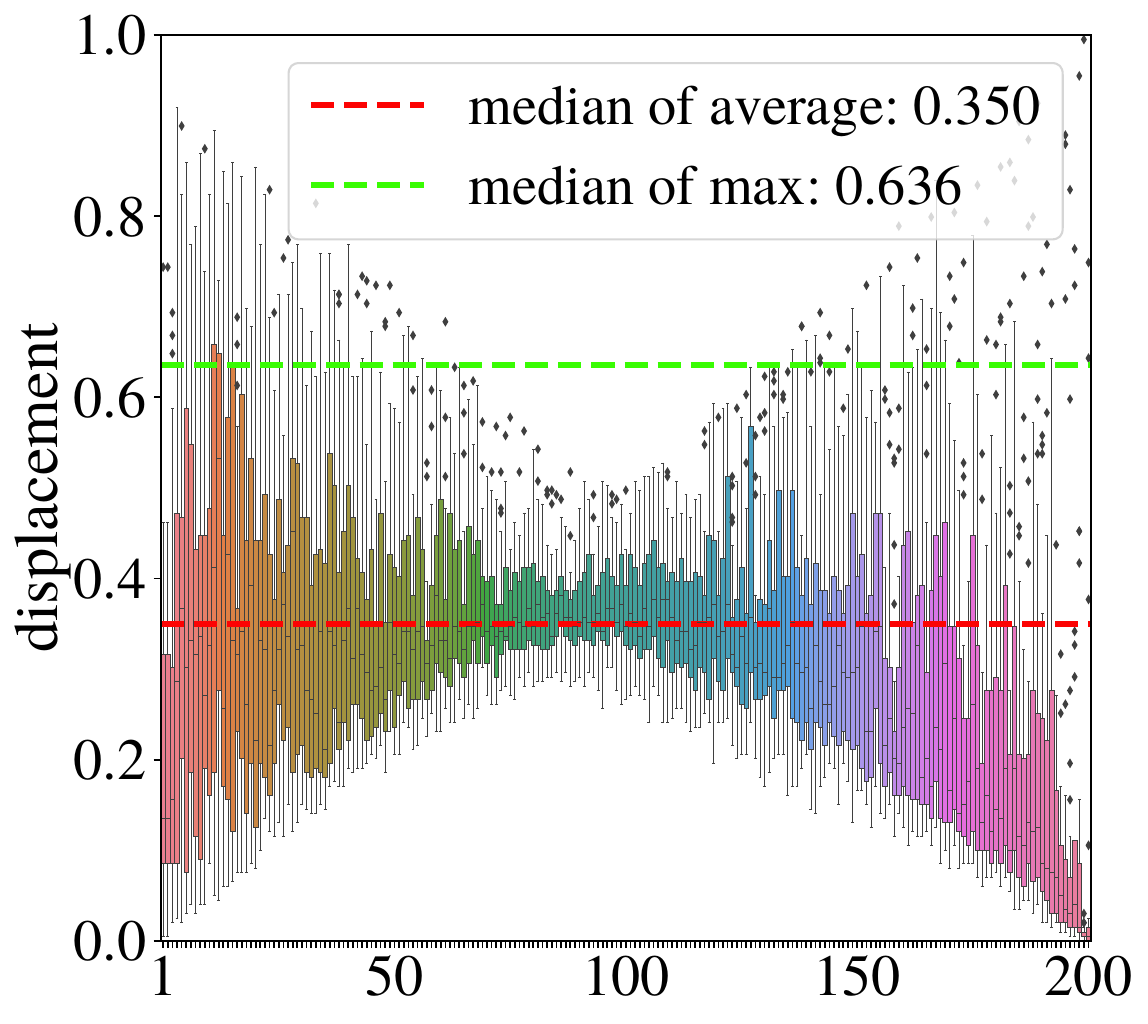}}
\hfill
\subfloat[$\SNR = 0.5$, unnormalized]{\includegraphics[width=0.45\columnwidth]{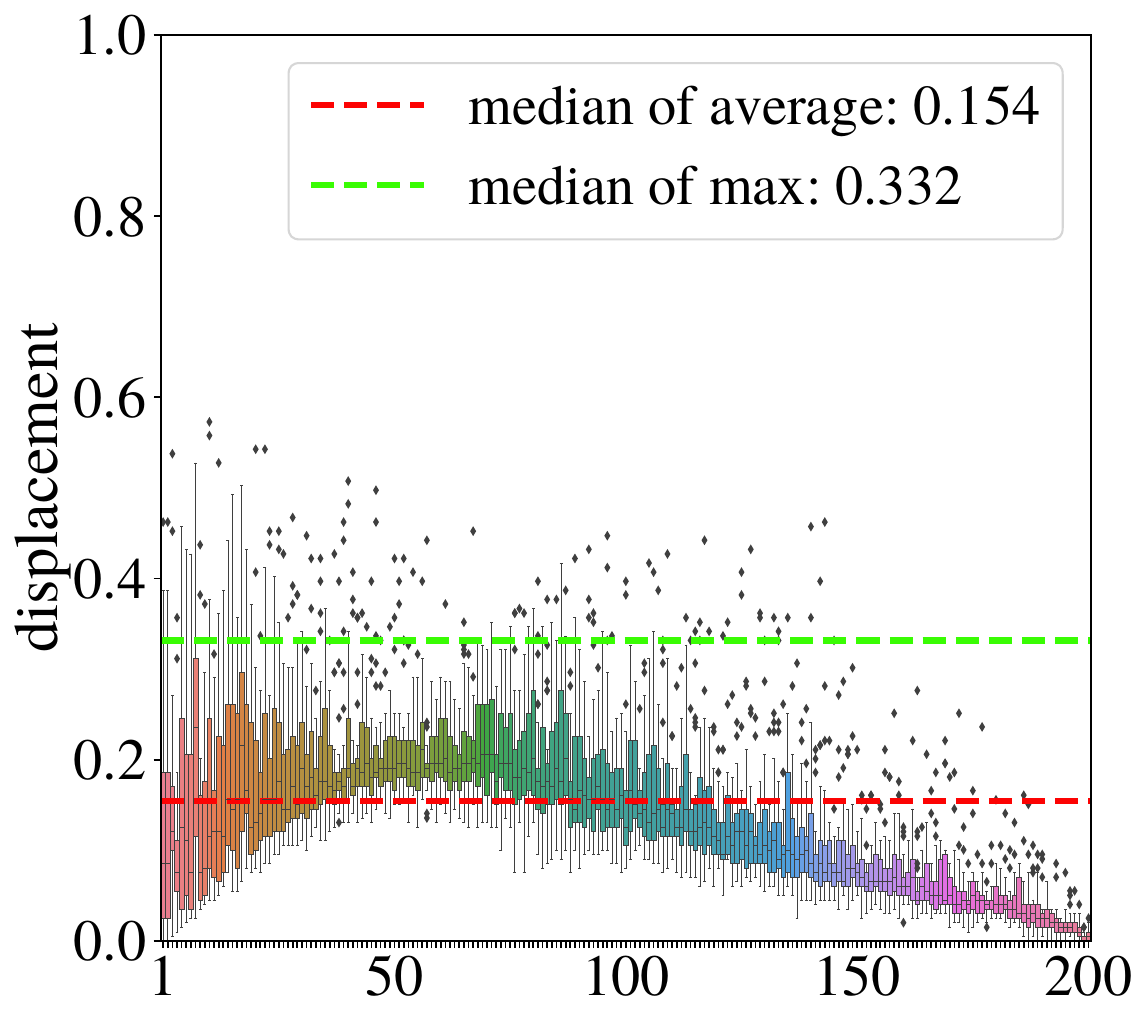}}
\hfill
\subfloat[$\SNR =0.26$, normalized]{\includegraphics[width=0.45\columnwidth]{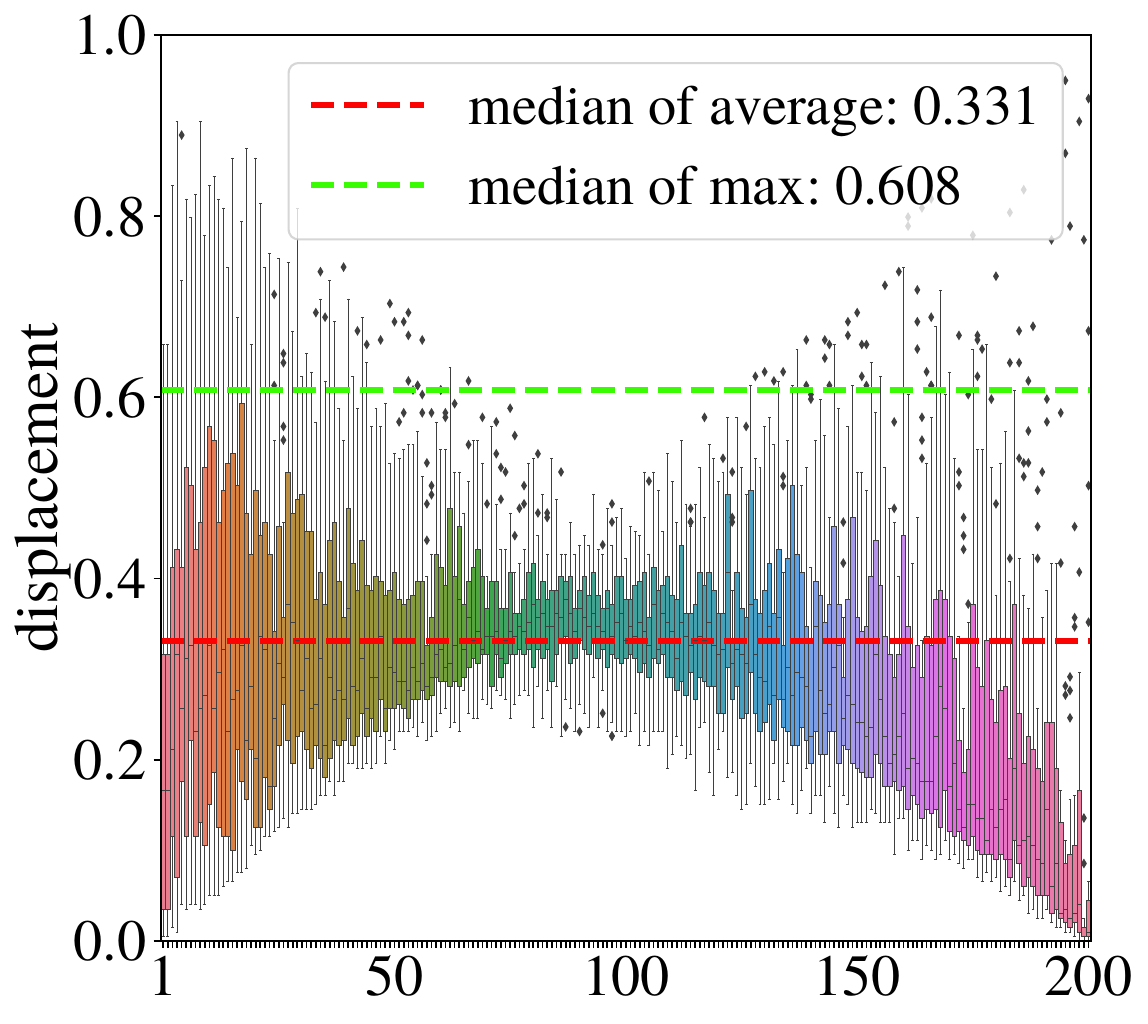}}
\hfill
\subfloat[$\SNR =0.5$, normalized]{\includegraphics[width=0.45\columnwidth]{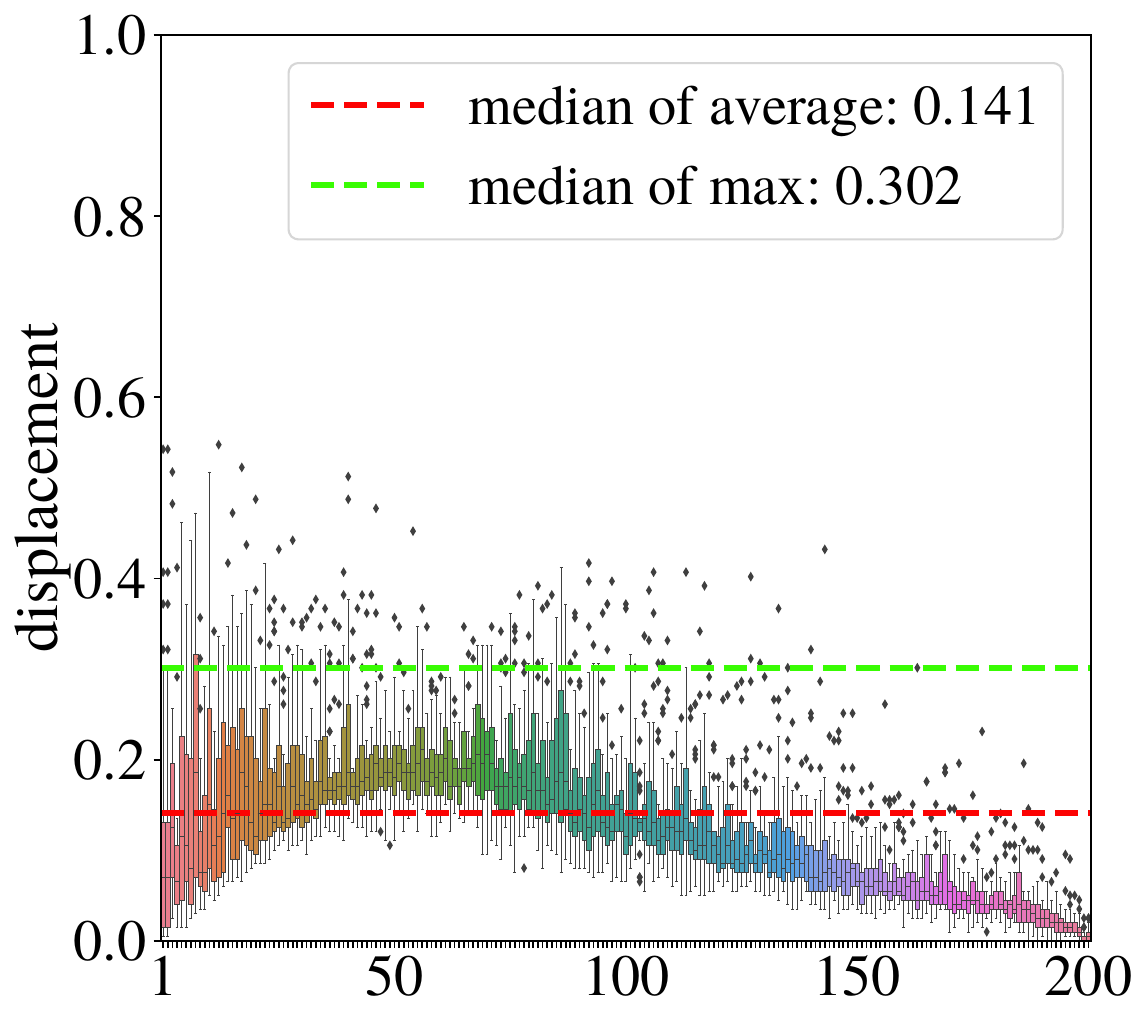}}
\hfill
\caption{Boxplot of displacement $\rho_{i}(\bpi,\widehat{\bpi})$, with ground-truth sorted Gamma distributed $\br$ ($n=200$) for both algorithms. The displacement $\rho_{i}(\bpi,\widehat{\bpi})$ decreases for both algorithms as the $\SNR$ grows. }
\label{fig:displacement}
\end{figure}

\appendix

\section{Proof of Theorem \ref{thm:mainHER}: unnormalized spectral ranking}\label{s:A}

\subsection{The expected measurement matrix}\label{ss:Hstar}
The noisy measurement $\BH$ can be decomposed into its signal and noise part
$\BH = \bar{\BH} + \BDelta$
as defined in~\eqref{def:H_bar} and~\eqref{def:Delta}.
Note that the singular value decomposition (SVD) of the expected measurement $\bar{\BH}$ is given by
\begin{align}
\label{eqn:svd of H_star}
    \bar{\BH} = \eta p (\br \bone_n - \bone_n \br^\top)= \bar{\sigma} \lp \bar{\bu}_1 \bar{\bu}_2^{\top} - \bar{\bu}_2 \bar{\bu}_1^{\top} \rp
\end{align}	
where
\[
\bar{\sigma} = \eta p\sqrt{n}\|\br - \alpha\bone_n\|, \qquad \bar{\bu}_1 =  -\frac{\bone_n}{\sqrt{n}}, \qquad \bar{\bu}_2 = \frac{\br-\alpha\bone_n}{\|\br-\alpha\bone_n\|}, \qquad\alpha = \frac{1}{n}\br^{\top}\bone_n.
\]

For any real anti-symmetric matrix $\BC$, $\mi \BC$ is Hermitian where $\mi$ is the imagery unit. Then there exist a unitary matrix $\BQ$ and a real diagonal matrix $\BSigma$ such that $\mi\BC = \BQ \BSigma \BQ^H$. 
The spectral decomposition of the Hermitian matrix $\mi\bar{\BH}$ is given by 
\[
\mi\bar{\BH} = \bar{\sigma} \lp \bar{\bphi}_1 \bar{\bphi}^{H}_1 - \bar{\bphi}_2 \bar{\bphi}^{H}_2 \rp
\] 
where 
\begin{align*}
\bar{\sigma} &= \eta p \sqrt{n}\norm{\br - \alpha \bone_n},~~ \bar{\bphi}_1 = \frac{1}{\sqrt{2}}\lp \frac{\br-\alpha\bone_n}{\|\br-\alpha\bone_n\|} - \mi\frac{\bone_n}{\sqrt{n}}\rp,~~\bar{\bphi}_2 = \frac{1}{\sqrt{2}}\lp  \frac{\br-\alpha\bone_n}{\|\br-\alpha\bone_n\|} + \mi\frac{\bone_n}{\sqrt{n}} \rp. \\
\end{align*}

\subsection{Proof of Theorem~\ref{thm:mainHER}}

Recall that the $\SNR$ in~\eqref{def:SNR} is approximately equal to $\bar{\sigma}/\|\BDelta\|$. 
Our proof relies on the following lemma.
\begin{lemma}\label{lem:delta}
Under Assumption~\ref{def:SNR}, then it holds with probability $1-o(1)$ that
\begin{equation}\label{eq:deltanorm}
\norm{\BDelta} \lesssim M\sqrt{ pn\log(n)},
\end{equation}
\begin{equation}\label{eq:delta_w}
\| \BDelta \bw\|_{\infty} \leq M \|\bw\|_{\infty}\sqrt{pn\log n}.
\end{equation}
for any fixed complex vector $\bw \in \mathbb{C}^{n}$ that is independent of $\BDelta$. 
\end{lemma}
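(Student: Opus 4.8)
The plan is to prove both bounds by concentration, treating $\BDelta$ entrywise. Recall from~\eqref{def:Delta} that for $i<j$ the entries are $\Delta_{ij} = (X_{ij}Y_{ij}-\eta p)(r_i-r_j) + X_{ij}(1-Y_{ij})Z_{ij}$, with $\Delta_{ji}=-\Delta_{ij}$ and $\Delta_{ii}=0$, and the triples $(X_{ij},Y_{ij},Z_{ij})$ are mutually independent across pairs $i<j$. First I would record three elementary facts: (i) $\E\Delta_{ij}=0$, since $\E[X_{ij}Y_{ij}]=\eta p$ and $\E Z_{ij}=0$; (ii) the deterministic bound $|\Delta_{ij}|\le 2M+M=3M$, using $|r_i-r_j|\le 2M$ and $|Z_{ij}|\le M$; and (iii) the second-moment bound $\E\Delta_{ij}^2 = p\eta(1-p\eta)(r_i-r_j)^2 + \tfrac{1}{3}p(1-\eta)M^2 \lesssim pM^2$, where the cross term vanishes because $Z_{ij}$ is independent and centered. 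These fix the per-term scale $M$ and the variance scale $pM^2$ that drive both estimates.

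For the operator-norm bound~\eqref{eq:deltanorm}, I would work with the Hermitian matrix $\mi\BDelta = \sum_{i<j}\mi\Delta_{ij}(\be_i\be_j^\top-\be_j\be_i^\top)$, a sum of independent, mean-zero Hermitian matrices each of spectral norm $|\Delta_{ij}|\le 3M=:L$. The matrix variance statistic follows from the identity $\big(\mi\Delta_{ij}(\be_i\be_j^\top-\be_j\be_i^\top)\big)^2 = \Delta_{ij}^2(\be_i\be_i^\top+\be_j\be_j^\top)$, so that $\sum_{i<j}\E[\,\cdot\,] = \sum_i\big(\sum_{j\ne i}\E\Delta_{ij}^2\big)\be_i\be_i^\top$ is diagonal with norm $v := \max_i\sum_{j\ne i}\E\Delta_{ij}^2 \lesssim npM^2$. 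Applying the matrix Bernstein inequality (Theorem~\ref{thm:bernstein}) with $t\asymp \sqrt{v\log n}+L\log n$ yields $\|\BDelta\|\lesssim M\sqrt{np\log n}+M\log n$ with probability $1-O(n^{-c})$; since Assumption~\ref{def:SNR} forces $np\gtrsim\log n$, the first term dominates and~\eqref{eq:deltanorm} follows.

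For the $\ell_\infty$ bound~\eqref{eq:delta_w}, I would fix an index $k$ and write $(\BDelta\bw)_k = \sum_{j\ne k}\Delta_{kj}w_j$. Because $\bw$ is independent of $\BDelta$, the summands are independent across $j$ (involving the disjoint randomness $X_{kj},Y_{kj},Z_{kj}$), mean-zero, bounded by $3M|w_j|\le 3M\|\bw\|_\infty$, with total variance $\sum_j\E\Delta_{kj}^2\,|w_j|^2 \le \|\bw\|_\infty^2\sum_j\E\Delta_{kj}^2 \lesssim npM^2\|\bw\|_\infty^2$. Splitting into real and imaginary parts (each with $\|\Re\bw\|_\infty,\|\Im\bw\|_\infty\le\|\bw\|_\infty$) and applying the scalar Bernstein inequality with $t\asymp M\|\bw\|_\infty\sqrt{np\log n}$ gives a per-index tail $\exp(-c\log n)=n^{-c}$, again using $np\gtrsim\log n$ so that the variance term dominates the sub-exponential term. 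A union bound over the $n$ choices of $k$ then establishes~\eqref{eq:delta_w} with probability $1-o(1)$.

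The moment computations and the $\be_i\be_j^\top$ algebra are mechanical; the only point needing care is the role of the sampling condition $np\gtrsim\log n$ implied by $\SNR\gtrsim 1$. This is precisely what guarantees that in both Bernstein bounds the sub-Gaussian (variance) regime dominates the sub-exponential (large-deviation) regime, so that the clean rate $\sqrt{np\log n}$ emerges rather than an extra additive $\log n$ term; absent this condition the claimed bounds would degrade.
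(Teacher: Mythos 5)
Your proposal is correct and follows essentially the same route as the paper: both decompose $\BDelta$ into a sum of independent mean-zero rank-2 terms $\Delta_{ij}(\be_i\be_j^\top-\be_j\be_i^\top)$, compute the per-entry variance $\E\Delta_{ij}^2\lesssim pM^2$ and the matrix variance statistic $\lesssim npM^2$, apply matrix Bernstein for~\eqref{eq:deltanorm}, and treat each coordinate of $\BDelta\bw$ by scalar Bernstein plus a union bound for~\eqref{eq:delta_w}, with the condition $pn\gtrsim\log n$ from Assumption~\ref{def:SNR} absorbing the additive $M\log n$ term in both cases. The minor refinements you add (passing to the Hermitian matrix $\mi\BDelta$ and splitting complex $\bw$ into real and imaginary parts) are harmless and, if anything, make the application of the concentration inequalities slightly more careful than the paper's exposition.
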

Lemma~\ref{lem:delta} is proven in Section~\ref{ss:lemdelta}.
Note that under Assumption~\ref{def:SNR}, i.e., $\SNR \gtrsim 1, $
we have
\begin{equation}\label{eq:sigma_ratio}
\left|  \frac{\sigma}{\bar{\sigma}} - 1\right| \leq \frac{\|\BDelta\|}{\bar{\sigma}} \lesssim \frac{M\sqrt{pn\log n}}{\eta p\sqrt{n}\|\br - \alpha\bone_n\|}  \lesssim \SNR^{-1},~~~\SNR\lesssim \sqrt{\frac{\eta^2pn}{\log n}}
\end{equation}
following from Weyl's inequality (Theorem \ref{thm:weyl}) and $\|\BDelta\|\lesssim M\sqrt{pn\log n}$ in Lemma~\ref{lem:delta}.

\paragraph{Error decomposition.} As discussed in Section~\ref{ss:tech}, the $\ell_\infty$-norm error between $\bphi$ and $\bar{\bphi}$ up to some rotation $\beta\in\CC$ with $|\beta|=1$ can be decomposed as follows: 
\begin{equation}\label{eqn:unnormalized decomposition}
\begin{aligned}
& \norm{\bphi_1 - \beta\bar{\bphi}_1}_\infty \le \norm{\bphi_1 - \beta\widetilde{\bphi}_1}_\infty + \norm{\widetilde{\bphi}_1 - \bar{\bphi}_1}_\infty \\ 
& = \left\| \frac{\mi\BH \bphi_1}{\sigma}  - \frac{\mi \beta\BH\bar{\bphi}_1}{\sigma}\right\|_{\infty} + \left\|\frac{\mi\BH\bar{\bphi}_1}{\sigma} - \bar{\bphi}_1 \right\|_{\infty} \\
& = \frac{1}{\sigma} \|\BH(\bphi_1 - \beta\bar{\bphi})\|_{\infty} + \left\|\frac{\mi(\bar{\BH} + \BDelta)\bar{\bphi}_1}{\sigma} - \bar{\bphi}_1 \right\|_{\infty} \\
& = \underbrace{\frac{1}{\sigma} \|\bar{\BH}(\bphi_1 - \beta\bar{\bphi})\|_{\infty}}_{E_1} 
+ \underbrace{\frac{1}{\sigma} \|\BDelta(\bphi_1 - \beta\bar{\bphi})\|_{\infty}}_{E_2}  
+ \underbrace{\frac{|\sigma - \bar{\sigma}|}{\sigma} \|\bar{\bphi}_1\|_{\infty}}_{E_3} 
+ \underbrace{\frac{\|\BDelta\bar{\bphi}_1\|_{\infty}}{\sigma}}_{E_4}
 \end{aligned}
\end{equation}
where $\widetilde{\bphi}_1$ is defined in~\eqref{def:tphi1}. The estimations of each term are provided in Theorem~\ref{thm:E134} and~\ref{thm:E2}, which are justified in Section~\ref{ss:proofE134} and~\ref{ss:proofE2} respectively.
\begin{theorem}
\label{thm:E134}
Under the conditions in Theorem \ref{thm:mainHER}, with probability $1-o(1)$,
\begin{equation}
\begin{aligned}
E_1 & = \frac{1}{\sigma} \norm{\bar{\BH}(\bphi_1-\beta\bar{\bphi}_1)}_\infty \lesssim \SNR^{-1} \norm{\bar{\bphi}_1}_{\infty},\\
E_3 &= \frac{|\sigma - \bar{\sigma}|}{\sigma} \norm{\bar{\bphi}_1}_{\infty} \lesssim \SNR^{-1}  \norm{\bar{\bphi}_1}_{\infty},\\
E_4 &= \frac{1}{\sigma}\norm{\BDelta\bar{\bphi}_1}_\infty \lesssim \SNR^{-1} \norm{\bar{\bphi}_1}_{\infty}.
\end{aligned} 
\end{equation}
\end{theorem}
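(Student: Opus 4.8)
The plan is to handle the three terms separately, from easiest to hardest. For $E_3$, I would invoke the eigenvalue estimate~\eqref{eq:sigma_ratio} directly: since $|\sigma/\bar{\sigma} - 1| \lesssim \SNR^{-1}$ and $\SNR \gtrsim 1$ forces $\sigma \asymp \bar{\sigma}$, we get $|\sigma - \bar{\sigma}|/\sigma \lesssim \SNR^{-1}$, and multiplying by $\norm{\bar{\bphi}_1}_{\infty}$ settles $E_3$. For $E_4$, the crucial point is that $\bar{\bphi}_1$ is a \emph{deterministic} vector independent of $\BDelta$, so the second bound~\eqref{eq:delta_w} of Lemma~\ref{lem:delta} applies with $\bw = \bar{\bphi}_1$, giving $\norm{\BDelta\bar{\bphi}_1}_{\infty} \leq M\sqrt{pn\log n}\,\norm{\bar{\bphi}_1}_{\infty}$. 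Dividing by $\sigma \asymp \bar{\sigma} = \eta p\sqrt{n}\norm{\br - \alpha\bone_n}$ and recognizing that $M\sqrt{pn\log n}/\bar{\sigma}$ is exactly $\SNR^{-1}$ by the definition~\eqref{def:SNR} completes $E_4$.

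The substantive term is $E_1$. The idea is to exploit the rank-$2$ spectral structure $\mi\bar{\BH} = \bar{\sigma}(\bar{\bphi}_1\bar{\bphi}_1^H - \bar{\bphi}_2\bar{\bphi}_2^H)$, so that $\bar{\BH}(\bphi_1 - \beta\bar{\bphi}_1)$ can be expanded along the orthonormal pair $\bar{\bphi}_1, \bar{\bphi}_2$. Using $\bar{\bphi}_1^H\bar{\bphi}_1 = 1$ and $\bar{\bphi}_2^H\bar{\bphi}_1 = 0$, the expansion collapses to a combination of $\bar{\bphi}_1$ with coefficient $(\bar{\bphi}_1^H\bphi_1 - \beta)$ and $\bar{\bphi}_2$ with coefficient $\bar{\bphi}_2^H\bphi_1$. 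Taking $\ell_\infty$-norms then bounds $\norm{\bar{\BH}(\bphi_1 - \beta\bar{\bphi}_1)}_{\infty}$ by $\bar{\sigma}$ times $|\bar{\bphi}_1^H\bphi_1 - \beta|\,\norm{\bar{\bphi}_1}_{\infty} + |\bar{\bphi}_2^H\bphi_1|\,\norm{\bar{\bphi}_2}_{\infty}$.

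It then remains to control the two scalar coefficients. For the first, the optimal choice $\beta = \bar{\bphi}_1^H\bphi_1/|\bar{\bphi}_1^H\bphi_1|$ gives $|\bar{\bphi}_1^H\bphi_1 - \beta| = 1 - |\bar{\bphi}_1^H\bphi_1| = \tfrac12\min_{|\beta|=1}\norm{\bphi_1 - \beta\bar{\bphi}_1}^2$, which by Davis-Kahan is $\lesssim (\norm{\BDelta}/\bar{\sigma})^2 \lesssim \SNR^{-2}$. For the second, I would left-multiply the eigen-relation $\mi\BH\bphi_1 = \sigma\bphi_1$ by $\bar{\bphi}_2^H$ and use $\bar{\bphi}_2^H(\mi\bar{\BH}) = -\bar{\sigma}\bar{\bphi}_2^H$ to obtain $(\sigma + \bar{\sigma})\bar{\bphi}_2^H\bphi_1 = \bar{\bphi}_2^H(\mi\BDelta)\bphi_1$, hence $|\bar{\bphi}_2^H\bphi_1| \leq \norm{\BDelta}/(\sigma + \bar{\sigma}) \lesssim \SNR^{-1}$. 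The final ingredient is the entrywise symmetry $\norm{\bar{\bphi}_1}_{\infty} = \norm{\bar{\bphi}_2}_{\infty}$, which holds because $\bar{\bphi}_1$ and $\bar{\bphi}_2$ differ only by the sign of an imaginary part built from real vectors $\bar{\bu}_1, \bar{\bu}_2$. Combining, the $\SNR^{-1}$ cross-term dominates the $\SNR^{-2}$ diagonal term (as $\SNR \gtrsim 1$), so $E_1 \lesssim (\bar{\sigma}/\sigma)\SNR^{-1}\norm{\bar{\bphi}_1}_{\infty} \lesssim \SNR^{-1}\norm{\bar{\bphi}_1}_{\infty}$.

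I anticipate the main obstacle to be the off-diagonal projection $|\bar{\bphi}_2^H\bphi_1|$. A crude bound via Davis-Kahan applied to $\norm{\bphi_1 - \beta\bar{\bphi}_1}$ controls only the \emph{total} leakage out of the $\bar{\bphi}_1$ direction and does not isolate the $\bar{\bphi}_2$-component; yet it is precisely this component---not the squared diagonal term---that fixes the final $\SNR^{-1}$ rate, since $\bar{\bphi}_2$ carries the same $\ell_\infty$ mass as $\bar{\bphi}_1$. The clean resolution is the spectral-gap argument above, which expresses the leakage as $\bar{\bphi}_2^H(\mi\BDelta)\bphi_1$ divided by the large gap $\sigma + \bar{\sigma} \asymp 2\bar{\sigma}$, thereby trading one factor of $\bar{\sigma}$ against $\norm{\BDelta}$ to yield the sharp $\SNR^{-1}$ bound.
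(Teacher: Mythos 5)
Your proof is correct, and your treatment of $E_3$ and $E_4$ coincides with the paper's: the eigenvalue estimate~\eqref{eq:sigma_ratio} (Weyl's inequality plus Lemma~\ref{lem:delta}) for $E_3$, and the bound~\eqref{eq:delta_w} of Lemma~\ref{lem:delta} applied to the deterministic vector $\bar{\bphi}_1$ for $E_4$. The only divergence is in $E_1$. The paper, after expanding along the rank-2 structure $\mi\bar{\BH} = \bar{\sigma}(\bar{\bphi}_1\bar{\bphi}_1^H - \bar{\bphi}_2\bar{\bphi}_2^H)$ exactly as you do, simply applies Cauchy--Schwarz to both projection coefficients, $|\bar{\bphi}_j^H(\bphi_1 - \beta\bar{\bphi}_1)| \le \norm{\bphi_1 - \beta\bar{\bphi}_1}$ for $j=1,2$, and then invokes the Davis--Kahan bound $\norm{\bphi_1 - \beta\bar{\bphi}_1} \lesssim \norm{\BDelta}/(\bar{\sigma}-\norm{\BDelta}) \lesssim \SNR^{-1}$ once, together with $\norm{\bar{\bphi}_1}_\infty = \norm{\bar{\bphi}_2}_\infty$. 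In particular, the obstacle you anticipate is not actually one: the ``crude'' $\ell_2$ bound does control the $\bar{\bphi}_2$-leakage, since $|\bar{\bphi}_2^H\bphi_1| = |\bar{\bphi}_2^H(\bphi_1-\beta\bar{\bphi}_1)| \le \norm{\bphi_1-\beta\bar{\bphi}_1} \lesssim \SNR^{-1}$, which is exactly the rate needed. Your refinement---showing the diagonal coefficient $1-|\bar{\bphi}_1^H\bphi_1| = \tfrac12\min_{|\beta|=1}\norm{\bphi_1-\beta\bar{\bphi}_1}^2 \lesssim \SNR^{-2}$ and deriving the spectral-gap identity $(\sigma+\bar{\sigma})\,\bar{\bphi}_2^H\bphi_1 = \bar{\bphi}_2^H(\mi\BDelta)\bphi_1$---is valid and yields strictly finer information (the error's component along $\bar{\bphi}_1$ is quadratically small, while only the $\bar{\bphi}_2$-component is genuinely of order $\SNR^{-1}$), but this extra precision is not needed for the stated bound, which in both arguments is dominated by the $\SNR^{-1}$ cross term.
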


\begin{theorem}\label{thm:E2}
Under the conditions in Theorem \ref{thm:mainHER}, with probability $1-o(1)$,
\begin{equation}\label{eq:E2}
E_2 =  \frac{1}{\sigma} \norm{\BDelta(\bphi_1-\beta\bar{\bphi}_1)}_\infty  \lesssim \SNR^{-1}(\|\bphi_1\|_{\infty} + \|\bar{\bphi}_1\|_{\infty}).
\end{equation}
\end{theorem}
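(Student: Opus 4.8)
The plan is to prove Theorem~\ref{thm:E2}, the bound on $E_2 = \sigma^{-1}\norm{\BDelta(\bphi_1 - \beta\bar{\bphi}_1)}_\infty$, by following the leave-one-out strategy outlined in Section~\ref{ss:tech}. The core difficulty, as the paper itself flags, is that $\bphi_1 - \beta\bar{\bphi}_1$ depends statistically on every row of $\BDelta$, so one cannot apply a concentration inequality directly to the inner product $\BDelta_k^\top(\bphi_1 - \beta\bar{\bphi}_1)$. First I would reduce the $\ell_\infty$-norm to a pointwise control: since $\norm{\BDelta(\bphi_1 - \beta\bar{\bphi}_1)}_\infty = \max_k |\BDelta_k^\top(\bphi_1 - \beta\bar{\bphi}_1)|$, it suffices to bound each coordinate $k$ and then take a union bound over $1 \le k \le n$. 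For each fixed $k$, I would insert the leave-one-out eigenvector $\bphi_1^{(k)}$ (the top eigenvector of $\mi\BH^{(k)}$, with $\BH^{(k)}$ obtained by zeroing the $k$-th row and column of the noise) and the associated optimal phase $\beta^{(k)}$ from~\eqref{def:beta_k}, giving the splitting into $T_1$ and $T_2$ exactly as in~\eqref{def:Tterm}.

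For the term $T_2 = \sigma^{-1}|\BDelta_k^\top(\beta^{(k)}\bphi_1^{(k)} - \beta\bar{\bphi}_1)|$, the key gain is statistical independence: the vector $\beta^{(k)}\bphi_1^{(k)} - \beta\bar{\bphi}_1$ is independent of the $k$-th column $\BDelta_k$, since $\bphi_1^{(k)}$ is built from $\BH^{(k)}$ which excludes the $k$-th row/column of $\BDelta$, and $\bar{\bphi}_1$ is deterministic. Conditioning on $\bphi_1^{(k)}$, I would treat $\BDelta_k^\top(\beta^{(k)}\bphi_1^{(k)} - \beta\bar{\bphi}_1)$ as a sum of independent mean-zero terms and apply the matrix Bernstein inequality (Theorem~\ref{thm:bernstein}) or a scalar Bernstein bound, which yields a factor of order $M\sqrt{pn\log n}$ times $\norm{\beta^{(k)}\bphi_1^{(k)} - \beta\bar{\bphi}_1}_\infty$. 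After dividing by $\sigma \asymp \bar{\sigma}$ this produces the desired $\SNR^{-1}$ scaling, provided the $\ell_\infty$-distance between $\bphi_1^{(k)}$ and $\bar{\bphi}_1$ is itself controlled, which follows by combining $\norm{\bphi_1^{(k)} - \beta^{(k)}\bphi_1}$ (via Davis--Kahan) with the bounds on $E_1, E_3, E_4$ already established in Theorem~\ref{thm:E134}.

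For $T_1 = \sigma^{-1}|\BDelta_k^\top(\bphi_1 - \beta^{(k)}\bphi_1^{(k)})|$, I would apply Cauchy--Schwarz to get $T_1 \le \sigma^{-1}\norm{\BDelta_k}\,\norm{\bphi_1 - \beta^{(k)}\bphi_1^{(k)}}$. Here $\norm{\BDelta_k}$ is the $\ell_2$-norm of a single column, of order $M\sqrt{pn}$ up to logs by a standard concentration bound, and the eigenvector perturbation $\norm{\bphi_1 - \beta^{(k)}\bphi_1^{(k)}}$ is estimated through the Davis--Kahan theorem applied to $\mi\BH$ versus $\mi\BH^{(k)}$, whose difference is $\mi(\BDelta - \BDelta^{(k)})$ supported only on the $k$-th row and column. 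The crucial sub-estimate is that $\norm{(\BDelta - \BDelta^{(k)})\bphi_1^{(k)}}$ is small: since $\bphi_1^{(k)}$ is independent of the removed $k$-th slice of $\BDelta$, I can bound this via Lemma~\ref{lem:delta}, getting a bound of order $M\sqrt{pn\log n}\,\norm{\bphi_1^{(k)}}_\infty$. Dividing by the spectral gap $\bar{\sigma} - \norm{\BDelta} \asymp \bar{\sigma}$ again gives an $\SNR^{-1}$ factor.

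The main obstacle I expect is the circularity inherent in the leave-one-out argument: the bound on $T_1$ requires control of $\norm{\bphi_1^{(k)}}_\infty$, and the bound on $T_2$ requires control of $\norm{\beta^{(k)}\bphi_1^{(k)} - \beta\bar{\bphi}_1}_\infty$, but both of these are really $\ell_\infty$-quantities of the same type as the one being bounded in the overall theorem. The resolution is to set up a self-bounding (bootstrap) argument: one establishes that $\max_k \norm{\bphi_1^{(k)}}_\infty$ and $\norm{\bphi_1}_\infty$ are each within a constant factor of $\norm{\bar{\bphi}_1}_\infty$ plus the quantity being bounded, and then closes the loop by absorbing the self-referential term into the left-hand side using the assumption $\SNR \gtrsim 1$ (so that the coefficient of the absorbed term is strictly less than one). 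Carefully tracking this absorption, and verifying that the union bound over all $n$ leave-one-out copies costs only logarithmic factors already accounted for in Lemma~\ref{lem:delta}, is where the real work lies; the detailed constants are deferred to Lemma~\ref{lem:Tterm}.
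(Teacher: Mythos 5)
Your overall architecture matches the paper's proof (Lemma~\ref{lem:Tterm}): the same leave-one-out split of $E_2$ into $T_1$ and $T_2$ from~\eqref{def:Tterm}, Cauchy--Schwarz plus Davis--Kahan applied to the rank-one-supported perturbation $\BDelta-\BDelta^{(k)}$ for $T_1$, Bernstein concentration for $T_2$, and an absorption step giving $\|\bphi_1^{(k)}\|_\infty \le \|\bphi_1\|_\infty/(1-O(\SNR^{-1}))$. However, there is a genuine flaw in your justification of $T_2$: the claim that the vector $\beta^{(k)}\bphi_1^{(k)} - \beta\bar{\bphi}_1$ is independent of the column $\BDelta_k$ is false. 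The phases $\beta^{(k)} = \lag \bphi_1^{(k)}, \bphi_1\rag/|\lag \bphi_1^{(k)}, \bphi_1\rag|$ and $\beta = \lag \bar{\bphi}_1,\bphi_1\rag/|\lag \bar{\bphi}_1,\bphi_1\rag|$ are both functions of $\bphi_1$, the top eigenvector of the \emph{full} matrix $\mi\BH$, and hence depend on $\BDelta_k$. Consequently, conditioning on $\bphi_1^{(k)}$ does not make the summands $\Delta_{kj}\bigl(\beta^{(k)}\phi_{1,j}^{(k)} - \beta\bar{\phi}_{1,j}\bigr)$ independent or mean-zero, and the Bernstein step as you describe it would fail.

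The repair is short and is exactly what the paper does: since $|\beta^{(k)}|=|\beta|=1$, first apply the triangle inequality, $|\BDelta_k^\top(\beta^{(k)}\bphi_1^{(k)} - \beta\bar{\bphi}_1)| \le |\BDelta_k^\top\bphi_1^{(k)}| + |\BDelta_k^\top\bar{\bphi}_1|$, so that the random phases disappear; only then invoke concentration, since each remaining inner product involves a vector ($\bphi_1^{(k)}$ or $\bar{\bphi}_1$) that genuinely is independent of $\BDelta_k$. This yields $T_2 \lesssim \sigma^{-1}M\sqrt{pn\log n}\,(\|\bphi_1^{(k)}\|_\infty + \|\bar{\bphi}_1\|_\infty)$ after a union bound over $k$. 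Dropping the phases also removes the need for your detour through $\|\beta^{(k)}\bphi_1^{(k)} - \beta\bar{\bphi}_1\|_\infty$ and the bounds $E_1,E_3,E_4$ of Theorem~\ref{thm:E134} --- a detour that is in fact circular, because any $\ell_\infty$ control of $\bphi_1^{(k)} - \beta\bar{\bphi}_1$ of that type would require an $E_2$-type bound for the leave-one-out matrix itself. The target inequality~\eqref{eq:E2} tolerates $\|\bphi_1\|_\infty + \|\bar{\bphi}_1\|_\infty$ on the right-hand side, so no smallness of that difference is needed: the crude bound by the sum of norms, combined with $\|\bphi_1^{(k)}\|_\infty \lesssim \|\bphi_1\|_\infty$ (which follows from the $\ell_2$ Davis--Kahan estimate~\eqref{eq:phiphik} by absorption, as you correctly anticipate in your final paragraph), already suffices.
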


\begin{proof}[\bf Proof of Theorem~\ref{thm:mainHER} ]

Applying the bounds of $E_1,\cdots,E_4$, we have the $\ell_\infty$ error \eqref{eqn:unnormalized decomposition}  controlled by
\[
\begin{aligned}
\norm{\bphi_1 - \beta\bar{\bphi}_1}_\infty 
&\lesssim E_1 +E_2 +E_3 +E_4 \lesssim \SNR^{-1}(\|\bphi_1\|_{\infty} +\|\bar{\bphi}_1\|_{\infty} )
\end{aligned}
\]
with probability $1- o(1)$. Then under the assumption that $\SNR \gtrsim 1$, we have 
\[
\|\bar{\bphi}_1\|_{\infty} \leq \frac{1 + O(\SNR^{-1})}{1 - O(\SNR^{-1})} ~\|\bphi_1\|_{\infty}.
\]
Therefore, it holds that $\|\bphi_1 - \beta\bar{\bphi}_1\|_{\infty} \lesssim  \SNR^{-1} \|\bar{\bphi}_1\|_{\infty}  \lesssim  \SNR^{-1} \|\bphi_1\|_{\infty}$
for $\SNR^{-1}=O(1)$ which proves~\eqref{eq:HER_error} in Theorem~\ref{thm:mainHER}. 
Now we proceed to estimate~\eqref{eq:HER_error2}, i.e., $\min_{s\in \{\pm 1\}}\|\bx - s\bar{\bx}\|_{\infty}$ where $\bphi_1 = \bx + \mi\by$ and $\bar{\bphi}_1 = (\bar{\bx} + \mi\bar{\by})$. Here
\[
\bar{\bx}=\frac{\br-\alpha\bone_n}{\sqrt{2}\|\br - \alpha\bone_n\|},~~~\bar{\by} = -\frac{\bone_n}{\sqrt{2n}},~~~\alpha = \frac{\br^{\top}\bone_n}{n}.
\]

In particular, $\bx = \Re\bphi_1$ is obtained from Algorithm~\ref{algo:HER} and $\bx\perp \bar{\by}$ holds.
Denote $\beta = \exp(\mi\theta)=\lag \bar{\bphi}_1, \bphi_1\rag/|\lag \bar{\bphi}_1, \bphi_1\rag|$ and then 
\begin{align*}
\bphi_1 - \beta\bar{\bphi}_1 & = \bx + \mi\by - (\cos\theta + \mi\sin\theta)(\bar{\bx} + \mi\bar{\by})  = \left(\underbrace{[\bx,\by]}_{\BU} - \underbrace{[\bar{\bx},\bar{\by}]}_{\bar{\BU}} 
\underbrace{
\begin{bmatrix}
\cos\theta & \sin \theta \\
-\sin\theta &\cos\theta
\end{bmatrix}}_{\BQ} \right)
\begin{bmatrix}
1 \\
\mi
\end{bmatrix}
\end{align*}
Then
\[
\|\bx - (\bar{\bx}\cos\theta  - \bar{\by}\sin\theta )\|_{\infty} = \|\Real(\bphi_1 - \beta\bar{\bphi}_1)\|_{\infty}  \leq  \|\bphi_1 - \beta\bar{\bphi}_1\|_{\infty} \lesssim \SNR^{-1}\|\bar{\bphi}_1\|_{\infty}.
\]
From Davis-Kahan theorem, we have $\delta: = \| \bphi_1 - \beta\bar{\bphi}_1 \| = \|\BU - \bar{\BU}\BQ\|_F \lesssim \SNR^{-1}$ and by separating the real and imaginary part, we get
\[
\|\BU - \bar{\BU}\BQ\|_F \leq\delta  \Longrightarrow \|\BU - \bar{\BU}\BQ\| \leq  \delta \Longrightarrow \| 2\bar{\BU}^{\top}\BU-\BQ\| = 2 \| \bar{\BU}^{\top}(\BU - \bar{\BU}\BQ) \|\leq \sqrt{2}\delta
\]
where $\BU=[\bx,\by]$, $\bar{\BU} = [\bar{\bx},\bar{\by}]$, and $\bar{\BU}^{\top}\bar{\BU} = \I_2/2$. Note that
\[
\bar{\BU}^{\top}\BU = \begin{bmatrix}
\lag \bx, \bar{\bx}\rag  & \lag \by, \bar{\bx}\rag \\
0 &  \lag \by, \bar{\by}\rag
\end{bmatrix},~~\BQ = \begin{bmatrix}
\cos\theta & \sin \theta \\
-\sin\theta &\cos\theta
\end{bmatrix}
\]
where $\bx\perp\bar{\by}.$
This implies that $|\sin \theta | \leq \sqrt{2}\delta$ and $|\cos\theta|  \geq \sqrt{1-2\delta^2}.$
Then we have
\begin{align*}
\|\bx - \sign(\cos\theta) \bar{\bx} \|_{\infty} & \leq  \|\bx - (\bar{\bx} \cos\theta -  \bar{\by}\sin\theta)\|_{\infty} + (1 - |\cos\theta|)\|\bar{\bx}  \|_{\infty} + |\sin\theta| \cdot \|\bar{\by}\|_{\infty} \\
& \lesssim \SNR^{-1}\|\bar{\bphi}_1\|_{\infty} + (1-\sqrt{1-2\delta^2})\|\bar{\bx}  \|_{\infty} + \sqrt{2}\delta \|\bar{\by}\|_{\infty} \\
& \lesssim \SNR^{-1} \|\bar{\bx}\|_{\infty}
\end{align*}
where $\|\bar{\by}\|_{\infty}\lesssim \|\bar{\bx}\|_{\infty}$ and $\|\bar{\bphi}_1\|_{\infty} \leq \|\bar{\bx}\|_{\infty} + \|\bar{\by}\|_{\infty}\leq 2\|\bar{\bx}\|_{\infty}.$
\end{proof}

\subsection{Proof of Lemma \ref{lem:delta}}\label{ss:lemdelta}
\begin{proof}

{\bf Estimation of $\|\BDelta\|$ in~\eqref{eq:deltanorm}.} In the matrix form, $\BDelta$ is a sum of independent rank-2 random matrices:
\begin{align*}
\BDelta & = \sum_{i<j}\Delta_{ij}(\be_i\be_j^\top - \be_j\be_i^\top) = \sum_{i<j} \underbrace{\left[ (X_{ij}Y_{ij} - \eta p) (r_i - r_j) + X_{ij}(1-Y_{ij})Z_{ij}\right]}_{\Delta_{ij}}(\be_i\be_j^{\top} - \be_j\be_i^{\top})
\end{align*}
where $X_{ij} \sim$Bernoulli($p$), $Y_{ij}\sim$Bernoulli($\eta$), and $Z_{ij}\sim {\cal U}[-M,M].$

Note that $\E  \Delta_{ij} = 0$ and $|\Delta_{ij}|\lesssim M$.  The variance of each entry is bounded by
\begin{align*}
\E \Delta^2_{ij} & = (r_i - r_j)^2\E(X_{ij}Y_{ij} - \eta p )^2 + \E X_{ij}(1-Y_{ij})Z_{ij}^2 \\
& = (r_i - r_j)^2\eta p(1-\eta p ) + \frac{(1-\eta)pM^2}{3}  \leq 4\eta p  (1-\eta p )M^2 + \frac{(1-\eta)pM^2}{3} \lesssim pM^2
\end{align*}
where $\E Z_{ij}^2 = M^2/3$.
Then we have that
\[
\begin{aligned}
    \norm{\mathbb{E}\ls \sum_{i<j} \Delta^2_{ij}(\be_i\be_j^\top - \be_j\be_i^\top)(\be_i\be_j^\top - \be_j\be_i^\top)^\top  \rs} & = \norm{\mathbb{E}\ls \sum_{i<j} \Delta^2_{ij}(\be_i\be_i^\top+\be_j\be_j^\top)  \rs}\\
    &\lesssim  pM^2 \norm{n\I_n}=  npM^2.
\end{aligned}
\]
Note that $\norm{\Delta_{ij}(\be_i\be_j^\top-\be_j\be_i^\top)}\lesssim M$. Then the Bernstein inequality (Theorem \ref{thm:bernstein}) along with the assumption $\SNR \gtrsim 1$ gives that with probability $1- o(1)$,
\[
	\norm{\BDelta} \lesssim M\sqrt{ pn\log n} + M\log n \lesssim M\sqrt{ pn\log n}.
\]

\vskip0.25cm
\noindent{\bf Estimation of $\|\BDelta\bw\|_{\infty}$ in~\eqref{eq:delta_w}.} 
Each entry of $\BDelta\bw$ is a sum of $n$ independent entries:
$[\BDelta \bw]_k = \sum_{1\le j\le n} \Delta_{kj} w_j.$
It suffices to use Theorem \ref{thm:bernstein} to bound it: the variance of $[\BDelta \bw]_k$ is bounded by
\[
\E \ls \sum_{j=1}^n \Delta^2_{kj} |w_j|^2 \rs \lesssim pnM^2 \norm{\bw}_\infty^2,
\] 
where $\E |\Delta_{kj}|^2 \lesssim pM^2$.
Also we note that each term $|\Delta_{kj}w_j|$ is uniformly bounded by $2M\|\bw\|_{\infty}$. Theorem~\ref{thm:bernstein}, together with a union bound argument, implies that with probability $1-o(1)$, 
\[
\norm{\BDelta\bw}_{\infty} \lesssim M \| \bw \|_{\infty} \sqrt{pn\log n} + M\norm{\bw}_\infty \log n \lesssim M \|\bw\|_{\infty}\sqrt{pn\log n}.
\]
\end{proof}

\subsection{Proof of Theorem \ref{thm:E134}: the estimation of $E_1$, $E_3,$ and $E_4$}\label{ss:proofE134}

The Davis-Kahan bound yields
\[
\norm{\bphi_1 - \beta\bar{\bphi}_1} \lesssim \frac{\norm{\BDelta}}{\bar{\sigma} - \norm{\BDelta}} \lesssim  \SNR^{-1}  = O(1).
\] 

\paragraph{Estimation of $E_1$:} $E_1$ can be bounded by 
\begin{equation}
\begin{aligned}
E_1 &= \frac{1}{\sigma} \norm{\mi\bar{\BH}(\bphi_1-\beta\bar{\bphi}_1)}_\infty = \frac{\bar{\sigma}}{\sigma}\norm{\lp \bar{\bphi}_1 \bar{\bphi}_1^{H} - \bar{\bphi}_2 \bar{\bphi}_2^{H} \rp(\bphi_1-\beta\bar{\bphi}_1)}_\infty \\
&\lesssim \norm{\bphi_1-\beta\bar{\bphi}_1}(\norm{\bar{\bphi}_1}_{\infty}+\norm{\bar{\bphi}_2}_{\infty})  \lesssim  \SNR^{-1} \norm{\bar{\bphi}_1}_{\infty}
\end{aligned}
\end{equation}
where $\norm{\bar{\bphi}_1}_{\infty}=\norm{\bar{\bphi}_2}_{\infty}$.

\paragraph{Estimation of $E_3$:}  Note that
\[
E_3 = \frac{|\sigma - \bar{\sigma}|}{\sigma} \|\bar{\bphi}_1\|_{\infty} \lesssim \frac{\SNR^{-1}  \|\bar{\bphi}_1\|_{\infty}  }{1 - O(\SNR^{-1} )}
\]
which follows from~\eqref{eq:sigma_ratio}.

\paragraph{Estimation of $E_4$:}  The estimation of $E_4$ directly follows from~\eqref{eq:delta_w} that
\begin{equation}
E_4 = \frac{1}{\sigma} \norm{\BDelta\bar{\bphi}_1}_\infty \lesssim \frac{M\|\bar{\bphi}_1\|_{\infty}\sqrt{pn \log n}}{\eta p\sqrt{n}\|\br - \alpha\bone_n\|} \lesssim \SNR^{-1}  \|\bar{\bphi}_1\|_{\infty}
\end{equation}
with probability at least $1 - o(1)$ where $\sigma \approx \bar{\sigma}.$

\subsection{Proof of Theorem~\ref{thm:E2}: the estimation of $E_2$}\label{ss:proofE2}

Note that~\eqref{def:Tterm} implies that $E_2\leq T_1 + T_2$, and thus
we introduce the following lemma on the estimation of $T_1$ and $T_2$.
\begin{lemma}
\label{lem:Tterm}
Under Assumption~\ref{def:SNR}, then for $T_1$ and $T_2$ defined in \eqref{def:Tterm}, with probability $1-o(1)$,
\begin{align*}
T_{1} &= \sigma^{-1} \lvert \BDelta^\top_k(\bphi_1-\beta^{(k)}\bphi^{(k)}_1)\rvert \lesssim \SNR^{-2}\norm{\bphi_1^{(k)}}_{\infty} \lesssim \frac{\SNR^{-2}\|\bphi_1\|_{\infty}}{1 - O(\SNR^{-1})}, \\
T_{2} &=\sigma^{-1} \lvert \BDelta^\top_k(\beta^{(k)}\bphi^{(k)}_1  - \beta\bar{\bphi}_1)\rvert \lesssim \SNR^{-1}(\|\bphi_1\|_{\infty} + \|\bar{\bphi}_1\|_{\infty}).
\end{align*}
Then $E_2$ satisfies
\begin{align}
 E_2 \le T_1 + T_2 \lesssim \SNR^{-1}(\norm{\bphi_1}_{\infty} + \norm{\bar{\bphi}_1}_{\infty}).
\end{align}
\end{lemma}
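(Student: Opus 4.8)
The plan is to exploit the one feature that makes the leave-one-out matrices in~\eqref{eq:loo} useful: because $\BDelta^{(k)}$ agrees with $\BDelta$ off the $k$-th row and column and vanishes on them, the top eigenvector $\bphi_1^{(k)}$ of $\mi\BH^{(k)}$ is a function of $\{\Delta_{ij}\}_{i,j\neq k}$ only, hence \emph{statistically independent} of the $k$-th column $\BDelta_k$. This is precisely what lets me invoke the concentration bound~\eqref{eq:delta_w} with the weight vector $\bw=\bphi_1^{(k)}$, which is illegal for the original $\bphi_1$ since it depends on $\BDelta_k$. Before estimating $T_1,T_2$ I would record that under Assumption~\ref{def:SNR}, Weyl's inequality (Theorem~\ref{thm:weyl}) gives $\sigma\asymp\bar\sigma$ and a spectral gap of order $\bar\sigma$ for both $\mi\BH$ and $\mi\BH^{(k)}$, and that by definition $\SNR^{-1}=M\sqrt{pn\log n}/\bar\sigma$, with $\norm{\BDelta}\lesssim M\sqrt{pn\log n}$ from Lemma~\ref{lem:delta}.

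For $T_1$ I would apply Cauchy--Schwarz to get $T_1\le \sigma^{-1}\norm{\BDelta_k}\,\norm{\bphi_1-\beta^{(k)}\bphi_1^{(k)}}$ and then control the two factors separately. The column norm is handled by a scalar Bernstein bound exactly as in the proof of Lemma~\ref{lem:delta} (using $\E\Delta_{kj}^2\lesssim pM^2$), giving $\norm{\BDelta_k}\lesssim M\sqrt{pn}$, so that $\sigma^{-1}\norm{\BDelta_k}\lesssim \SNR^{-1}/\sqrt{\log n}$. The eigenvector discrepancy is the crux: by the Davis--Kahan theorem applied to $\mi\BH$ and $\mi\BH^{(k)}$, $\norm{\bphi_1-\beta^{(k)}\bphi_1^{(k)}}\lesssim \bar\sigma^{-1}\norm{(\BH-\BH^{(k)})\bphi_1^{(k)}}$, and since the perturbation $\BH-\BH^{(k)}$ touches only the $k$-th row and column, its action on $\bphi_1^{(k)}$ splits into the single entry $\BDelta_k^\top\bphi_1^{(k)}$ and the vector $\Delta_{ik}[\bphi_1^{(k)}]_k$ ($i\neq k$); the former is bounded by~\eqref{eq:delta_w} using the independence above and the latter by $\norm{\BDelta_k}\,\norm{\bphi_1^{(k)}}_\infty$, each of order $M\sqrt{pn\log n}\,\norm{\bphi_1^{(k)}}_\infty$. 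This yields $\norm{\bphi_1-\beta^{(k)}\bphi_1^{(k)}}\lesssim \SNR^{-1}\norm{\bphi_1^{(k)}}_\infty$, and multiplying the two factors gives $T_1\lesssim \SNR^{-2}\norm{\bphi_1^{(k)}}_\infty$.

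For $T_2$ the decisive point is that $|\beta|=|\beta^{(k)}|=1$, so the triangle inequality gives $T_2\le \sigma^{-1}\big(|\BDelta_k^\top\bphi_1^{(k)}|+|\BDelta_k^\top\bar{\bphi}_1|\big)$ \emph{without} any need to disentangle the dependence hidden in the scalar phases. Each inner product now pairs $\BDelta_k$ against a vector independent of it---$\bphi_1^{(k)}$ by construction and $\bar{\bphi}_1$ deterministically---so~\eqref{eq:delta_w} applies to both and bounds them by $M\sqrt{pn\log n}\,\norm{\bphi_1^{(k)}}_\infty$ and $M\sqrt{pn\log n}\,\norm{\bar{\bphi}_1}_\infty$; dividing by $\sigma\asymp\bar\sigma$ produces $T_2\lesssim \SNR^{-1}(\norm{\bphi_1^{(k)}}_\infty+\norm{\bar{\bphi}_1}_\infty)$.

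To finish I would transfer the bounds from $\bphi_1^{(k)}$ back to $\bphi_1$: the triangle inequality $\norm{\bphi_1^{(k)}}_\infty\le \norm{\bphi_1}_\infty+\norm{\bphi_1-\beta^{(k)}\bphi_1^{(k)}}$ with the Davis--Kahan estimate above gives $\norm{\bphi_1^{(k)}}_\infty\lesssim \norm{\bphi_1}_\infty/(1-O(\SNR^{-1}))$, after which $E_2\le T_1+T_2\lesssim \SNR^{-1}(\norm{\bphi_1}_\infty+\norm{\bar{\bphi}_1}_\infty)$, a union bound over $1\le k\le n$ upgrading the per-$k$ high-probability statements to the uniform control of $E_2=\sigma^{-1}\norm{\BDelta(\bphi_1-\beta\bar{\bphi}_1)}_\infty$. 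I expect the Davis--Kahan step inside $T_1$ to be the main obstacle: one must check that the relevant gap of $\mi\BH^{(k)}$ is genuinely $\Theta(\bar\sigma)$ and, more importantly, ensure that every concentration step contracts $\BDelta_k$ only against the independent vector $\bphi_1^{(k)}$ (never the dependent $\bphi_1$), since any leakage would reintroduce the statistical dependence and forfeit the $\sqrt{\log n}$ sharpness that underlies the improved $\Omega(n\log n)$ sample complexity.
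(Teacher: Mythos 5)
Your proposal is correct and follows essentially the same route as the paper's proof: Cauchy--Schwarz plus Davis--Kahan applied to the perturbation $\BH - \BH^{(k)}$ (supported on the $k$-th row and column, split into the entry $\BDelta_k^\top\bphi_1^{(k)}$ and the vector $\Delta_{jk}\phi_{1,k}^{(k)}$) for $T_1$, independence of $\BDelta_k$ from $\bphi_1^{(k)}$ and $\bar{\bphi}_1$ combined with Bernstein for $T_2$, and the same transfer $\|\bphi_1^{(k)}\|_\infty \le \|\bphi_1\|_\infty/(1-O(\SNR^{-1}))$ at the end. The only cosmetic differences are your sharper column bound $\|\BDelta_k\|\lesssim M\sqrt{pn}$, where the paper simply uses $\|\BDelta_k\|\le\|\BDelta\|\lesssim M\sqrt{pn\log n}$, and your explicit union bound over $k$, which the paper leaves implicit; neither changes the argument.
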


\begin{proof}[\bf Proof of Lemma~\ref{lem:Tterm}] The proof uses Davis-Kahan theorem and also~\eqref{eq:delta_w} in Lemma~\ref{lem:delta}.
\noindent{\bf Estimation of $T_1$:} 
\begin{equation}
\begin{aligned}
T_1 &=  \sigma^{-1} \lvert \BDelta^\top_k(\bphi_1-\beta^{(k)}\bphi^{(k)}_1)\rvert \lesssim \frac{\norm{\BDelta}}{\sigma} \norm{\bphi_1-\beta^{(k)}\bphi^{(k)}_1} \lesssim  \SNR^{-1}\frac{\norm{\lp \BDelta - \BDelta^{(k)} \rp \bphi_1^{(k)}}}{\bar{\sigma} - (\|\BDelta\| + \|\BDelta^{(k)}\| ) }.
\end{aligned}
\end{equation}
Note that 
\[
\ls \lp \BDelta - \BDelta^{(k)} \rp \bphi_1^{(k)} \rs_{j} = \begin{cases}
    \Delta_{jk}\phi_{1,k}^{(k)}, \quad {j \neq k},\\
    -\BDelta_{k}^\top\bphi_1^{(k)}, \quad {j = k},
\end{cases}
\]
where $\BDelta_k$ is the $k$-th column of $\BDelta$ and $\BDelta$ is anti-symmetric.
As a result, we have
\[
\norm{\lp \BDelta - \BDelta^{(k)} \rp \bphi_1^{(k)}} \leq |\phi_{1,k}^{(k)}|\cdot\|\BDelta_k \| + \|\BDelta_k^{\top}\bphi_1^{(k)}\| \lesssim M\sqrt{pn\log n}\cdot \norm{\bphi_1^{(k)}}_{\infty}
\]
which follows from~\eqref{eq:delta_w} in Lemma~\ref{lem:delta}.
Thus, 
\begin{equation}\label{eq:phiphik}
\|\bphi_1 - \beta^{(k)}\bphi_1^{(k)}\| \lesssim \frac{M\sqrt{pn\log n}\cdot\norm{\bphi_1^{(k)}}_{\infty}}{\bar{\sigma} - 2\|\BDelta\|} \lesssim \frac{M\sqrt{pn\log n}\cdot\norm{\bphi_1^{(k)}}_{\infty}}{\bar{\sigma}} \lesssim \SNR^{-1} \|\bphi_1^{(k)}\|_{\infty}.
\end{equation}
As a result, it holds that
\begin{equation}\label{eq:phiphik2}
\|\bphi_1^{(k)}\|_{\infty} \leq \frac{\|\bphi_1\|_{\infty}}{1 - O(\SNR^{-1})}.
\end{equation}
Finally, we have $T_1$ bounded by 
\begin{equation}\label{eq:T1}
T_1 \lesssim \SNR^{-1}\frac{\norm{\lp \BDelta - \BDelta^{(k)} \rp \bphi_1^{(k)}}}{\bar{\sigma} - 2\|\BDelta\| } \lesssim \SNR^{-1}\cdot \frac{M\sqrt{pn\log n}\cdot \norm{\bphi_1^{(k)}}_{\infty}}{\bar{\sigma}}  \lesssim \SNR^{-2}\|\bphi_1\|_{\infty}.
\end{equation}

\noindent{\bf Estimation of $T_2$:} 
For $T_2$, we can directly apply~\eqref{eq:delta_w} in Lemma \ref{lem:delta} since $\bar{\bphi}_1$ and $\bphi^{(k)}_1 $ are independent of $\BDelta_k.$
\begin{equation*}
T_2 = \frac{1}{\sigma} \lvert \BDelta^\top_k(\beta^{(k)}\bphi^{(k)}_1  - \beta \bar{\bphi}_1)\rvert 
\leq \frac{1}{\sigma} ( \|\BDelta\bphi_1^{(k)}\|_{\infty} +  \|\BDelta\bar{\bphi}_1 \|_{\infty}) \lesssim \SNR^{-1} ( \|\bphi_1^{(k)}\|_{\infty} +  \|\bar{\bphi}_1 \|_{\infty} ).
\end{equation*}
By~\eqref{eq:phiphik2}, we have
\begin{equation}\label{eq:T2}
T_2 \lesssim \SNR^{-1} ( \|\bphi_1\|_{\infty} +  \|\bar{\bphi}_1 \|_{\infty} )
\end{equation}
provided that $\SNR\gtrsim 1.$
\end{proof}

\section{Proof of Theorem \ref{thm:mainNHER}: normalized spectral ranking}\label{s:B}

\subsection{The expected measurement matrix}\label{ss:Hstar_norm}
We define the degree matrix of the noisy measurement matrix $\BD$ and its expectation $\bar{\BD}$ as
\begin{equation}\label{def:ED}
D_{ii} = \sum_{j=1}^n |H_{ij}|, \qquad \bar{D}_{ii} = \E\sum_{j=1}^n |H_{ij}|, \quad 1 \le i \le n.
\end{equation}

Algorithm~\ref{algo:NHER} is based on the left normalized measurement:
$\BH_{\rm L} = \BD^{-1} \BH.$ 
To understand the eigenvectors of $\BH_{\rm L}$, we first look into its population counterpart:
\[
\bar{\BH}_{\rm L} = (\E \BD)^{-1}\E \BH= \bar{\BD}^{-1} \bar{\BH}.
\]
Note that the eigenvalues/eigenvectors of $\BH_{\rm L}$ and $\bar{\BH}_{\rm L}$ are closely related to its symmetrized version:
\[
\BH_{\rm sym} : = \BD^{-1/2} \BH \BD^{-1/2},~~~\bar{\BH}_{\rm sym} := \bar{\BD}^{-1/2} \bar{\BH} \bar{\BD}^{-1/2}.
\]
The symmetric normalized measurement matrix is a perturbed version of $\bar{\BH}_{\rm sym}$ with the perturbation being
\begin{equation}\label{def:Delta_sym}
 \BDelta_{\rm sym} = \BD^{-1/2} \BH \BD^{-1/2}  - \bar{\BD}^{-1/2} \bar{\BH} \bar{\BD}^{-1/2}.
\end{equation}
The SVD of the rank-2 expected measurement matrix $\bar{\BH}_{\rm sym}$ is given by
\begin{equation}\label{eq:barH_sym_svd}
\bar{\BH}_{\rm sym} = \bar{\BD}^{-1/2}(\br \bone_n^{\top} - \bone_n\br^{\top})\bar{\BD}^{-1/2}= \bar{\xi} \lp \bar{\bv}_1\bar{\bv}_2^{\top} - \bar{\bv}_2 \bar{\bv}_1^{\top} \rp
\end{equation}
where 
\[
\begin{aligned}
& \bar{\xi} = \eta p \|\bar{\BD}^{-1/2}(\br - \gamma\bone_n)\|\norm{\bar{\BD}^{-1/2}\bone_n} \\ %\approx \frac{\eta p \sqrt{n}\|\br - \alpha \bone_n\|}{pnM} = \frac{\eta\|\br - \alpha \bone_n\|}{\sqrt{n}M}, \\ 
& \bar{\bv}_1 = -\frac{\bar{\BD}^{-1/2}\bone_n}{\norm{\bar{\BD}^{-1/2}\bone_n}}, \quad \bar{\bv}_2 = \frac{\bar{\BD}^{-1/2}(\br - \gamma\bone_n)}{\norm{\bar{\BD}^{-1/2}(\br - \gamma\bone_n)}}, \quad \gamma = \frac{\br^\top \bar{\BD}^{-1}\bone_n}{\bone_n^\top \bar{\BD}^{-1}\bone_n}.
\end{aligned}
\]
The parameter $\gamma$ is chosen so that $\lag \bar{\bv}_1, \bar{\bv}_2\rag = 0.$

Since $\bar{\BH}_{\rm sym}$ is anti-symmetric, $\mi\bar{\BH}_{\rm sym}$ is a Hermitian matrix whose spectral decomposition is given by
\[
\mi\bar{\BH}_{\rm sym} = \bar{\xi} \lp \bar{\bvphi}_1 \bar{\bvphi}_1^{H} - \bar{\bvphi}_2 \bar{\bvphi}_2^{H} \rp \quad {\rm where} \quad \bar{\bvphi}_1 = \frac{1}{\sqrt{2}}(\bar{\bv}_2 + \mi\bar{\bv}_1), \quad \bar{\bvphi}_2 = \frac{1}{\sqrt{2}}(\bar{\bv}_2 - \mi\bar{\bv}_1).
\]
As a result, $\mi\bar{\BH}_{\rm L}$ equals 
\[
\mi\bar{\BH}_{\rm L} = \mi \bar{\BD}^{-1/2}\bar{\BH}_{\rm sym}\bar{\BD}^{1/2}= \bar{\xi} \lp \bar{\BD}^{-1/2}\bar{\bvphi}_1 \bar{\bvphi}_1^{H}\bar{\BD}^{1/2} - \bar{\BD}^{-1/2}\bar{\bvphi}_2 \bar{\bvphi}_2^{H}\bar{\BD}^{1/2} \rp
\]
and its normalized eigenvectors are
\begin{align*}
\bar{\bpsi}_1 = \frac{\bar{\BD}^{-1/2}\bar{\bvphi}_1}{\|\bar{\BD}^{-1/2}\bar{\bvphi}_1\|}, \text{ with eigenvalue }\bar{\xi}, \qquad \bar{\bpsi}_2 = \frac{\bar{\BD}^{-1/2}\bar{\bvphi}_2}{\|\bar{\BD}^{-1/2}\bar{\bvphi}_2\|}, \text{ with eigenvalue }-\bar{\xi}.
\end{align*}

\subsection{Proof of Theorem~\ref{thm:mainNHER}}
Similarly, we need a proper measure of the signal-to-noise ratio (SNR). We define the SNR for the normalized spectral ranking as
\begin{equation}\label{def:SNRN}
\SNR_N(\eta,p,n,\br,M) = \frac{\bar{\xi}}{\|\BDelta_{\rm  sym}\|}.
\end{equation}
For simplicity, we abbreviate $\SNR_N(\eta,p,n,\br,M)$ to $\SNR_N.$
To better understand this $\SNR_N$, we introduce the following lemma on the degree $\BD$ and the normalized noise $\|\BDelta_{\rm sym}\|$.
\begin{lemma}\label{lem:delta_sym}
Under Assumption~\ref{def:SNR2}, then with probability $1-o(1)$, we have
\begin{align*}
& \lambda pnM\I_n \preceq \bar{\BD}\preceq 2pnM, \\
& \norm{\BD - \bar{\BD}} \lesssim M\sqrt{pn\log(n)},~~\| \bar{\BD}^{-1}\BD - \I_n \| \leq 1/2 \\
& \kappa(\bar{\BD}) \asymp \kappa(\BD) \lesssim \frac{1}{\lambda}, \\
& \norm{\BDelta_{\rm sym}}   \lesssim \frac{1}{\lambda^2}\sqrt{\frac{\log n}{pn}}.
\end{align*}
\end{lemma}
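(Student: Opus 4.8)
The plan is to dispatch the four claims in increasing order of difficulty, since the later ones reuse the earlier estimates. The first claim, $\lambda pnM\,\I_n \preceq \bar{\BD} \preceq 2pnM\,\I_n$, is essentially definitional: $\bar{\BD}$ is diagonal, and the computation of $\bar D_{ii}$ preceding~\eqref{def:lambda} already gives $\lambda pnM \le \bar D_{ii}\le 2pnM$ for every $i$. The bound $\kappa(\bar{\BD})\lesssim 1/\lambda$ then follows at once by taking the ratio of the extreme diagonal entries, exactly as noted after~\eqref{def:lambda}.

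The second claim is a concentration statement. Since $\BD$ and $\bar{\BD}$ are both diagonal, $\norm{\BD-\bar{\BD}} = \max_i |D_{ii}-\bar D_{ii}|$, and each $D_{ii}=\sum_{j}|H_{ij}|$ is a sum of $n$ independent terms, each bounded by $2M$ (since $|r_i-r_j|\le 2M$ and $|Z_{ij}|\le M$) with variance $\E|H_{ij}|^2 = \eta p(r_i-r_j)^2 + (1-\eta)pM^2/3 \lesssim pM^2$. First I would apply the scalar Bernstein inequality (Theorem~\ref{thm:bernstein}) to each $D_{ii}$, whose total variance is $\lesssim pnM^2$, and then take a union bound over $1\le i\le n$ to get $\norm{\BD-\bar{\BD}}\lesssim M\sqrt{pn\log n}+M\log n\lesssim M\sqrt{pn\log n}$ under $pn\gtrsim\log n$. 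Dividing by $\min_i\bar D_{ii}\ge \lambda pnM$ yields $\norm{\bar{\BD}^{-1}\BD-\I_n}\lesssim \lambda^{-1}\sqrt{\log n/(pn)}$, and Assumption~\ref{def:SNR2} (namely $\sqrt{pn/\log n}\gtrsim\lambda^{-3}$) makes the right-hand side $\lesssim\lambda^2$, which the constant in the assumption can force below $1/2$. From $D_{ii}/\bar D_{ii}\in[1/2,3/2]$ it then follows that $D_{ii}\asymp\bar D_{ii}$ entrywise, hence $\kappa(\BD)\asymp\kappa(\bar{\BD})$, completing the third claim.

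The fourth claim is the crux. The natural decomposition is to split $\BH=\bar{\BH}+\BDelta$ and write
\[
\BDelta_{\rm sym} = \underbrace{\BD^{-1/2}\BDelta\BD^{-1/2}}_{\text{noise}} + \underbrace{\big(\BD^{-1/2}\bar{\BH}\BD^{-1/2} - \bar{\BD}^{-1/2}\bar{\BH}\bar{\BD}^{-1/2}\big)}_{\text{signal mismatch}}.
\]
The noise term is handled directly: $\norm{\BD^{-1/2}\BDelta\BD^{-1/2}} \le \norm{\BDelta}/\min_i D_{ii}\lesssim M\sqrt{pn\log n}/(\lambda pnM)=\lambda^{-1}\sqrt{\log n/(pn)}$, using $\norm{\BDelta}\lesssim M\sqrt{pn\log n}$ from Lemma~\ref{lem:delta} and $\min_i D_{ii}\gtrsim\lambda pnM$ from the second claim. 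For the signal mismatch I would set $\BE:=\BD^{-1/2}-\bar{\BD}^{-1/2}$ and expand the difference as $\BE\bar{\BH}\bar{\BD}^{-1/2} + \bar{\BD}^{-1/2}\bar{\BH}\BE + \BE\bar{\BH}\BE$.

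The hard part will be the two cross terms, which determine the final $\lambda^{-2}$ rate. Bounding $\norm{\BE}=\max_i|D_{ii}^{-1/2}-\bar D_{ii}^{-1/2}|$ through the identity $|D_{ii}^{-1/2}-\bar D_{ii}^{-1/2}| = |D_{ii}-\bar D_{ii}|/\big(D_{ii}^{1/2}\bar D_{ii}^{1/2}(D_{ii}^{1/2}+\bar D_{ii}^{1/2})\big)$, together with the degree lower bound $D_{ii},\bar D_{ii}\gtrsim\lambda pnM$ and $|D_{ii}-\bar D_{ii}|\lesssim M\sqrt{pn\log n}$, gives $\norm{\BE}\lesssim M\sqrt{pn\log n}/(\lambda pnM)^{3/2}$. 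Combined with $\norm{\bar{\BH}}=\bar\sigma\lesssim pnM$ and $\norm{\bar{\BD}^{-1/2}}\le(\lambda pnM)^{-1/2}$, each cross term is bounded by $\norm{\BE}\,\norm{\bar{\BH}}\,\norm{\bar{\BD}^{-1/2}}\lesssim \lambda^{-2}\sqrt{\log n/(pn)}$, precisely the claimed rate. The remaining term obeys $\norm{\BE\bar{\BH}\BE}\le\norm{\BE}^2\norm{\bar{\BH}}\lesssim\lambda^{-3}\log n/(pn)$, which is lower order, being smaller than the main bound by a factor $\lambda^{-1}\sqrt{\log n/(pn)}\lesssim\lambda^2\lesssim1$ under Assumption~\ref{def:SNR2}; the noise term $\lambda^{-1}\sqrt{\log n/(pn)}$ is likewise dominated since $\lambda\lesssim1$ in the regime of interest. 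Summing the pieces yields $\norm{\BDelta_{\rm sym}}\lesssim\lambda^{-2}\sqrt{\log n/(pn)}$. The only genuine subtlety is carefully tracking the powers of $\lambda$ through the degree-perturbation estimates so that $\lambda^{-2}$, rather than a larger power, emerges, and checking that each auxiliary smallness claim is licensed by Assumption~\ref{def:SNR2}.
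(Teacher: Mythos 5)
Your proposal is correct and follows essentially the same route as the paper's proof: the same split of $\BDelta_{\rm sym}$ into the noise part $\BD^{-1/2}\BDelta\BD^{-1/2}$ and the signal mismatch, the same Bernstein-plus-union-bound degree concentration, and the same scalar identity controlling $\norm{\BD^{-1/2}-\bar{\BD}^{-1/2}}$ via $\norm{\BD-\bar{\BD}}$. The only cosmetic difference is that you expand the mismatch around $\bar{\BD}^{-1/2}$ into three terms (with a lower-order quadratic remainder $\BE\bar{\BH}\BE$), whereas the paper uses the two-term mixed telescoping $\BD^{-1/2}\bar{\BH}(\BD^{-1/2}-\bar{\BD}^{-1/2})+(\BD^{-1/2}-\bar{\BD}^{-1/2})\bar{\BH}\bar{\BD}^{-1/2}$; both yield the same $\lambda^{-2}\sqrt{\log n/(pn)}$ rate.
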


Note that
\begin{equation}\label{eq:SNR_norm}
 \bar{\xi}  = \eta p \|\bar{\BD}^{-1/2}(\br - \gamma\bone_n)\|\norm{\bar{\BD}^{-1/2}\bone_n} \gtrsim \frac{\eta p}{pn M} \| \br - \gamma\bone_n\|\norm{\bone_n} \geq \frac{\bar{\sigma}}{pn M}
\end{equation}
where $\|\br - \gamma\bone_n\|\geq \|\br - \alpha\bone_n\|.$ Also we note that
\begin{equation}\label{eq:barxi_low}
\bar{\xi} \lesssim \eta p \cdot \frac{\|\br - \gamma \bone_n\| \|\bone_n\|}{\lambda p nM} \lesssim \eta p \cdot \frac{nM}{\lambda pnM} \lesssim \frac{\eta}{\lambda}
\end{equation}
where $\|\br - \gamma \bone_n\|\lesssim \sqrt{n}M.$
Then $\SNR_N$ is lower bounded by
\begin{equation}\label{def:SNR_norm}
\SNR_N(\eta,p,n,\br,M) \gtrsim  \lambda^2\bar{\xi} \sqrt{\frac{pn}{\log n}}\gtrsim\frac{\lambda^2\bar{\sigma}}{pnM} \cdot \sqrt{\frac{pn}{\log n}} = \lambda^2\sqrt{\frac{\eta^2 pn}{\log n}}\cdot \frac{\|\br - \alpha\bone_n\|}{M\sqrt{n}} =\lambda^2 \SNR
\end{equation}
where $\bar{\sigma} = \eta p \sqrt{n}\|\br - \alpha\bone_n\|.$

\vskip0.25cm

\paragraph{Error decomposition.} Now we proceed to analyze the $\ell_{\infty}$-norm perturbation bound between $\bpsi_1$ and $\bar{\bpsi}_1$. 
Similar to the unnormalized case, $\|\bpsi_1 - \beta\bar{\bpsi}_1\|_{\infty}$ can be decomposed into
\begin{equation}\label{eq:error_norm}
\begin{aligned}
\norm{\bpsi_1 - \beta\bar{\bpsi}_1}_\infty & \leq \|\bpsi_1 - \beta\widetilde{\bpsi}_1\|_{\infty} + \| \widetilde{\bpsi}_1 - \bar{\bpsi}_1 \|_{\infty} \le \norm{\bpsi_1 - \frac{\mi\beta\BD^{-1}\BH\bar{\bpsi}_1}{\xi}}_\infty + \norm{ \frac{\mi\BD^{-1}\BH\bar{\bpsi}_1}{\xi} - \bar{\bpsi}_1}_\infty \\
& \le \underbrace{\frac{1}{\xi} \norm{\BD^{-1}\bar{\BH}(\bpsi_1-\beta\bar{\bpsi}_1)}_\infty}_{E_1} + \underbrace{\frac{1}{\xi} \norm{ \BD^{-1}\BDelta(\bpsi_1-\beta\bar{\bpsi}_1)}_\infty}_{E_2} \\
&\qquad + \underbrace{\frac{|\bar{\xi}-\xi|}{\xi}\norm{\bar{\bpsi}_1}_{\infty}}_{E_3} + \underbrace{ \frac{1}{\xi} \norm{\lp \BD^{-1}\BH -\bar{\BD}^{-1}\bar{\BH}  \rp\bar{\bpsi}_1}_\infty}_{E_4},
\end{aligned}
\end{equation}
as shown in~\eqref{eq:psi_dec1}.
Then we will prove the following theorems, which will be used to prove Theorem~\ref{thm:mainNHER}. The proofs of Theorem~\ref{thm:E134_norm} and~\ref{thm:E2_norm} are deferred to Section~\ref{ss:proofE134N} and~\ref{ss:proofE2N} respectively.
\begin{theorem}
\label{thm:E134_norm}
Under Assumption~\ref{def:SNR2}, it holds with probability $1-o(1)$,
\begin{equation}
\begin{aligned}
E_1 & = \frac{1}{\xi} \norm{ \BD^{-1}\bar{\BH}(\bpsi_1-\beta\bar{\bpsi}_1)}_\infty \lesssim \lambda^{-3}\SNR^{-1} \|\bar{\bpsi}_1\|_{\infty},\\
E_3 &= \frac{|\bar{\xi}-\xi|}{\xi}\norm{\bar{\bpsi}_1}_{\infty} \lesssim \lambda^{-2}\SNR^{-1} \|\bar{\bpsi}_1\|_{\infty}, \\
E_4 &= \frac{1}{\xi} \norm{\lp \BD^{-1}\BH -\bar{\BD}^{-1}\bar{\BH}  \rp\bar{\bpsi}_1}_\infty\lesssim  \lambda^{-1}\SNR^{-1}\|\bar{\bpsi}_1\|_{\infty}.
\end{aligned}
 \end{equation}
\end{theorem}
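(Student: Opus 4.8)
The plan is to follow the proof of Theorem~\ref{thm:E134} for the unnormalized case almost verbatim, the difference being that each factor $\BD^{-1}$ or $\bar{\BD}^{-1}$ must now be controlled through Lemma~\ref{lem:delta_sym}, and it is precisely these bounds that generate the extra powers of $\lambda$. As a preliminary step I would record an $\ell_2$ bound on $\norm{\bpsi_1-\beta\bar{\bpsi}_1}$. Since $\BD^{-1}\BH$ and $\bar{\BD}^{-1}\bar{\BH}$ are not symmetric, I pass to the symmetrized matrices $\mi\BH_{\rm sym}$ and $\mi\bar{\BH}_{\rm sym}$ from Section~\ref{ss:Hstar_norm}, whose top eigenvectors $\bvphi_1,\bar{\bvphi}_1$ satisfy the Davis--Kahan bound $\min_{|s|=1}\norm{\bvphi_1-s\bar{\bvphi}_1}\lesssim\norm{\BDelta_{\rm sym}}/\bar{\xi}=\SNR_N^{-1}$. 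Converting back via $\bpsi_1\propto\BD^{-1/2}\bvphi_1$ and $\bar{\bpsi}_1\propto\bar{\BD}^{-1/2}\bar{\bvphi}_1$ costs only the relative degree fluctuation $\tfrac{1}{\lambda}\sqrt{\log n/(pn)}$ of Lemma~\ref{lem:delta_sym}, which is itself $\lesssim\lambda^{-1}\SNR^{-1}$ and hence dominated; together with $\SNR_N\gtrsim\lambda^2\SNR$ from~\eqref{def:SNR_norm} this gives $\norm{\bpsi_1-\beta\bar{\bpsi}_1}\lesssim\lambda^{-2}\SNR^{-1}$.

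For $E_3$ the estimate is immediate: Weyl's inequality (Theorem~\ref{thm:weyl}) applied to the shared spectrum of the symmetrized matrices yields $|\xi-\bar{\xi}|\le\norm{\BDelta_{\rm sym}}$, and $\SNR_N\gtrsim1$ forces $\xi\asymp\bar{\xi}$, so $E_3\lesssim\SNR_N^{-1}\norm{\bar{\bpsi}_1}_\infty\lesssim\lambda^{-2}\SNR^{-1}\norm{\bar{\bpsi}_1}_\infty$. For $E_1$ I exploit the rank-$2$ SVD $\bar{\BH}=\bar{\sigma}(\bar{\bu}_1\bar{\bu}_2^\top-\bar{\bu}_2\bar{\bu}_1^\top)$ in~\eqref{eqn:svd of H_star}: applying it to $\bpsi_1-\beta\bar{\bpsi}_1$ and using $\norm{\BD^{-1}\bv}_\infty\le d_{\min}^{-1}\norm{\bv}_\infty$ with $d_{\min}\gtrsim\lambda pnM$ gives $E_1\lesssim\tfrac{\bar{\sigma}}{\xi d_{\min}}(\norm{\bar{\bu}_1}_\infty+\norm{\bar{\bu}_2}_\infty)\norm{\bpsi_1-\beta\bar{\bpsi}_1}$. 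Here $\tfrac{\bar{\sigma}}{\xi d_{\min}}\lesssim\lambda^{-1}$ follows from $\bar{\xi}\gtrsim\bar{\sigma}/(pnM)$ in~\eqref{eq:SNR_norm}; combining with the $\ell_2$ bound $\lambda^{-2}\SNR^{-1}$ and the comparison $\norm{\bar{\bu}_i}_\infty\lesssim\norm{\bar{\bpsi}_1}_\infty$ yields $E_1\lesssim\lambda^{-3}\SNR^{-1}\norm{\bar{\bpsi}_1}_\infty$.

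For $E_4$ I split $\BD^{-1}\BH-\bar{\BD}^{-1}\bar{\BH}=\BD^{-1}\BDelta+(\BD^{-1}-\bar{\BD}^{-1})\bar{\BH}$. The noise piece $\tfrac{1}{\xi}\norm{\BD^{-1}\BDelta\bar{\bpsi}_1}_\infty\le\tfrac{1}{\xi d_{\min}}\norm{\BDelta\bar{\bpsi}_1}_\infty$ is handled by~\eqref{eq:delta_w} in Lemma~\ref{lem:delta}, which applies because $\bar{\bpsi}_1$ is a deterministic function of $\bar{\BD}$ and $\br$, hence independent of $\BDelta$; with $\xi d_{\min}\gtrsim\lambda\bar{\sigma}$ this returns $\lambda^{-1}\SNR^{-1}\norm{\bar{\bpsi}_1}_\infty$. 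The degree-fluctuation piece uses that $\BD-\bar{\BD}$ is diagonal with $\norm{\BD-\bar{\BD}}\lesssim M\sqrt{pn\log n}$ and $D_{ii},\bar{D}_{ii}\gtrsim\lambda pnM$, so that $\norm{\BD^{-1}-\bar{\BD}^{-1}}\lesssim M\sqrt{pn\log n}/(\lambda pnM)^2$; bounding $\norm{\bar{\BH}\bar{\bpsi}_1}_\infty$ by the rank-$2$ structure and inserting $\bar{\sigma}/\xi\lesssim pnM$ again gives $\lesssim\lambda^{-1}\SNR^{-1}\norm{\bar{\bpsi}_1}_\infty$.

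I expect the main obstacle to be the bookkeeping of the $\lambda$ powers across the degree normalization, in two respects. First, the conversion between the symmetric eigenvectors $\bvphi_1,\bar{\bvphi}_1$ and the normalized ones $\bpsi_1,\bar{\bpsi}_1$, and the comparison $\norm{\bar{\bu}_i}_\infty\lesssim\norm{\bar{\bpsi}_1}_\infty$ used in $E_1$, are delicate because $\bar{\BD}^{-1/2}$ is not a scalar and reshapes the $\ell_\infty$ geometry; one must check that each such conversion loses at most the claimed power of $\lambda$ and no more. Second, in $E_4$ the matrix $\BD$ is itself random and statistically coupled to $\BDelta$ through the degrees, so the degree concentration $\norm{\BD-\bar{\BD}}\lesssim M\sqrt{pn\log n}$ of Lemma~\ref{lem:delta_sym} and the noise bound~\eqref{eq:delta_w} must be combined on a single high-probability event rather than treated as independent sources of randomness.
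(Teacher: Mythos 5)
Your treatment of $E_3$ and of the noise half of $E_4$ (the term $\frac{1}{\xi}\norm{\BD^{-1}\BDelta\bar{\bpsi}_1}_\infty$, handled via~\eqref{eq:delta_w} and $\xi d_{\min}\gtrsim\lambda\bar{\sigma}$) is correct and coincides with the paper's argument. The genuine problems are in $E_1$ and in the degree half of $E_4$, and they share one root cause: you factor through the SVD of $\bar{\BH}$ and its singular vectors $\bar{\bu}_1,\bar{\bu}_2$, which live in the \emph{unnormalized} geometry, and you then need the comparison $\norm{\bar{\bu}_i}_\infty\lesssim\norm{\bar{\bpsi}_1}_\infty$. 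That comparison is not free: $\bar{\bu}_2\propto\br-\alpha\bone_n$ while $\Re\bar{\bpsi}_1\propto\bar{\BD}^{-1}(\br-\gamma\bone_n)$, and the diagonal reweighting can shrink the largest entry by a factor as large as $\kappa(\bar{\BD})\asymp\lambda^{-1}$ (on top of the $\alpha\neq\gamma$ mismatch), so in the worst case it costs an extra $\lambda^{-1}$. Your $\ell_2$ bound is also mis-accounted: the conversion $\bv\mapsto\bar{\BD}^{-1/2}\bv/\norm{\bar{\BD}^{-1/2}\bv}$ amplifies the distance $\norm{\bvphi_1-s\bar{\bvphi}_1}$ \emph{multiplicatively} by up to $\kappa(\bar{\BD}^{1/2})\asymp\lambda^{-1/2}$; it is not an additive error dominated by $\SNR_N^{-1}$. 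Done correctly, your symmetrization route gives $\norm{\bpsi_1-\beta\bar{\bpsi}_1}\lesssim\lambda^{-1/2}\SNR_N^{-1}\lesssim\lambda^{-5/2}\SNR^{-1}$, i.e.\ exactly the paper's~\eqref{eq:phi1_dk} (obtained there via the generalized Davis--Kahan Theorem~\ref{thm:dk}), not $\lambda^{-2}\SNR^{-1}$. Feeding the corrected $\ell_2$ bound into your $E_1$ chain with prefactor $\bar{\sigma}/(\xi d_{\min})\lesssim\lambda^{-1}$ gives at best $\lambda^{-7/2}\SNR^{-1}\norm{\bar{\bpsi}_1}_\infty$, and $\lambda^{-9/2}$ if the $\bar{\bu}_i$ comparison costs its worst-case factor; either way you overshoot the claimed $\lambda^{-3}$.

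The missing idea, which is how the paper stays at $\lambda^{-3}$, is to never separate $\bar{\BD}^{-1}$ from $\bar{\BH}$: write $\BD^{-1}\bar{\BH}=(\BD^{-1}\bar{\BD})\,\bar{\BD}^{-1}\bar{\BH}$ and use the spectral decomposition $\mi\bar{\BD}^{-1}\bar{\BH}=\bar{\xi}\lp\bar{\BD}^{-1/2}\bar{\bvphi}_1\bar{\bvphi}_1^H\bar{\BD}^{1/2}-\bar{\BD}^{-1/2}\bar{\bvphi}_2\bar{\bvphi}_2^H\bar{\BD}^{1/2}\rp$, whose left factors are proportional to $\bar{\bpsi}_i$. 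Each term then contributes $\norm{\bar{\BD}^{-1/2}\bar{\bvphi}_i}_\infty\norm{\bar{\bvphi}_i^H\bar{\BD}^{1/2}}\le\sqrt{\bar{d}_{\max}/\bar{d}_{\min}}\,\norm{\bar{\bpsi}_1}_\infty\lesssim\lambda^{-1/2}\norm{\bar{\bpsi}_1}_\infty$: a prefactor $\sqrt{\kappa}$ rather than your $\kappa$, expressed directly in $\norm{\bar{\bpsi}_1}_\infty$ with no detour through $\bar{\bu}_i$, so that $\lambda^{-1/2}\cdot\lambda^{-5/2}=\lambda^{-3}$ closes. The same device fixes $E_4$: instead of $\norm{\BD^{-1}-\bar{\BD}^{-1}}\cdot\norm{\bar{\BH}\bar{\bpsi}_1}_\infty$, which costs $(\lambda pnM)^{-2}$ against only one compensating factor $\bar{\sigma}\lesssim pnM\,\bar{\xi}$ and hence lands at $\lambda^{-2}$ (times the unjustified $\bar{\bu}_i$ comparison), use the eigen-equation $\bar{\BH}\bar{\bpsi}_1=-\mi\bar{\xi}\bar{\BD}\bar{\bpsi}_1$, so that $(\BD^{-1}-\bar{\BD}^{-1})\bar{\BH}\bar{\bpsi}_1=-\mi\bar{\xi}(\BD^{-1}\bar{\BD}-\I_n)\bar{\bpsi}_1$ costs a single $\norm{\BD-\bar{\BD}}/d_{\min}\lesssim\lambda^{-1}\sqrt{\log n/(pn)}$ and yields the stated $\lambda^{-1}\SNR^{-1}\norm{\bar{\bpsi}_1}_\infty$.
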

\begin{theorem}\label{thm:E2_norm}
Under Assumption~\ref{def:SNR2}, it holds with probability $1-o(1)$,
\begin{equation}
E_2 = \frac{1}{\xi} \norm{\BD^{-1}\BDelta(\bpsi_1-\beta\bar{\bpsi}_1)}_\infty  \lesssim \lambda^{-3}\SNR^{-1}(\|\bpsi_1\|_{\infty} + \|\bar{\bpsi}_1\|_{\infty}).
\end{equation}
\end{theorem}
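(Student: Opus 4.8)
The plan is to follow the unnormalized argument for Theorem~\ref{thm:E2} step for step, keeping track of the extra factors that the degree normalization introduces. Because $\BD$ is diagonal, I first peel it off: for any $\bw$ one has $\norm{\BD^{-1}\BDelta\bw}_\infty \le d_{\min}^{-1}\max_k|\BDelta_k^\top\bw|$ with $d_{\min}=\min_k D_{kk}$, so that
\[
E_2 \le \frac{1}{\xi\,d_{\min}}\max_{1\le k\le n}\bigl|\BDelta_k^\top(\bpsi_1-\beta\bar{\bpsi}_1)\bigr|.
\]
Lemma~\ref{lem:delta_sym} gives $d_{\min}\gtrsim\lambda pnM$ (the deviation $\norm{\BD-\bar{\BD}}\lesssim M\sqrt{pn\log n}$ is lower order than $\bar{D}_{ii}\gtrsim\lambda pnM$ under~\eqref{def:SNR2}), and Weyl's inequality with~\eqref{eq:SNR_norm}--\eqref{def:SNR_norm} gives $\xi\gtrsim\bar{\xi}\gtrsim\bar{\sigma}/(pnM)$. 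These two estimates are what convert a bound stated through the normalized quantities $\SNR_N$ and $\BDelta_{\rm sym}$ into the advertised $\lambda^{-3}\SNR^{-1}$, via $\SNR_N\gtrsim\lambda^2\SNR$.

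To break the dependence between $\BDelta_k$ and $\bpsi_1-\beta\bar{\bpsi}_1$, I use the leave-one-out sequence of~\eqref{eq:loo}, with $\bpsi_1^{(k)}$ the top eigenvector of $\mi\bar{\BD}^{-1}\BH^{(k)}$. Since $\BH^{(k)}$ replaces the $k$-th row and column of the noise by their noiseless values, $\bpsi_1^{(k)}$ (like the deterministic $\bar{\bpsi}_1$) is independent of the column $\BDelta_k$. Inserting $\beta^{(k)}\bpsi_1^{(k)}$ yields the split $T_1+T_2$ of~\eqref{def:Tterm_norm}, and the whole theorem amounts to the two bounds collected in Lemma~\ref{lem:Tterm_norm}.

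The term $T_2=(\xi d_{\min})^{-1}|\BDelta_k^\top(\beta^{(k)}\bpsi_1^{(k)}-\beta\bar{\bpsi}_1)|$ is the easy one: independence lets me freeze $\bpsi_1^{(k)}$ and $\bar{\bpsi}_1$ and apply~\eqref{eq:delta_w} (equivalently a scalar Bernstein bound with a union over $k$) to each of $|\BDelta_k^\top\bpsi_1^{(k)}|$ and $|\BDelta_k^\top\bar{\bpsi}_1|$, giving $\lesssim M\sqrt{pn\log n}\,(\norm{\bpsi_1^{(k)}}_\infty+\norm{\bar{\bpsi}_1}_\infty)$; dividing by $\xi d_{\min}$ and using $\sqrt{\log n}/(\xi\sqrt{pn})\lesssim\SNR^{-1}$ produces the $\SNR^{-1}$ rate up to the degree factor. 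The technical term is $T_1\le(\xi d_{\min})^{-1}\norm{\BDelta_k}\,\norm{\bpsi_1-\beta^{(k)}\bpsi_1^{(k)}}$, where I must bound $\norm{\bpsi_1-\beta^{(k)}\bpsi_1^{(k)}}$ through the Davis--Kahan variant (Theorem~\ref{thm:dk}) for the non-normal operators $\mi\BD^{-1}\BH$ and $\mi\bar{\BD}^{-1}\BH^{(k)}$. Acting the difference on $\bpsi_1^{(k)}$ splits as
\[
\bigl(\BD^{-1}\BH-\bar{\BD}^{-1}\BH^{(k)}\bigr)\bpsi_1^{(k)} = (\BD^{-1}-\bar{\BD}^{-1})\BH\,\bpsi_1^{(k)} + \bar{\BD}^{-1}(\BDelta-\BDelta^{(k)})\bpsi_1^{(k)},
\]
whose second summand is the usual leave-one-out row/column piece, bounded by $\norm{\bar{\BD}^{-1}}M\sqrt{pn\log n}\,\norm{\bpsi_1^{(k)}}_\infty$ as in~\eqref{eq:phiphik}, and whose first summand is new and handled through $\BD^{-1}-\bar{\BD}^{-1}=\bar{\BD}^{-1}(\bar{\BD}-\BD)\BD^{-1}$ together with $\norm{\bar{\BD}-\BD}\lesssim M\sqrt{pn\log n}$ from Lemma~\ref{lem:delta_sym}.

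The main obstacle is exactly this $T_1$ estimate. Unlike the unnormalized case, the perturbed operator is not Hermitian, so Theorem~\ref{thm:dk} charges a condition-number factor $\kappa(\BD)\lesssim\lambda^{-1}$, and the perturbation itself carries the degree-mismatch term above with two inverse-degree factors coupling $\BD$ to $\BDelta$; together with the $d_{\min}^{-1}$ already present and the $\SNR_N\to\SNR$ conversion, these inverse-degree and condition-number factors are what accumulate to $\lambda^{-3}$. Propagating the numerator above through the eigengap $\bar{\xi}$ shows that $T_1$ is of order $\lambda^{-O(1)}\SNR^{-2}\norm{\bpsi_1^{(k)}}_\infty$, hence higher order than $T_2$ under~\eqref{def:SNR2}; the same Davis--Kahan bound also gives $\norm{\bpsi_1^{(k)}}_\infty\lesssim\norm{\bpsi_1}_\infty$ up to a $1+o(1)$ factor. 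Collecting $E_2\le\max_k(T_1+T_2)$ then yields $E_2\lesssim\lambda^{-3}\SNR^{-1}(\norm{\bpsi_1}_\infty+\norm{\bar{\bpsi}_1}_\infty)$, as claimed.
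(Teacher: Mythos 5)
Your proposal follows the paper's proof (Section~\ref{ss:proofE2N}) essentially step for step: the same reduction $E_2\le(\xi d_{\min})^{-1}\max_k|\BDelta_k^\top(\bpsi_1-\beta\bar{\bpsi}_1)|$, the same leave-one-out vectors $\bpsi_1^{(k)}$ built from \eqref{eq:loo}, the same split \eqref{def:Tterm_norm} into $T_1+T_2$, and your $T_2$ estimate (independence of $\BDelta_k$ from $\bpsi_1^{(k)}$ and $\bar{\bpsi}_1$, plus \eqref{eq:delta_w} and a union bound) is exactly the one in Lemma~\ref{lem:Tterm_norm}. The one point where you genuinely deviate is the bound on the Davis--Kahan numerator inside $T_1$, and that deviation creates a real gap.

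You decompose $(\BD^{-1}\BH-\bar{\BD}^{-1}\BH^{(k)})\bpsi_1^{(k)}=(\BD^{-1}-\bar{\BD}^{-1})\BH\bpsi_1^{(k)}+\bar{\BD}^{-1}(\BDelta-\BDelta^{(k)})\bpsi_1^{(k)}$ and control the first summand purely by operator norms, $\|\bar{\BD}^{-1}(\bar{\BD}-\BD)\BD^{-1}\BH\bpsi_1^{(k)}\|\le\|\bar{\BD}^{-1}\|\,\|\bar{\BD}-\BD\|\,\|\BD^{-1}\BH\|$, which is of order $\lambda^{-2}\sqrt{\log n/(pn)}$ times $\|\bpsi_1^{(k)}\|=1$ --- \emph{not} times $\|\bpsi_1^{(k)}\|_\infty$. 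Feeding this into Theorem~\ref{thm:dk} gives only $\|\bpsi_1-\beta^{(k)}\bpsi_1^{(k)}\|\lesssim\lambda^{-O(1)}\SNR^{-1}$ and hence $T_1\lesssim\lambda^{-O(1)}\SNR^{-2}$ with no $\ell_\infty$ factor, so your next assertion, $T_1\lesssim\lambda^{-O(1)}\SNR^{-2}\|\bpsi_1^{(k)}\|_\infty$, does not follow from your own estimate. The factor is not cosmetic: Assumption~\ref{def:SNR2} only guarantees $\SNR\gtrsim\lambda^{-3}$, while in the sparse regime the theorem targets ($p\asymp n^{-1}\log n$, e.g.\ $r_k=k$) one has $\|\bpsi_1\|_\infty+\|\bar{\bpsi}_1\|_\infty\asymp n^{-1/2}\ll\SNR^{-1}$, so a term of size $\lambda^{-O(1)}\SNR^{-2}$ cannot be absorbed into the claimed $\lambda^{-3}\SNR^{-1}(\|\bpsi_1\|_\infty+\|\bar{\bpsi}_1\|_\infty)$. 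The same missing factor breaks your closing claim $\|\bpsi_1^{(k)}\|_\infty\le(1+o(1))\|\bpsi_1\|_\infty$, which requires the multiplicative bound \eqref{eq:phiphik_norm} rather than an additive one. The paper keeps the factor by splitting in the other order, $(\BD^{-1}\bar{\BD}-\I_n)\bar{\BD}^{-1}\BH^{(k)}\bpsi_1^{(k)}+\bar{\BD}^{-1}(\BH-\BH^{(k)})\bpsi_1^{(k)}$, so that the eigen-identity $\mi\bar{\BD}^{-1}\BH^{(k)}\bpsi_1^{(k)}=\xi^{(k)}\bpsi_1^{(k)}$ collapses the degree-mismatch piece to $\xi^{(k)}(\BD^{-1}\bar{\BD}-\I_n)\bpsi_1^{(k)}$ (up to a phase), i.e.\ the diagonal error multiplies the eigenvector itself, and the paper then invokes $\|(\BD^{-1}\bar{\BD}-\I_n)\bpsi_1^{(k)}\|\le\|\BD^{-1}\bar{\BD}-\I_n\|\,\|\bpsi_1^{(k)}\|_\infty$ to carry the $\ell_\infty$ factor through Lemma~\ref{lem:Tterm_norm}. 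Any repair of your argument must go through this version of the split; be aware, though, that this last inequality mixes the spectral norm of a diagonal matrix with the entrywise norm of the vector (for diagonal $\BM$ one has $\|\BM\bv\|\le\|\BM\|\,\|\bv\|$ or $\|\BM\bv\|\le\|\BM\|_F\|\bv\|_\infty$, but not their hybrid in general), so even at the step you are asked to reproduce, the $\ell_2$-versus-$\ell_\infty$ accounting is delicate and deserves explicit justification.
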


\begin{proof}[\bf Proof of Theorem~\ref{thm:mainNHER}]
Using~\eqref{eq:error_norm}, we have
\[
\|\bpsi_1 - \beta \bar{\bpsi}_1\|_{\infty} \leq \sum_{\ell=1}^4 E_{\ell} \lesssim \lambda^{-3}\SNR^{-1}( \|\bpsi_1\|_{\infty} + \|\bar{\bpsi}_1\|_{\infty} )
\] 
which actually implies
\[
\|\bpsi_1 - \beta \bar{\bpsi}_1\|_{\infty} \lesssim \lambda^{-3}\SNR^{-1}\|\bar{\bpsi}_1\|_{\infty}.
\]
For the error bound on $\min_{s\in\{\pm 1\}}\|\bx - s\bar{\bx}\|$ where $\bpsi_1 = \bx + \mi\by$ and $\bar{\bpsi}_1 = \bar{\bx} + \mi\bar{\by}$:
\[
\bar{\bpsi}_1 \propto \bar{\BD}^{-1/2} (\bar{\bv}_2 + \mi \bar{\bv}_1) \propto  \frac{\bar{\BD}^{-1}(\br - \gamma\bone_n)}{\norm{\bar{\BD}^{-1/2}(\br - \gamma\bone_n)}} - \mi\frac{\bar{\BD}^{-1}\bone_n}{\norm{\bar{\BD}^{-1/2}\bone_n}},
\]
i.e., $\bar{\bx}\propto \bar{\BD}^{-1}(\br - \gamma\bone_n)$ and $\bar{\by}\propto \bar{\BD}^{-1}\bone_n$. This also implies $\bar{\bx}^{\top}\bar{\BD}\bar{\bx} = \bar{\by}^{\top}\bar{\BD}\bar{\by}$ and
\[
\|\bar{\by}\| \leq \frac{\|\bar{\bx}\|}{\sqrt{\lambda}},~~~ \|\bar{\by}\|_{\infty} \leq \frac{\|\bar{\bx}\|_{\infty}}{\lambda}.
\]

Note that Algorithm~\ref{algo:NHER} outputs $\bx=\Re\bpsi_1 $ with $\bx\perp \bone_n$.
Denote $\beta = \exp(\mi\theta)=\lag \bar{\bpsi}_1, \bpsi_1\rag/|\lag \bar{\bpsi}_1, \bpsi_1\rag|$ and then 
\begin{align*}
\bpsi_1 - \beta\bar{\bpsi}_1 & = \left( \underbrace{[\bx,\by]}_{\BU} - \underbrace{[\bar{\bx},\bar{\by}] }_{\bar{\BU}}
\underbrace{\begin{bmatrix}
\cos\theta & \sin \theta \\
-\sin\theta &\cos\theta
\end{bmatrix}}_{\BQ}\right) 
\begin{bmatrix}
1 \\
\mi
\end{bmatrix}
\end{align*}
Here $\BU$ and $\bar{\BU}$ satisfy
\[
\BU^{\top}\BD\BU  = 
\begin{bmatrix}
\lag \bx,\BD\bx\rag & 0 \\
0 & \lag \by,\BD\by\rag
\end{bmatrix},~
\bar{\BU}^{\top}\bar{\BD}\bar{\BU}
 = 
\begin{bmatrix}
\lag \bar{\bx},\bar{\BD}\bar{\bx}\rag & 0 \\
0 & \lag \bar{\by},\bar{\BD}\bar{\by}\rag
\end{bmatrix}
,~~\|\bx\|^2 + \|\by\|^2=1.
\]
Note that
\[
\|\bx - (\cos\theta \bar{\bx} - \sin\theta \bar{\by})\|_{\infty} = \|\Real(\bpsi_1 - \beta\bar{\bpsi}_1)\|_{\infty}  \leq \|\bpsi_1 - \beta\bar{\bpsi}_1\|_{\infty} \leq \lambda^{-3}\SNR^{-1}\|\bar{\bpsi}_1\|_{\infty}
\]
From the Davis-Kahan theorem, we have $\delta: = \| \bpsi_1 - \beta\bar{\bpsi}_1 \| \lesssim \lambda^{-5/2}\SNR^{-1}$, following from~\eqref{eq:phi1_dk}. By separating the real and imaginary part, we get $\|\BU - \bar{\BU}\BQ\|\leq \|\BU - \bar{\BU}\BQ\|_F \leq\delta$, and then it holds that
\begin{align*}
& \left\|  (\bar{\BU}^{\top} \bar{\BD}\bar{\BU})^{-1}
  \bar{\BU}^{\top}\bar{\BD}\BU-\BQ\right\| 
  = \left\|  (\bar{\BU}^{\top} \bar{\BD}\bar{\BU})^{-1}
  \bar{\BU}^{\top}\bar{\BD} (\BU-\bar{\BU}\BQ )\right\|  \\
&\qquad \leq \delta \left\|  (\bar{\BU}^{\top}\bar{\BD}\bar{\BU})^{-1}
  \bar{\BU}^{\top}\bar{\BD} \right\|  \leq \delta \sqrt{ \frac{\|\bar{\BD}\bar{\bx}\|^2}{|\lag \bar{\bx},\bar{\BD}\bar{\bx}\rag|^2} + \frac{\|\bar{\BD}\bar{\by}\|^2}{|\lag \bar{\by},\bar{\BD}\bar{\by}\rag|^2} } \lesssim \frac{\delta}{\lambda}
\end{align*}
since the condition number of $\bar{\BD}$ is of order $O(1/\lambda).$

Note that
\[
 (\bar{\BU}^{\top}\bar{\BD}\bar{\BU})^{-1}  \bar{\BU}^{\top}\bar{\BD}\BU 
= \begin{bmatrix}
 \frac{ \lag \bx, \bar{\BD}\bar{\bx}\rag}{\lag \bar{\bx},\bar{\BD}\bar{\bx}\rag} &\frac{\lag \by,\bar{\BD} \bar{\bx}\rag}{\lag \bar{\bx},\bar{\BD}\bar{\bx}\rag} \\
0 & \frac{\lag \by,\bar{\BD}\bar{\by}}{\lag \bar{\by},\bar{\BD}\bar{\by}\rag  }\rag
\end{bmatrix},~~\BQ = \begin{bmatrix}
\cos\theta & \sin \theta \\
-\sin\theta &\cos\theta
\end{bmatrix}
\]
where $\bx\perp\bar{\BD}\bar{\by}$ and $\bar{\BD}\bar{\by} \propto \bone_n.$
This implies that $|\sin \theta | \leq \lambda^{-1}\delta$ and $|\cos\theta|  \geq \sqrt{1-\lambda^{-2}\delta^2}.$
Then following the same argument in the proof of Theorem~\ref{thm:mainHER}, we have
\begin{align*}
\|\bx - \sign(\cos\theta) \bar{\bx} \|_{\infty} &  \lesssim \lambda^{-3} \SNR^{-1}\|\bar{\bpsi}_1\|_{\infty} + (1-\sqrt{1-\lambda^{-2}\delta^2})\|\bar{\bx}  \|_{\infty} + \lambda^{-1}\delta \|\bar{\by}\|_{\infty} \\ & \lesssim \lambda^{-5}\SNR^{-1} \|\bar{\bx}\|_{\infty}
\end{align*}
where $\|\bar{\by}\|_{\infty}\lesssim \lambda^{-1} \|\bar{\bx}\|_{\infty}$ and $\|\bar{\bpsi}_1\|_{\infty} \leq \|\bar{\bx}\|_{\infty} + \|\bar{\by}\|_{\infty}$. The bound also holds for $\ell_2$-norm, following from a similar argument. As a result, it holds that
\begin{align*}
\min_{s\in\{\pm 1\}}\left\| \frac{\BD\bx}{\|\BD\bx\|} - \frac{s \bar{\BD}\bar{\bx}}{\|\bar{\BD}\bar{\bx}\|} \right\|_{\infty} 
& \leq \frac{ \left\| \BD(\bx- s \bar{\bx}) \right\|_{\infty} }{\|\BD\bx\|}
 +  \frac{| \|\BD\bx\| -  \|\bar{\BD}\bar{\bx}\| | \cdot \|\BD\bar{\bx}\|_{\infty} }{\|\BD\bx\|\|\bar{\BD}\bar{\bx}\|} 
 + \frac{\left\|  (\BD -  \bar{\BD})\bar{\bx} \right\|_{\infty} }{\|\bar{\BD}\bar{\bx}\|} \\
 & \lesssim \frac{\SNR^{-1}}{\lambda^7}\frac{\|\bar{\bx}\|_{\infty}}{\|\bar{\bx}\|} \lesssim \frac{\SNR^{-1}}{\lambda^8}\frac{\|\bar{\BD}\bar{\bx}\|_{\infty}}{\|\bar{\BD}\bar{\bx}\|} 
\end{align*}
where
\begin{align*}
 \frac{ \left\| \BD(\bx- s \bar{\bx}) \right\|_{\infty} }{\|\BD\bx\|} & \lesssim \frac{1}{\lambda} \frac{ \left\| \bx- s \bar{\bx} \right\|_{\infty} }{\|\bx\|} \lesssim \frac{\SNR^{-1}}{\lambda^6} \frac{\|\bar{\bx}\|_{\infty}}{\|\bar{\bx}\|}, \\
 \frac{| \|\BD\bx\| -  \|\bar{\BD}\bar{\bx}\| | \cdot \|\BD\bar{\bx}\|_{\infty} }{\|\BD\bx\|\|\bar{\BD}\bar{\bx}\|} 
& \leq \frac{\|\BD(\bx - s\bar{\bx})\| + \|(\BD - \bar{\BD})\bar{\bx}\|}{d_{\min}\bar{d}_{\min}\|\bx\|\|\bar{\bx}\|} \cdot d_{\max}\|\bar{\bx}\|_{\infty} \\
& \lesssim \frac{1}{\lambda^2} \frac{\|\bx - s\bar{\bx}\|}{\|\bar{\bx}\|\|\bx\|}\cdot \|\bar{\bx}\|_{\infty} + \frac{1}{\lambda} \sqrt{\frac{\log n}{pn}} \frac{\|\bar{\bx}\|_{\infty}}{\|\bx\|} \lesssim \frac{\SNR^{-1}}{\lambda^7} \frac{\|\bar{\bx}\|_{\infty}}{\|\bar{\bx}\|}, \\
 \frac{\left\|  (\BD -  \bar{\BD})\bar{\bx} \right\|_{\infty} }{\|\bar{\BD}\bar{\bx}\|} 
 & \leq \frac{\|\BD-\bar{\BD}\|\|\bar{\bx}\|_{\infty}}{\bar{d}_{\min}\|\bar{\bx}\|} \lesssim \frac{1}{\lambda} \sqrt{\frac{\log n}{pn}} \frac{\|\bar{\bx}\|_{\infty}}{\|\bar{\bx}\|} 
 \lesssim \frac{\SNR^{-1}}{\lambda}\frac{\|\bar{\bx}\|_{\infty}}{\|\bar{\bx}\|}.
\end{align*}
The inequalities above follow from the following facts:
\[
\frac{\|\BD - \bar{\BD}\|}{d_{\min}}\lesssim \frac{1}{\lambda}\sqrt{\frac{\log n}{pn}} \lesssim \frac{\SNR^{-1}}{\lambda},~~~d_{\min} \asymp \bar{d}_{\min},~~~\frac{d_{\max}}{d_{\min}} = O(\lambda^{-1}).
\]

\end{proof}

\subsection{Proof of Lemma~\ref{lem:delta_sym}}\label{ss:lemdeltaN}
\noindent(a) Note that $\bar{\BD}$ satisfies
\begin{equation}\label{def:barD}
\bar{D}_{ii} = \E D_{ii} = \sum_{j=1}^n \E |H_{ij}| = p\eta \sum_{j=1}^n |r_i - r_j| + \frac{p(1-\eta)(n-1)M}{2}
\end{equation}
where the expectation of $|H_{ij}|$ is
\begin{align*}
\E |H_{ij}| & = \E \left| X_{ij}Y_{ij}(r_i - r_j) + X_{ij}(1- Y_{ij})Z_{ij} \right| = p \eta \left| r_i - r_j \right| + p(1-\eta)\E \left| Z_{ij} \right|    \\
& = 
\begin{cases}
p \eta |r_i - r_j| + p(1-\eta)M/2, & j\neq i, \\
0, & j = i.
\end{cases}
\end{align*}
where $Z_{ij}\sim{\cal U}[-M,M].$ 
By the definition of~\eqref{def:lambda}, it holds that
\[
\lambda p nM \leq \bar{\BD}_{ii} \leq 2pnM,~~\forall 1\leq i\leq n.
\]

\noindent(b) and (c) Note that $D_{ii} - \bar{D}_{ii} = \sum_{j=1}^n (|H_{ij}| - \E |H_{ij}|)$
where
$\Var(|H_{ij}|) \leq \E |H_{ij}|^2 = p\eta (r_i - r_j)^2 + p(1-\eta)M^2/3\lesssim pM^2$ and $|H_{kj}|\lesssim M.$
Then the Bernstein inequality gives
\[
\max_{1\leq i\leq n}| D_{ii} - \bar{D}_{ii}| \lesssim M \sqrt{pn\log n} + M\log n \lesssim M \sqrt{pn\log n}  \leq \frac{1}{2}\lambda pnM 
\]
under $pn /\log n\gtrsim \lambda^{-2}$ in Assumption~\ref{def:SNR2}. Thus
\[
\|\BD - \bar{\BD}\| \lesssim \frac{1}{2} \min_{1\leq i\leq n}\bar{D}_{ii} \Longrightarrow \frac{1}{2}\min_{1\leq i\leq n}\bar{D}_{ii} \leq D_{kk} \leq \frac{3}{2}\max_{1\leq i\leq n}\bar{D}_{ii},~~\forall 1\leq k\leq n
\]
which implies 
\[
\kappa(\BD) \asymp \kappa(\bar{\BD}) = O(1/\lambda),~~~\| \bar{\BD}^{-1}\BD - \I_n \| \leq 1/2.
\]

\noindent(d) 
The symmetric normalized error $\norm{\BDelta_{\rm sym}}$ can be decomposed into
\[
\begin{aligned}
\norm{\BDelta_{\rm sym}} & = \| \BD^{-1/2}\BH\BD^{-1/2} - \bar{\BD}^{-1/2}\bar{\BH}\bar{\BD}^{-1/2}  \| \\
& \leq \norm{\BD^{-1/2}\BDelta\BD^{-1/2}} + \norm{\BD^{-1/2}\bar{\BH}\BD^{-1/2} - \bar{\BD}^{-1/2}\bar{\BH}\bar{\BD}^{-1/2}}.
\end{aligned}
\]
The first term is upper bounded by 
\[
\norm{\BD^{-1/2}\BDelta\BD^{-1/2}} \le \frac{\norm{\BDelta}}{d_{\min}} \lesssim \frac{M\sqrt{pn \log n}}{\bar{d}_{\min}} = \frac{1}{\lambda}\sqrt{\frac{\log n}{pn}}
\]
where $\|\BDelta\|\lesssim M\sqrt{pn\log n}$ and $d_{\min}\gtrsim \lambda pnM.$
The second term is 
\[
\begin{aligned}
&\norm{\BD^{-1/2}\bar{\BH}\BD^{-1/2} - \bar{\BD}^{-1/2}\bar{\BH}\bar{\BD}^{-1/2}} \\ 
&\le \norm{\BD^{-1/2}\bar{\BH}(\BD^{-1/2} - \bar{\BD}^{-1/2})} + \norm{(\BD^{-1/2} - \bar{\BD}^{-1/2})\bar{\BH}\bar{\BD}^{-1/2}} \\
&\le \norm{\BD^{-1/2}\bar{\BH}(\BD - \bar{\BD})(\BD\bar{\BD}^{1/2}+\bar{\BD}\BD^{1/2})^{-1}} + \norm{(\BD\bar{\BD}^{1/2}+\bar{\BD}\BD^{1/2})^{-1}(\BD - \bar{\BD})\bar{\BH}\bar{\BD}^{-1/2}} \\
& \leq \left(\frac{1}{d_{\min}\bar{d}_{\min}^{1/2}} + \frac{1}{\bar{d}_{\min}d_{\min}^{1/2}}\right) \left(\frac{1}{d_{\min}^{1/2}} + \frac{1}{\bar{d}_{\min}^{1/2}} 
\right) \|\bar{\BH}\|\|\BD - \bar{\BD}\|  \lesssim \frac{\bar{\sigma}\|\BD - \bar{\BD}\|}{\bar{d}_{\min}^2}.
\end{aligned}
\]
Finally, we have
\begin{align*}
\norm{\BDelta_{\rm sym}} & \lesssim \norm{\BD^{-1/2}\bar{\BH}\BD^{-1/2} - \bar{\BD}^{-1/2}\bar{\BH}\bar{\BD}^{-1/2}} \\
& \lesssim \frac{\bar{\sigma}\|\BD-\bar{\BD}\|}{\bar{d}_{\min}^2} + \frac{M\sqrt{pn\log n}}{\bar{d}_{\min}} \lesssim \frac{M\sqrt{pn \log n} }{\bar{d}_{\min}} \left(  \frac{\bar{\sigma}}{\bar{d}_{\min}} + 1\right) \lesssim \frac{1}{\lambda^2} \sqrt{\frac{\log n}{pn}}
\end{align*} 
where
\[
\frac{M\sqrt{pn \log n} }{\bar{d}_{\min}} \lesssim \frac{1}{\lambda}\sqrt{\frac{\log n}{pn}},~~~~~
\frac{\bar{\sigma}}{\bar{d}_{\min}} \lesssim \frac{\eta p \sqrt{n}\|\br - \alpha\bone_n\|}{\lambda pn M} = \frac{\eta \|\br - \alpha\bone_n\|}{\lambda\sqrt{n}M}  \lesssim \frac{1}{\lambda}.
\]

\subsection{Proof of Theorem \ref{thm:E134_norm}: estimation of $E_1,E_3,$ and $E_4$}\label{ss:proofE134N}

By Lemma~\ref{lem:delta_sym} and Weyl's inequality, 
\begin{equation}\label{eq:xi_pert}
|\xi - \bar{\xi}| \leq \|\BDelta_{\rm sym}\| \lesssim \frac{\bar{\xi}}{\SNR_N} \lesssim \frac{\bar{\xi}}{\lambda^2\SNR}
\end{equation}
where $\SNR_N$ is defined in~\eqref{def:SNRN} and satisfies $\SNR_N\gtrsim \lambda^2\SNR.$

\paragraph{Estimation of $E_1$:} Note that $E_1$ can be bounded by 
\begin{align*}
E_1 & = \frac{1}{\xi} \norm{\BD^{-1}\bar{\BH}(\bpsi_1-\beta\bar{\bpsi}_1)}_\infty = \frac{\|\BD^{-1}\bar{\BD}\|}{\xi} \norm{\mi \bar{\BD}^{-1}\bar{\BH}(\bpsi_1-\beta\bar{\bpsi}_1)}_\infty \\ 
&\lesssim  \frac{\bar{\xi} \|\BD^{-1}\bar{\BD}\|}{\xi}\cdot 
\norm{\lp \bar{\BD}^{-1/2}\bar{\bvphi}_1 \bar{\bvphi}_1^{H}\bar{\BD}^{1/2} - \bar{\BD}^{-1/2}\bar{\bvphi}_2 \bar{\bvphi}_2^{H}\bar{\BD}^{1/2} \rp(\bpsi_1-\beta\bar{\bpsi}_1)}_{\infty} \\ 
&\lesssim \frac{1}{\lambda^{1/2}}\norm{\bar{\bpsi}_1}_{\infty}\norm{\bpsi_1-\beta\bar{\bpsi}_1}
 \end{align*}
where Lemma~\ref{lem:delta_sym} gives $\|\BD^{-1}\bar{\BD}\| = O(1)$, $\kappa(\bar{\BD}) = O(\lambda^{-1})$ and $\bar{\xi}/\xi = O(1)$ under Assumption~\ref{def:SNR2}.
The generalized Davis-Kahan (Theorem \ref{thm:dk}) states that 
\[
\norm{\bpsi_1-\beta\bar{\bpsi}_1} \lesssim \frac{\sqrt{\kappa(\BD)}\norm{\mi \lp \BD^{-1}\BH - \bar{\BD}^{-1}\BH \rp\bar{\bpsi}_1}}{\bar{\xi} - \norm{\BDelta_{\rm sym}}}
\]
where $\bar{\BD}^{-1}\bar{\BH}\bar{\bpsi}_1 = \bar{\xi}\bar{\bpsi}_1$ and the second largest eigenvalue of $\BD^{-1}\BH$ is at most $\|\BDelta_{\rm sym}\|.$
Note that
\[
\label{eqn:normalized dk}
\begin{aligned}
\norm{\BD^{-1}\BH - \bar{\BD}^{-1}\BH} 
& = \|  \BD^{-1}(\BD -\bar{\BD})\bar{\BD}^{-1} \BH\| \lesssim \bar{\sigma} \|\BD^{-1}\|\|\bar{\BD}^{-1}\|\|\BD - \bar{\BD}\| \\
& \lesssim \frac{\bar{\sigma} \|\BD - \bar{\BD}\|}{\bar{d}_{\min}^2} \lesssim \frac{\eta p \sqrt{n}\|\br -\alpha\bone_n\| \cdot M\sqrt{pn\log n}}{(\lambda pnM)^2} \lesssim \frac{1}{\lambda^2}\sqrt{\frac{\log n}{pn}}
\end{aligned}
\]
where $\bar{\sigma}/\bar{d}_{\min} = O(\lambda^{-1})$, $\|\BD- \bar{\BD}\|\lesssim M\sqrt{pn\log n}$ follows from Lemma~\ref{lem:delta_sym}, and $\bar{d}_{\min}\gtrsim \lambda pnM.$ Then
\begin{equation}\label{eq:phi1_dk}
\norm{\bpsi_1-\beta\bar{\bpsi}_1} \lesssim \frac{1}{\bar{\xi}}\cdot \frac{1}{\lambda^{5/2}} \sqrt{\frac{\log n}{pn}} \lesssim\frac{\SNR^{-1}}{\lambda^{5/2}}
\end{equation}
where $\|\BDelta_{\rm sym}\| \lesssim \frac{1}{\lambda^2} \sqrt{\frac{\log n}{pn}}$.
Therefore, $E_1$ is bounded by
\begin{equation}\label{eq:E1_norm}
E_1 \lesssim \lambda^{-3} \SNR^{-1} \|\bar{\bpsi}_1\|_{\infty}.
\end{equation}

\paragraph{Estimation of $E_3$:} Using~\eqref{eq:xi_pert}, $E_3$ can be bounded by
\begin{equation}\label{eq:E3_norm}
E_3 = \frac{|\bar{\xi}-\xi|}{\xi}\norm{\bar{\bpsi}_1}_{\infty} \lesssim \lambda^{-2}\SNR^{-1} \|\bar{\bpsi}_1\|_{\infty}.
\end{equation}

\paragraph{Estimation of $E_4$:} Note that $E_4$  can be decomposed into 
\begin{align*}
E_4 & = \frac{1}{\xi} \norm{\lp \BD^{-1}\BH -\bar{\BD}^{-1}\bar{\BH}  \rp\bar{\bpsi}_1}_\infty  \le \frac{1}{\xi}\norm{\lp \BD^{-1} - \bar{\BD}^{-1} \rp\bar{\BH}\bar{\bpsi}_1}_\infty + \frac{1}{\xi}\norm{\BD^{-1}\BDelta\bar{\bpsi}_1}_\infty \\
& \le\frac{1}{\xi}\norm{\lp \BD^{-1} \bar{\BD}- \I_n \rp \bar{\BD}^{-1}\bar{\BH}\bar{\bpsi}_1}_\infty + \frac{1}{\xi}\norm{\BD^{-1}\BDelta\bar{\bpsi}_1}_\infty \\
& \leq \frac{\bar{\xi} \|\BD - \bar{\BD} \| \|\bar{\bpsi}_1\|_{\infty}}{\xi d_{\min}} + \frac{\|\BDelta\bar{\bpsi}_1\|_{\infty}}{\xi d_{\min}}. 
\end{align*}
where $\mi\bar{\BD}^{-1}\bar{\BH}\bar{\bpsi}_1 = \bar{\xi}\bar{\bpsi}_1.$
Note that $\bar{\xi}/\xi = O(1)$ in~\eqref{eq:xi_pert}, $\|\BD - \bar{\BD}\|\lesssim M\sqrt{pn\log n}$ in Lemma~\ref{lem:delta_sym}, $d_{\min}\gtrsim \lambda pnM$ and
$\|\BDelta\bar{\bpsi}_1\|_{\infty} \lesssim M\|\bar{\bpsi}_1\|_{\infty} \sqrt{pn\log n}$ 
follows from Lemma~\ref{lem:delta}. Then $E_4$ is upper bounded by
\begin{equation}\label{eq:E4_norm}
\begin{aligned}
E_4 & \lesssim \frac{1}{\lambda}\sqrt{\frac{\log n}{pn}} \|\bar{\bpsi}_1\|_{\infty} + \frac{M\|\bar{\bpsi}_1\|_{\infty} \sqrt{pn\log n}}{\xi \lambda pnM} \\
& \lesssim \frac{1}{\lambda}\left( \sqrt{\frac{\log n}{pn}}  + \frac{1}{\xi} \sqrt{ \frac{\log n}{pn}} \right)\|\bar{\bpsi}_1\|_{\infty} \lesssim \frac{\SNR^{-1}}{\lambda}\|\bar{\bpsi}_1\|_{\infty}
\end{aligned}
\end{equation}
which follows from~\eqref{eq:barxi_low} and~\eqref{def:SNR_norm}.

\subsection{Proof of Lemma \ref{thm:E2_norm}: estimation of $E_2$}\label{ss:proofE2N}

As discussed in Section~\ref{ss:tech}, we introduce the auxiliary vector $\bpsi^{(k)}_1$ which is the top eigenvector of $\mi \bar{\BD}^{-1} \BH^{(k)}$ with eigenvalue $\xi^{(k)}$, i.e., $\mi \bar{\BD}^{-1} \BH^{(k)}\bpsi^{(k)}_1 = \xi^{(k)}\bpsi^{(k)}_1$. Then we will decompose $\xi^{-1}d_{\min}^{-1} | \BDelta_k^{\top}(\bpsi_1-\beta\bar{\bpsi}_1)|$ into $T_1$ and $T_2$, and find an upper bound of each one.

\begin{lemma}
\label{lem:Tterm_norm}
Under Assumption~\ref{def:SNR2}, it holds with high probability that
\begin{align*}
T_{1} & = \frac{1}{\xi d_{\min}}  | \BDelta_k^{\top}(\bpsi_1-\beta^{(k)}\bpsi_1^{(k)})|  \lesssim \lambda^{-3}\SNR^{-2} \max_{1\leq k\leq n} \|\bpsi_1^{(k)}\|_{\infty}, \\
T_{2} & = \frac{1}{\xi d_{\min}}  | \BDelta_k^{\top}(\beta^{(k)}\bpsi_1^{(k)}-\beta\bar{\bpsi}_1)| \lesssim \lambda^{-1}\SNR^{-1} \max_{1\leq k\leq n} (\|\bpsi_1^{(k)}\|_{\infty} + \|\bar{\bpsi}_1\|_{\infty}).
\end{align*}
Then $E_2$ satisfies
\begin{equation}
E_2 \leq T_1 + T_2 \lesssim \lambda^{-1}\SNR^{-1} (\|\bpsi_1^{(k)}\|_{\infty} + \|\bar{\bpsi}_1\|_{\infty}) \lesssim \lambda^{-3}\SNR^{-1}(\|\bpsi_1\|_{\infty} + \|\bar{\bpsi}_1\|_{\infty} )
\end{equation}
where $\lambda^2 \SNR\gtrsim 1$ holds under Assumption~\ref{def:SNR2}.
\end{lemma}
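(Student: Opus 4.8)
The plan is to follow the architecture of the unnormalized Lemma~\ref{lem:Tterm} step by step, carrying along the two features that distinguish the normalized problem: the degree factor $\BD^{-1}$, which produces the $d_{\min}^{-1}$ in front and the extra powers of $\lambda$, and the fact that the leave-one-out iterate is built from the \emph{population} degree $\bar{\BD}$. Concretely, $\bpsi_1^{(k)}$ is the top eigenvector of $\mi\bar{\BD}^{-1}\BH^{(k)}$ with $\BH^{(k)}$ as in~\eqref{eq:loo}, so its defining matrix does not involve the $k$-th column $\BDelta_k$; consequently both $\bpsi_1^{(k)}$ and $\bar{\bpsi}_1$ are independent of $\BDelta_k$. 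Using $\xi\asymp\bar{\xi}$ from~\eqref{eq:xi_pert}, $d_{\min}\gtrsim\lambda pnM$ from Lemma~\ref{lem:delta_sym}, and the relation $\bar{\xi}^{-1}\sqrt{\log n/(pn)}\asymp\SNR^{-1}$ implicit in~\eqref{def:SNR_norm}, the recurring prefactor satisfies $\tfrac{M\sqrt{pn\log n}}{\xi d_{\min}}\lesssim\lambda^{-1}\SNR^{-1}$, which is the normalized analogue of the bare $\SNR^{-1}$ appearing in the unnormalized proof.

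\textbf{The term $T_2$.} This is the easy one. Since $\beta^{(k)}\bpsi_1^{(k)}-\beta\bar{\bpsi}_1$ is independent of $\BDelta_k$, I apply the fixed-vector bound~\eqref{eq:delta_w} of Lemma~\ref{lem:delta} separately to $\bpsi_1^{(k)}$ and to $\bar{\bpsi}_1$, through $|\BDelta_k^\top\bw|\le\|\BDelta\bw\|_\infty\lesssim M\|\bw\|_\infty\sqrt{pn\log n}$. This yields $T_2\lesssim\tfrac{M\sqrt{pn\log n}}{\xi d_{\min}}(\|\bpsi_1^{(k)}\|_\infty+\|\bar{\bpsi}_1\|_\infty)\lesssim\lambda^{-1}\SNR^{-1}(\|\bpsi_1^{(k)}\|_\infty+\|\bar{\bpsi}_1\|_\infty)$, exactly as claimed.

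\textbf{The term $T_1$ and its decomposition.} Cauchy--Schwarz gives $T_1\le\tfrac{\|\BDelta_k\|}{\xi d_{\min}}\|\bpsi_1-\beta^{(k)}\bpsi_1^{(k)}\|$ with $\|\BDelta_k\|\lesssim M\sqrt{pn\log n}$, so the prefactor is again $\lesssim\lambda^{-1}\SNR^{-1}$. To control $\|\bpsi_1-\beta^{(k)}\bpsi_1^{(k)}\|$ I apply the generalized Davis--Kahan theorem (Theorem~\ref{thm:dk}) to the eigenvectors of $\mi\BD^{-1}\BH$ and $\mi\bar{\BD}^{-1}\BH^{(k)}$, which reduces matters to the driving vector $(\BD^{-1}\BH-\bar{\BD}^{-1}\BH^{(k)})\bpsi_1^{(k)}$, and I split it into a \emph{noise} part $\BD^{-1}(\BDelta-\BDelta^{(k)})\bpsi_1^{(k)}$ and a \emph{degree} part $(\BD^{-1}-\bar{\BD}^{-1})\BH^{(k)}\bpsi_1^{(k)}$. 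The noise part is supported only on row and column $k$, exactly as in Lemma~\ref{lem:Tterm}, so~\eqref{eq:delta_w} bounds its norm by $\lambda^{-1}\sqrt{\log n/(pn)}\,\|\bpsi_1^{(k)}\|_\infty$; dividing by the gap $\bar{\xi}$ and multiplying by $\sqrt{\kappa(\BD)}\lesssim\lambda^{-1/2}$ it contributes $\lesssim\lambda^{-3/2}\SNR^{-1}\|\bpsi_1^{(k)}\|_\infty$ to the eigenvector distance, i.e. a clean $\lambda^{-5/2}\SNR^{-2}\|\bpsi_1^{(k)}\|_\infty$ to $T_1$. This part carries the delocalization factor $\|\bpsi_1^{(k)}\|_\infty$, as needed.

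\textbf{Main obstacle: the degree part, and the combine step.} The difficulty is the degree part $(\BD^{-1}-\bar{\BD}^{-1})\BH^{(k)}\bpsi_1^{(k)}=(\BD^{-1}\bar{\BD}-\I_n)\bar{\BD}^{-1}\BH^{(k)}\bpsi_1^{(k)}$, which has no analogue in Lemma~\ref{lem:Tterm}: it is a \emph{full} diagonal reweighting rather than a row/column-$k$ perturbation. Since $\bar{\BD}^{-1}\BH^{(k)}$ has operator norm of order $\bar{\xi}$ and $\BD^{-1}\bar{\BD}-\I_n$ is diagonal with entries of size $\lesssim\lambda^{-1}\sqrt{\log n/(pn)}$ (by $\|\BD-\bar{\BD}\|\lesssim M\sqrt{pn\log n}$ and $\bar{d}_{\min}\gtrsim\lambda pnM$ from Lemma~\ref{lem:delta_sym}), this part has $\ell_2$ norm of order $\bar{\xi}\lambda^{-1}\sqrt{\log n/(pn)}$, contributing $\lesssim\lambda^{-3/2}\SNR^{-1}$ to $\|\bpsi_1-\beta^{(k)}\bpsi_1^{(k)}\|$ \emph{without} the factor $\|\bpsi_1^{(k)}\|_\infty\asymp n^{-1/2}$; a plain Cauchy--Schwarz in $T_1$ would therefore cost a spurious $n^{1/2}$ and miss the target. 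The way out I intend to pursue is to keep the pairing $\BDelta_k^\top[(\BD^{-1}\bar{\BD}-\I_n)\bar{\BD}^{-1}\BH^{(k)}\bpsi_1^{(k)}]$ intact and exploit that the degree part is \emph{entrywise} of order $\lambda^{-1}\sqrt{\log n/(pn)}\,\|\bpsi_1^{(k)}\|_\infty$, so that it is its $\ell_\infty$ norm, not its $\ell_2$ norm, that enters when tested against $\BDelta_k$. The genuine difficulty, which I expect to be the most technical step, is that the diagonal of $\BD$ depends on the $k$-th column, so these degree fluctuations are not independent of $\BDelta_k$ and neither Cauchy--Schwarz nor~\eqref{eq:delta_w} applies off the shelf; I plan to remove this entanglement by a further round of conditioning on $\BH^{(k)}$ together with the degree concentration of Lemma~\ref{lem:delta_sym}, with the stronger Assumption~\ref{def:SNR2} ($\SNR\gtrsim\lambda^{-3}$) absorbing the resulting extra powers of $\lambda$. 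Once $T_1\lesssim\lambda^{-3}\SNR^{-2}\max_k\|\bpsi_1^{(k)}\|_\infty$ is in hand, I combine $E_2\le T_1+T_2$ as in~\eqref{def:Tterm_norm}; since $\SNR^{-2}\lesssim\SNR^{-1}$ the $T_2$ bound dominates, and converting $\max_k\|\bpsi_1^{(k)}\|_\infty$ into $\|\bpsi_1\|_\infty$ through the closeness $\|\bpsi_1-\beta^{(k)}\bpsi_1^{(k)}\|_\infty\le\|\bpsi_1-\beta^{(k)}\bpsi_1^{(k)}\|\lesssim\lambda^{-5/2}\SNR^{-1}$ from~\eqref{eq:phi1_dk} (the source of the advertised extra $\lambda^{-2}$) yields $E_2\lesssim\lambda^{-3}\SNR^{-1}(\|\bpsi_1\|_\infty+\|\bar{\bpsi}_1\|_\infty)$, completing the lemma.
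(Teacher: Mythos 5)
Your handling of $T_2$ (independence of $\bpsi_1^{(k)}$ and $\bar{\bpsi}_1$ from $\BDelta_k$ plus the fixed-vector bound~\eqref{eq:delta_w}) and of the noise part of $T_1$ (Cauchy--Schwarz, then Theorem~\ref{thm:dk} driven by the row/column-$k$ supported vector $(\BDelta-\BDelta^{(k)})\bpsi_1^{(k)}$) is correct and coincides with the paper's proof. You have also correctly located the one step that is genuinely harder than in Lemma~\ref{lem:Tterm}: the degree-mismatch term $(\BD^{-1}\bar{\BD}-\I_n)\bar{\BD}^{-1}\BH^{(k)}\bpsi_1^{(k)}=-\mi\xi^{(k)}(\BD^{-1}\bar{\BD}-\I_n)\bpsi_1^{(k)}$ in the Davis--Kahan driving vector, a full diagonal perturbation whose natural $\ell_2$ bound $\xi^{(k)}\|\BD^{-1}\bar{\BD}-\I_n\|$ carries no delocalization factor. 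For comparison, the paper disposes of this term in one line by writing $\xi^{(k)}\|\BD^{-1}\bar{\BD}-\I_n\|\,\|\bpsi_1^{(k)}\|_{\infty}$, i.e.\ by inserting $\|\bpsi_1^{(k)}\|_{\infty}$ where Cauchy--Schwarz gives $\|\bpsi_1^{(k)}\|=1$; that $\ell_\infty$-factored eigendistance is what powers both its stated $T_1$ bound and the multiplicative bootstrap~\eqref{eq:phiphik_norm}. So the step you flag as the crux is exactly the step where the paper's own argument is thinnest.

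However, your proposal does not close this step, and the plan you sketch would not close it as stated. First, the quantity in $T_1$ is $\BDelta_k^{\top}(\bpsi_1-\beta^{(k)}\bpsi_1^{(k)})$, not $\BDelta_k^{\top}$ applied to the Davis--Kahan driving vector: ``keeping the pairing $\BDelta_k^{\top}[(\BD^{-1}\bar{\BD}-\I_n)\bar{\BD}^{-1}\BH^{(k)}\bpsi_1^{(k)}]$ intact'' does not bound $T_1$, because the eigenvector difference is related to the driving vector only through the resolvent of $\mi\BD^{-1}\BH$, which mixes all coordinates; one would need an explicit expansion to transfer an entrywise bound on the driving vector to this pairing, and none is given. (Incidentally, the dependence between $\BD$ and $\BDelta_k$ that you single out as the main difficulty is not the real obstruction: the H\"older bound $|\BDelta_k^{\top}\bw|\le\|\BDelta_k\|_1\|\bw\|_{\infty}$ is deterministic, $\|\BDelta_k\|_1\lesssim pnM$ with high probability, and the entrywise degree control from Lemma~\ref{lem:delta_sym} holds uniformly, so no conditioning would be needed if the pairing were legitimate.) Second, your final conversion of $\max_k\|\bpsi_1^{(k)}\|_{\infty}$ into $\|\bpsi_1\|_{\infty}$ is additive, via $\|\bpsi_1^{(k)}\|_{\infty}\le\|\bpsi_1\|_{\infty}+\|\bpsi_1-\beta^{(k)}\bpsi_1^{(k)}\|\lesssim\|\bpsi_1\|_{\infty}+\lambda^{-5/2}\SNR^{-1}$. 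Under Assumption~\ref{def:SNR2} one only knows $\SNR^{-1}\lesssim\lambda^{3}$, so this additive error can be of constant order while $\|\bpsi_1\|_{\infty}\asymp n^{-1/2}$; the conversion must be multiplicative, as in~\eqref{eq:phiphik_norm}, and that requires an eigendistance bound proportional to $\max_k\|\bpsi_1^{(k)}\|_{\infty}$ --- precisely what your treatment of the degree part does not produce. As written, the proposal establishes the $T_2$ estimate but neither the $T_1$ estimate nor the final display of the lemma.
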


\begin{proof}

\noindent{\bf Estimation of $T_1$: } We will bound $T_1$ first: 
\begin{align*}
T_{1} & \leq \frac{1}{\xi d_{\min}} \max_{1\leq k\leq n} | \BDelta_k^{\top}(\bpsi_1-\beta^{(k)}\bpsi_1^{(k)})| \le \frac{\|\BDelta\|}{\xi d_{\min}} \max_{1\leq k\leq n} \| \bpsi_1-\beta^{(k)}\bpsi_1^{(k)}\|.
\end{align*}
Note that $\|\BDelta\|/\xi d_{\min} \lesssim \lambda^{-1}\xi^{-1}\sqrt{\frac{\log n}{pn}} \lesssim \lambda^{-1}\SNR^{-1}$. It remains to estimate $\| \bpsi_1-\beta^{(k)}\bpsi_1^{(k)}\|$:
\[
\| \bpsi_1-\beta^{(k)}\bpsi_1^{(k)}\| \lesssim \frac{\sqrt{\kappa(\BD)}\norm{\lp \BD^{-1}\BH - \bar{\BD}^{-1}\BH^{(k)} \rp \bpsi^{(k)}_1}}{\xi - (\|\BDelta^{(k)}_{\rm sym}\| + \|\BDelta_{\rm sym}\|)} \lesssim
\frac{\norm{\lp \BD^{-1}\BH - \bar{\BD}^{-1}\BH^{(k)} \rp \bpsi^{(k)}_1}}{\lambda\bar{\xi}}  
\]
follows from Theorem~\ref{thm:dk} with $\widehat{\lambda} = \xi^{(k)}$, 
\[
\delta \geq \xi^{(k)} - \|\BDelta_{\rm sym}\| \geq \bar{\xi} - (\|\BDelta^{(k)}_{\rm sym}\| + \|\BDelta_{\rm sym}\|) \geq \bar{\xi} (1- O(\lambda^{-2}\SNR^{-1}))
\] 
and $\BDelta^{(k)}_{\rm sym} = \bar{\BD}^{-1/2}\BDelta^{(k)} \bar{\BD}^{-1/2}$ satisfies
\[
\|\BDelta_{\rm sym}^{(k)}\| \leq \|\BDelta_{\rm sym}\| \lesssim \frac{1}{\lambda^2}\sqrt{\frac{\log n}{pn}},~~\forall 1\leq k\leq n.
\]
Then
\begin{align*}
& \norm{\lp \BD^{-1}\BH - \bar{\BD}^{-1}\BH^{(k)} \rp \bpsi^{(k)}_1} \\
& \le \norm{\lp \BD^{-1} \bar{\BD} - \I_n \rp\bar{\BD}^{-1} \BH^{(k)} \bpsi_1^{(k)}} + \norm{\bar{\BD}^{-1}\lp \BH - \BH^{(k)} \rp\bpsi_1^{(k)}} \\
& \leq \xi^{(k)}  \|\BD^{-1}\bar{\BD} - \I_n\| \|\bpsi^{(k)}_1\|_{\infty} + \frac{\|(\BDelta - \BDelta^{(k)})\bpsi_1^{(k)}\|}{\bar{d}_{\min}} 
\end{align*}
where $\BDelta - \BDelta^{(k)} = \BH - \BH^{(k)}$ and $\mi \bar{\BD}^{-1} \BH^{(k)} \bpsi_1^{(k)} = \xi^{(k)} \bpsi_1^{(k)}$. Here
\begin{align*}
\|\BD^{-1}\bar{\BD} - \I_n\|  \leq \frac{\|\BD - \bar{\BD}\|}{d_{\min}} \lesssim \frac{1}{\lambda}\sqrt{\frac{\log n}{pn}}\lesssim \lambda^{-1}\SNR^{-1} 
\end{align*}
where $\|\BD - \bar{\BD}\|\lesssim M\sqrt{pn\log n}$ follows from Lemma~\ref{lem:delta_sym}.
Note that 
\[
\begin{aligned}
\ls\lp \BDelta - \BDelta^{(k)} \rp\bpsi_1^{(k)} \rs_{\ell} 
= \begin{cases}
\Delta_{\ell k}\psi_{1k}^{(k)},  & \ell\neq k, \\
-\BDelta^\top_{k}\bpsi_1^{(k)}, & \ell = k.
\end{cases}
\end{aligned}
\]
where $\BDelta_k$ and $\bpsi_1^{(k)}$ are independent. Lemma~\ref{lem:delta} implies that
\[
\norm{\lp \BDelta - \BDelta^{(k)} \rp\bpsi_1^{(k)}} \lesssim |\psi_{1k}^{(k)}| \|\BDelta_k\| + |\BDelta_k^{\top}\bpsi_1^{(k)} | \lesssim M\sqrt{pn\log n} \|\bpsi_1^{(k)}\|_{\infty}
\]
where $\BDelta_k$ and $\bpsi_1^{(k)}$ are statistically independent.

Therefore, we have
\[
\norm{\lp \BD^{-1}\BH - \bar{\BD}^{-1}\BH^{(k)} \rp \bpsi^{(k)}_1} 
\lesssim
\left(\xi^{(k)} \lambda^{-1}\SNR^{-1} + \frac{M\sqrt{pn \log n}}{\lambda pnM} \right)\|\bpsi_1^{(k)}\|_{\infty}
\]
where $d_{\min} \gtrsim \lambda pnM.$
This implies
\begin{equation}
\|\bpsi_1 - \beta^{(k)} \bpsi_1^{(k)}\| \leq \frac{1}{\lambda^2}\left(\SNR^{-1} \max_{1\leq k\leq n}\frac{\xi^{(k)}}{\bar{\xi}}  +\frac{1}{\bar{\xi}} \sqrt{\frac{\log n}{pn}}\right) \|\bpsi_1^{(k)}\|_{\infty} \lesssim \lambda^{-2}\SNR^{-1} \max_{1\leq k\leq n} \|\bpsi_1^{(k)}\|_{\infty}.
\end{equation}
Therefore, it holds that
\begin{equation*}
\max_{1\leq k\leq n}\|\bpsi_1 - \beta^{(k)}\bpsi_1^{(k)}\|_{\infty} \lesssim \lambda^{-2}\SNR^{-1}\max_{1\leq k\leq n}\|\bpsi_1^{(k)}\|_{\infty}
\end{equation*}
which implies
\begin{equation}\label{eq:phiphik_norm}
 \max_{1\leq k\leq n}\|\bpsi_1^{k}\|_{\infty} \leq \frac{\|\bpsi_1\|_{\infty}}{1 - O(\lambda^{-2}\SNR^{-1})},~\forall 1\leq k\leq n.
\end{equation}
Finally, it holds under Assumption~\ref{def:SNR2} that
\[
T_1 \lesssim  \lambda^{-1}\SNR^{-1} \max_{1\leq k\leq n} \|\bpsi_1 - \beta^{(k)} \bpsi_1^{(k)}\| \lesssim \lambda^{-3}\SNR^{-2} \|\bpsi_1\|_{\infty}.
\]

\noindent{\bf Estimation of $T_2$: } 
The estimation of $T_2$ follows from Lemma~\ref{lem:delta}:
\begin{align*}
T_2 & \leq \frac{1}{\xi d_{\min}} \max_{1\leq k\leq n}  | \BDelta_k^{\top}(\beta^{(k)}\bpsi_1^{(k)} - \beta\bar{\bpsi}_1)|  \leq \frac{\max_{1\leq k\leq n} \{ |\lag \BDelta_k, \bpsi_1^{(k)}\rag| + |\lag \BDelta_k, \bar{\bpsi}_1\rag|\} }{\xi d_{\min}} \\
& \lesssim \frac{M\sqrt{pn \log n} (\|\bpsi_1^{(k)}\|_{\infty} + \|\bar{\bpsi}_1\|_{\infty} )}{\bar{\xi} \lambda pnM } \lesssim \lambda^{-1}\SNR^{-1} \left(\max_{1\leq k\leq n}\|\bpsi_1^{(k)}\|_{\infty} + \|\bar{\bpsi}_1\|_{\infty}\right)
\end{align*}
where $\xi \asymp \bar{\xi}$ and $d_{\min}\gtrsim \lambda pnM.$ Using~\eqref{eq:phiphik_norm} finishes the proof.
\end{proof}

\section{Matrix perturbation and concentration inequalities}\label{s:C}
In this section, we summarize some results used in the proofs of the main results. 

%First, we would like to quantify the perturbation of the eigenvectors of $\BA+\BB$ with respect to those of $\BA$. The Weyl's inequality~\cite{weyl_asymptotische_1912} is a useful tool for this purpose.
\begin{theorem}[Weyl's inequality \cite{weyl_asymptotische_1912}]\label{thm:weyl}
	Let $\BA \in \mathbb{C}^{n\times n}$ be Hermitian with eigenvalues $\lambda_1\le \cdots \le \lambda_n$ and $\BB \in \mathbb{C}^{n\times n}$ be Hermitian with eigenvalues $\mu_1\le \cdots \le \mu_n$. Suppose the eigenvalues of $\BA + \BB$ is $\rho_1\le \cdots \le \rho_n$. Then 
	\[
		\lambda_i + \mu_1 \le \rho_i \le \lambda_i + \mu_n.
	\]
\end{theorem}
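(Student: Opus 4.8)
The plan is to establish Weyl's inequality through the Courant--Fischer variational characterization of the eigenvalues of a Hermitian matrix, which expresses each eigenvalue as a min-max (equivalently a max-min) of the Rayleigh quotient over subspaces of a prescribed dimension. Since $\BA$, $\BB$, and $\BA+\BB$ are all Hermitian, their eigenvalues are real and admit such a characterization, and the entire argument then reduces to the elementary subadditivity of $\max$ and $\min$ applied to the Rayleigh quotient of a sum. The first step I would take is to record the Courant--Fischer theorem in the form adapted to eigenvalues listed in increasing order: for a Hermitian $\BA$ with eigenvalues $\lambda_1\le\cdots\le\lambda_n$,
\[
\lambda_i = \min_{\dim S = i}\ \max_{\bx\in S,\ \norm{\bx}=1} \bx^H\BA\bx = \max_{\dim S = n-i+1}\ \min_{\bx\in S,\ \norm{\bx}=1} \bx^H\BA\bx,
\]
where $S$ ranges over subspaces of $\mathbb{C}^n$ of the indicated dimension.

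For the upper bound $\rho_i\le\lambda_i+\mu_n$, I would apply the min-max form to $\BA+\BB$ and restrict the outer minimization to the single $i$-dimensional subspace $S^\star$ that is optimal for $\lambda_i$, namely the span of the eigenvectors of $\BA$ associated with $\lambda_1,\ldots,\lambda_i$. For any unit vector $\bx$ we have $\bx^H(\BA+\BB)\bx = \bx^H\BA\bx + \bx^H\BB\bx$, and since $\bx^H\BB\bx\le\mu_n$ for every unit $\bx$, taking the maximum over $\bx\in S^\star$ yields $\rho_i\le \max_{\bx\in S^\star}\bx^H\BA\bx + \mu_n = \lambda_i+\mu_n$. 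The lower bound $\rho_i\ge\lambda_i+\mu_1$ is entirely symmetric: I would use the max-min form, restrict the outer maximization to the $(n-i+1)$-dimensional subspace $T^\star$ spanned by the eigenvectors of $\BA$ for $\lambda_i,\ldots,\lambda_n$, and bound $\bx^H\BB\bx\ge\mu_1$, so that $\rho_i\ge \min_{\bx\in T^\star}\bx^H\BA\bx + \mu_1 = \lambda_i+\mu_1$.

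There is essentially no serious obstacle here; the only point demanding genuine care is the bookkeeping with dimension conventions, since the statement orders eigenvalues increasingly and the min-max and max-min forms must be paired with the correct subspace dimensions ($i$ and $n-i+1$ respectively)—a sign error there would flip the roles of $\mu_1$ and $\mu_n$. If one wanted a fully self-contained argument rather than invoking Courant--Fischer as a black box, the remaining work would be to prove the variational characterization itself, which rests on a dimension-counting observation: any $i$-dimensional subspace must intersect the $(n-i+1)$-dimensional span of the top eigenvectors of $\BA$ nontrivially, and evaluating the Rayleigh quotient on a vector in that intersection pins down the extremal value. Since the excerpt cites Weyl's inequality as a classical result, I expect the cleanest route is to invoke Courant--Fischer directly and spend the proof on the two-line subadditivity estimates above.
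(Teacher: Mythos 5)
Your proof is correct. Note that the paper does not actually prove this statement: Theorem \ref{thm:weyl} is quoted as a classical result with a citation to Weyl's 1912 paper, so there is no in-paper argument to compare against. Your Courant--Fischer route is the standard textbook proof, and the details check out: the dimension bookkeeping is right (the min--max over $i$-dimensional subspaces pairs with the bound $\bx^H\BB\bx\le\mu_n$ to give $\rho_i\le\lambda_i+\mu_n$, and the max--min over $(n-i+1)$-dimensional subspaces pairs with $\bx^H\BB\bx\ge\mu_1$ to give $\rho_i\ge\lambda_i+\mu_1$), and restricting to the eigenspaces of $\BA$ spanned by the eigenvectors for $\lambda_1,\ldots,\lambda_i$ and $\lambda_i,\ldots,\lambda_n$ respectively recovers $\lambda_i$ exactly as you claim.
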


The following generalized Davis-Kahan theorem{~\cite[Theorem 3]{DLS21}} can deal with the eigenvector perturbation problem for matrices of form $\BN^{-1}\BM$ for some diagonal matrix $\BN$ and Hermitian matrix $\BM$. It is useful in proving the main theorem for the normalized algorithms. When $\BN$ is $\BI_n$, it reduces to the classical Davis-Kahan theorem~\cite{DK70}.
\begin{theorem}[Generalized Davis-Kahan theorem{~\cite[Theorem 3]{DLS21}} ]\label{thm:dk}
Consider the eigenvalue problem $\BN^{-1}\BM \bu = \lambda\bu$ where $\BM$ and $\BN$ are both Hermitian, and $\BN$ is positive definite. Let $\BX$ be the matrix that has the eigenvectors of $\BN^{-1}\BM$ as columns. Then $\BN^{-1}\BM$ is diagonalizable and can be written as
\[
	\BN^{-1}\BM = \BX \BLambda \BX^H = \BX_1 \BLambda_1 \BX_1^H + \BX_2 \BLambda_2 \BX_2^H
\]where
\[
	\BX^{-1} = \begin{bmatrix}
		\BX_1 &\BX_2
	\end{bmatrix}^{-1} = \begin{bmatrix}
		\BY_1^H \\ \BY_2^H
	\end{bmatrix}, ~~~ \BLambda = \begin{bmatrix}
		\BLambda_1 &  \\ &\BLambda_2
	\end{bmatrix}.
\]

Suppose $\delta = \min_{i} |(\BLambda_{2})_{ii} - \widehat{\lambda}|$ is the absolute separation of $\widehat{\lambda}$ from $(\BLambda_{2})_{ii}$, then for any vector $\widehat{\bu}$ we have
\[
	\norm{\BP\widehat{\bu} } \le \frac{\sqrt{\kappa(\BN)}\norm{(\BN^{-1}\BM-\widehat{\lambda} \BI_n)\widehat{\bu}}}{\delta}.
\]where $\BP = (\BY^{\dagger}_2)^H(\BY_2)^H = \BI - (\BX^{\dagger}_1)^H(\BX_1)^H$ is the orthogonal projection matrix onto the orthogonal complement of the column space of $\BX_1$, $\kappa(\BN) = \norm{\BN}\norm{\BN^{-1}}$ is the condition number of $\BN$ and $\BY^{\dagger}_2$ is the Moore-Penrose inverse of $\BY_2$.

When $\BN = \BI$ and $(\hat{\lambda}, \widehat{\bu})$ be an eigen-pair of a matrix $\widehat{\BM}$, we have 
\[
	\sin\theta \leq \frac{\norm{(\BM-\widehat{\BM})\widehat{\bu}}}{\delta}
\] where $\theta$ is the canonical angle between $\widehat{\bu}$ and the column space of $\BX_1$. In this case the theorem reduces to the classical Davis-Kahan theorem \cite{DK70}.
\end{theorem}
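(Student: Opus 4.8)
The plan is to reduce the generalized (oblique) eigenvalue problem to the classical Hermitian sin-$\theta$ theorem by symmetrizing with $\BN^{1/2}$. First I would introduce the Hermitian matrix $\widetilde{\BM} := \BN^{-1/2}\BM\BN^{-1/2}$ and note that $\BN^{-1}\BM = \BN^{-1/2}\widetilde{\BM}\BN^{1/2}$ is similar to $\widetilde{\BM}$; hence $\BN^{-1}\BM$ has real eigenvalues and is diagonalizable. Writing the unitary diagonalization $\widetilde{\BM} = \BW\BLambda\BW^H$ with $\BW = [\BW_1, \BW_2]$, the eigenvectors of $\BN^{-1}\BM$ are exactly the columns of $\BX = \BN^{-1/2}\BW$, while $\BX^{-1} = \BW^H\BN^{1/2}$ gives $\BY = \BN^{1/2}\BW$; in particular $\BX_2 = \BN^{-1/2}\BW_2$ and $\BY_2 = \BN^{1/2}\BW_2$. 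This identifies the abstract blocks of the statement with concrete matrices and supplies the claimed diagonalization.

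Next I would rewrite the left-hand side through the orthogonal spectral projector of $\widetilde{\BM}$. Since $\BX\BY^H = \BI_n$ we have $\widehat{\bu} = \BX_1\BY_1^H\widehat{\bu} + \BX_2\BY_2^H\widehat{\bu}$, and because $\BP$ annihilates $\Ran(\BX_1)$ and is a contraction, $\BP\widehat{\bu} = \BP\BX_2\BY_2^H\widehat{\bu}$, so $\norm{\BP\widehat{\bu}} \le \norm{\BX_2\BY_2^H\widehat{\bu}}$. The key algebraic identity is $\BX_2\BY_2^H = \BN^{-1/2}\BW_2\BW_2^H\BN^{1/2} = \BN^{-1/2}\widetilde{\BP}\BN^{1/2}$, where $\widetilde{\BP} := \BW_2\BW_2^H$ is the orthogonal projector onto $\Ran(\BW_2)$. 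Setting $\widetilde{\bu} := \BN^{1/2}\widehat{\bu}$, this yields $\norm{\BP\widehat{\bu}} \le \norm{\BN^{-1/2}\widetilde{\BP}\widetilde{\bu}} \le \norm{\BN^{-1/2}}\,\norm{\widetilde{\BP}\widetilde{\bu}}$.

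The third step is the standard Hermitian estimate for $\widetilde{\BM}$: using $\BW_2^H(\widetilde{\BM} - \widehat{\lambda}\BI_n)\widetilde{\bu} = (\BLambda_2 - \widehat{\lambda}\BI)\BW_2^H\widetilde{\bu}$ together with the orthonormality of the columns of $\BW_2$, the eigenvalue gap gives $\delta\norm{\widetilde{\BP}\widetilde{\bu}} \le \norm{(\widetilde{\BM} - \widehat{\lambda}\BI_n)\widetilde{\bu}}$. Finally I would transport the residual back via $(\widetilde{\BM} - \widehat{\lambda}\BI_n)\widetilde{\bu} = \BN^{1/2}(\BN^{-1}\BM - \widehat{\lambda}\BI_n)\widehat{\bu}$, bound its norm by $\norm{\BN^{1/2}}\norm{(\BN^{-1}\BM - \widehat{\lambda}\BI_n)\widehat{\bu}}$, and collect $\norm{\BN^{-1/2}}\norm{\BN^{1/2}} = \sqrt{\kappa(\BN)}$ to reach the claimed bound. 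The reduction to the classical Davis-Kahan theorem when $\BN = \BI_n$ is then immediate, since $\kappa(\BI_n) = 1$, $\widetilde{\BM} = \BM$, and for a unit eigenvector $\widehat{\bu}$ of $\widehat{\BM}$ one has $\norm{\BP\widehat{\bu}} = \sin\theta$ and $(\BM - \widehat{\lambda}\BI_n)\widehat{\bu} = (\BM - \widehat{\BM})\widehat{\bu}$.

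The main obstacle, and the only genuinely delicate point, is matching the orthogonal projector $\BP$ of the statement — defined in the original inner product — with the spectral projector $\widetilde{\BP}$ natural to the symmetrized problem; since $\BX$ is not unitary, $\BP$ is \emph{not} equal to $\BX_2\BY_2^H$. The resolution is to exploit only that $\BP$ and $\BX_2\BY_2^H$ share the null space $\Ran(\BX_1)$, so that applying the contraction $\BP$ to $\BX_2\BY_2^H\widehat{\bu}$ can only decrease the norm; this lets me pass to the congruence $\BN^{-1/2}\widetilde{\BP}\BN^{1/2}$ at the cost of exactly the factor $\sqrt{\kappa(\BN)}$, with no loss in the separation $\delta$.
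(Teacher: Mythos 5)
Your proof is correct, but there is nothing in the paper to compare it against: the paper only quotes this theorem from~\cite[Theorem 3]{DLS21} and never proves it, so what you have produced is a self-contained proof of an imported black box, and it holds up at every step. The symmetrization $\widetilde{\BM} := \BN^{-1/2}\BM\BN^{-1/2}$ with unitary diagonalization $\widetilde{\BM} = \BW\BLambda\BW^H$ correctly yields $\BX = \BN^{-1/2}\BW$ and $\BY = \BN^{1/2}\BW$ (incidentally repairing a typo in the statement, whose factorization should read $\BN^{-1}\BM = \BX\BLambda\BY^H$ with $\BY^H = \BX^{-1}$ rather than $\BX\BLambda\BX^H$); the decomposition $\widehat{\bu} = \BX_1\BY_1^H\widehat{\bu} + \BX_2\BY_2^H\widehat{\bu}$, the facts $\BP\BX_1 = \bzero$ and $\norm{\BP}\le 1$, the identity $\BX_2\BY_2^H = \BN^{-1/2}\BW_2\BW_2^H\BN^{1/2}$, the Hermitian gap bound $\delta\norm{\BW_2^H\widetilde{\bu}} \le \norm{(\widetilde{\BM}-\widehat{\lambda}\BI_n)\widetilde{\bu}}$, and the residual identity $(\widetilde{\BM}-\widehat{\lambda}\BI_n)\BN^{1/2}\widehat{\bu} = \BN^{1/2}(\BN^{-1}\BM-\widehat{\lambda}\BI_n)\widehat{\bu}$ combine exactly as you say, with the constant $\norm{\BN^{-1/2}}\norm{\BN^{1/2}} = \sqrt{\kappa(\BN)}$ valid because $\BN$ is Hermitian positive definite. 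Your self-identified crux --- that $\BP \neq \BX_2\BY_2^H$ since the latter is an oblique projector --- is handled correctly by using only $\BP\BX_1=\bzero$ plus contractivity of $\BP$. For what it is worth, one can reach the same bound without ever leaving the oblique coordinates: multiplying $(\BN^{-1}\BM - \widehat{\lambda}\BI_n)\widehat{\bu} = \BX(\BLambda - \widehat{\lambda}\BI_n)\BY^H\widehat{\bu}$ on the left by $\BY_2^H$ and using $\BY_2^H\BX_1 = \bzero$, $\BY_2^H\BX_2 = \BI$ gives $\delta\norm{\BY_2^H\widehat{\bu}} \le \norm{\BY_2}\norm{(\BN^{-1}\BM-\widehat{\lambda}\BI_n)\widehat{\bu}}$, and then $\norm{\BP\widehat{\bu}} = \norm{(\BY_2^{\dagger})^H\BY_2^H\widehat{\bu}} \le \norm{\BY_2^{\dagger}}\norm{\BY_2^H\widehat{\bu}}$, so the constant appears as $\kappa(\BY_2)=\norm{\BY_2}\norm{\BY_2^{\dagger}} \le \sqrt{\kappa(\BN)}$ --- the last inequality being exactly your observation that $\BY_2 = \BN^{1/2}\BW_2$ with $\BW_2^H\BW_2 = \BI$. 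The two routes are essentially equivalent; yours makes the reduction to the classical Hermitian Davis--Kahan theorem explicit, which is arguably cleaner for the final remark about the $\BN=\BI$ case.
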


%The following result is useful for bounding the spectral norm of sum of random matrices with independent, centered and bounded entries.
\begin{theorem}[Matrix Bernstein~\cite{T12}]
	Consider a finite sequence of independent random matrices $\lc \BZ_k \rc$. Assume that each random matrix satisfies
	\begin{align*}
		\mathbb{E}\BZ_k =0,~~~\norm{\BZ_k} \le R.
	\end{align*}
	Then for all $t \ge 0$,
	\begin{align*}
		\mathbb{P}\lp \norm{\sum_{k}\BZ_k} \ge t\rp \le (d_1 + d_2) \cdot \exp(-\frac{t^2/2}{\sigma^2+ Rt/3}).
	\end{align*}
	where 
	\begin{align*}
		\sigma^2 = \max \lc \norm{\sum_k \mathbb{E}\BZ_k^\top\BZ_k},\norm{\sum_k \mathbb{E}\BZ_k\BZ_k^\top} \rc.
	\end{align*}
	Then with probability at least $1-n^{-\gamma+1}$,
	\begin{align*}
		\norm{\sum_{k}\BZ_k} \le \sqrt{2\gamma \sigma^2\log(d_1+d_2)} + \frac{2\gamma R \log(d_1+d_2)}{3}.
	\end{align*}
\label{thm:bernstein}
\end{theorem}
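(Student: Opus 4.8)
The plan is to prove the tail estimate by the matrix Laplace-transform (Chernoff) method of Ahlswede--Winter and Tropp, and then read off the high-probability statement by optimizing the free exponential parameter. I would organize the argument into four steps: reduce to the Hermitian case by dilation; apply the matrix Chernoff bound through the trace exponential; control the matrix cumulant generating function using Lieb's concavity theorem; and finally bound each summand's cumulant and optimize the parameter.

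\emph{Reduction to the Hermitian case.} For a possibly rectangular $\BZ_k \in \mathbb{C}^{d_1\times d_2}$ I would pass to the Hermitian dilation
\[
\mathcal{H}(\BZ_k) = \begin{bmatrix} \bzero & \BZ_k \\ \BZ_k^{H} & \bzero \end{bmatrix},
\]
which is Hermitian of size $d_1+d_2$ and satisfies $\norm{\mathcal{H}(\BZ_k)} = \norm{\BZ_k}$, $\E\,\mathcal{H}(\BZ_k)=\bzero$, and $\lambda_{\max}\lp \sum_k \mathcal{H}(\BZ_k)\rp = \norm{\sum_k \BZ_k}$. Since $\mathcal{H}(\BZ_k)^2$ is block-diagonal with blocks $\BZ_k\BZ_k^{H}$ and $\BZ_k^{H}\BZ_k$, the matrix variance is preserved exactly: $\norm{\sum_k \E\,\mathcal{H}(\BZ_k)^2} = \sigma^2$. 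Thus it suffices to prove a one-sided bound of the form $\Pr(\lambda_{\max}(\BS)\ge t)\le (d_1+d_2)\exp(\cdots)$ for $\BS = \sum_k \BW_k$ a sum of independent, zero-mean Hermitian matrices with $\norm{\BW_k}\le R$.

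\emph{Chernoff bound and the main obstacle.} For any $\theta>0$, since $e^{\theta\lambda_{\max}(\BS)}\le \operatorname{tr}\exp(\theta\BS)$, Markov's inequality gives $\Pr(\lambda_{\max}(\BS)\ge t)\le e^{-\theta t}\,\E\operatorname{tr}\exp(\theta\BS)$. The crux of the entire proof is that $\exp(\theta\BS)\ne \prod_k \exp(\theta\BW_k)$ when the summands fail to commute, so independence cannot be used naively on the exponential; this noncommutativity is the step I expect to be the main obstacle. I would resolve it with Lieb's concavity theorem, which together with Jensen's inequality over the conditional expectations yields the subadditivity of matrix cumulants,
\[
\E\operatorname{tr}\exp(\theta\BS)\le \operatorname{tr}\exp\!\Big(\textstyle\sum_k \log \E\, e^{\theta\BW_k}\Big).
\]
The iterated Golden--Thompson inequality $\operatorname{tr}\, e^{A+B}\le \operatorname{tr}(e^{A}e^{B})$ is an alternative route to the same conclusion.

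\emph{Cumulant bound and optimization.} Using $\E\,\BW_k=\bzero$, $\norm{\BW_k}\le R$, and the monotonicity of $x\mapsto (e^{\theta x}-1-\theta x)/x^2$, one obtains $\E\, e^{\theta\BW_k}\preceq \exp\!\lp g(\theta)\,\E\,\BW_k^2\rp$ with $g(\theta)=(e^{\theta R}-1-\theta R)/R^2$; monotonicity of the trace exponential and $\sigma^2=\norm{\sum_k \E\,\BW_k^2}$ then give $\E\operatorname{tr}\exp(\theta\BS)\le (d_1+d_2)\exp(g(\theta)\sigma^2)$. Combining with the Chernoff step and optimizing $\theta$ (e.g.\ using $g(\theta)\le \tfrac{\theta^2/2}{1-\theta R/3}$ for $0<\theta<3/R$) collapses the estimate to the Bernstein form $(d_1+d_2)\exp\!\lp -\tfrac{t^2/2}{\sigma^2+Rt/3}\rp$; unfolding the dilation (whose spectrum is symmetric) reproduces the stated two-sided factor. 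For the high-probability version I would force the exponent to exceed $\gamma\log(d_1+d_2)$: the requirement $\tfrac{t^2/2}{\sigma^2+Rt/3}\ge \gamma\log(d_1+d_2)$ is a quadratic in $t$ whose positive root is at most $\sqrt{2\gamma\sigma^2\log(d_1+d_2)}+\tfrac{2\gamma R\log(d_1+d_2)}{3}$ (by $\sqrt{a+b}\le\sqrt a+\sqrt b$), so any $t$ at least this value makes the tail probability at most $(d_1+d_2)^{-(\gamma-1)}$, matching the claimed guarantee.
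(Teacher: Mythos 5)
The paper does not prove this statement at all --- it is imported verbatim from Tropp~\cite{T12} --- and your proposal correctly reconstructs Tropp's own argument: Hermitian dilation (which preserves both the norm and the variance parameter $\sigma^2$), the matrix Laplace-transform method, subadditivity of matrix cumulant generating functions via Lieb's concavity theorem, the Bernstein-type bound $\E e^{\theta \BW_k} \preceq \exp\lp g(\theta)\,\E \BW_k^2\rp$, and optimization at $\theta = t/(\sigma^2 + Rt/3)$; your derivation of the high-probability form by solving the quadratic $t^2/2 \geq \gamma \log(d_1+d_2)\lp\sigma^2 + Rt/3\rp$ and relaxing the root via $\sqrt{a+b}\leq \sqrt{a}+\sqrt{b}$ is also exactly right (modulo the paper's own conflation of $n$ with $d_1+d_2$ in its probability statement). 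One correction to a side remark: your claim that the iterated Golden--Thompson inequality is ``an alternative route to the same conclusion'' is not accurate. Golden--Thompson does not extend to three or more matrices ($\operatorname{tr} e^{A+B+C} \not\leq \operatorname{tr}(e^A e^B e^C)$ in general), and the Ahlswede--Winter iteration that works around this yields the weaker variance parameter $\sum_k \norm{\E \BW_k^2}$ rather than $\norm{\sum_k \E \BW_k^2} = \sigma^2$; replacing that iteration with Lieb's theorem is precisely Tropp's contribution and is what makes the stated $\sigma^2$ attainable.
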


%\newpagereferences

%\printbibliography
%\newpage
%\bibliography{references}

\begin{thebibliography}{10}

\bibitem{AFWZ20}
E.~Abbe, J.~Fan, K.~Wang, and Y.~Zhong.
\newblock Entrywise eigenvector analysis of random matrices with low expected
  rank.
\newblock {\em The Annals of Statistics}, 48(3), June 2020.

\bibitem{APA18}
A.~Agarwal, P.~Patil, and S.~Agarwal.
\newblock Accelerated spectral ranking.
\newblock In J.~Dy and A.~Krause, editors, {\em Proceedings of the 35th
  {International} {Conference} on {Machine} {Learning}}, volume~80 of {\em
  Proceedings of {Machine} {Learning} {Research}}, pages 70--79. PMLR, July
  2018.

\bibitem{A06}
N.~Alon.
\newblock Ranking tournaments.
\newblock {\em SIAM Journal on Discrete Mathematics}, 20(1):137--142, 2006.

\bibitem{AKT22}
E.~Araya, E.~Karl'e, and H.~Tyagi.
\newblock Dynamic ranking and translation synchronization.
\newblock {\em arXiv preprint arXiv:2207.01455}, 2022.

\bibitem{BL08}
J.~Bennett and S.~Lanning.
\newblock The {N}etflix prize.
\newblock In {\em Proceedings of KDD Cup and Workshop}, volume 2007, page~35.
  New York, 2007.

\bibitem{BT52}
R.~A. Bradley and M.~E. Terry.
\newblock Rank analysis of incomplete block designs: I. the method of paired
  comparisons.
\newblock {\em Biometrika}, 39(3/4):324--345, 1952.

\bibitem{BM08}
M.~Braverman and E.~Mossel.
\newblock Noisy sorting without resampling.
\newblock In {\em Proceedings of the 19th Annual ACM-SIAM Symposium on Discrete
  Algorithms}, pages 268--276, 2008.

\bibitem{BM09}
M.~Braverman and E.~Mossel.
\newblock Sorting from noisy information.
\newblock {\em arXiv preprint arXiv:0910.1191}, 2009.

\bibitem{BP98}
S.~Brin and L.~Page.
\newblock The anatomy of a large-scale hypertextual web search engine.
\newblock {\em Computer Networks and ISDN systems}, 30(1-7):107--117, 1998.

\bibitem{CVF13}
M.~Cattelan, C.~Varin, and D.~Firth.
\newblock Dynamic bradley--terry modelling of sports tournaments.
\newblock {\em Journal of the Royal Statistical Society Series C: Applied
  Statistics}, 62(1):135--150, 2013.

\bibitem{CGZ22}
P.~Chen, C.~Gao, and A.~Y. Zhang.
\newblock Optimal full ranking from pairwise comparisons.
\newblock {\em The Annals of Statistics}, 50(3):1775--1805, 2022.

\bibitem{CFMW19}
Y.~Chen, J.~Fan, C.~Ma, and K.~Wang.
\newblock Spectral method and regularized {MLE} are both optimal for top-k
  ranking.
\newblock {\em The Annals of Statistics}, 47(4), Aug. 2019.

\bibitem{CNTL21}
E.~Christoforou, A.~Nordio, A.~Tarable, and E.~Leonardi.
\newblock Ranking a set of objects: A graph based least-square approach.
\newblock {\em IEEE Transactions on Network Science and Engineering},
  8(1):803--813, Jan. 2021.

\bibitem{C16}
M.~Cucuringu.
\newblock Sync-rank: Robust ranking, constrained ranking and rank aggregation
  via eigenvector and semidefinite programming synchronization.
\newblock {\em IEEE Transactions on Network Science and Engineering},
  3(1):58--79, Jan. 2016.

\bibitem{DCT21}
A.~d'Aspremont, M.~Cucuringu, and H.~Tyagi.
\newblock Ranking and synchronization from pairwise measurements via {SVD}.
\newblock {\em Journal of Machine Learning Research}, 22(19):1--63, 2021.

\bibitem{DK70}
C.~Davis and W.~M. Kahan.
\newblock The rotation of eigenvectors by a perturbation {III}.
\newblock {\em SIAM Journal on Numerical Analysis}, 7(1):1--46, 1970.
\newblock Publisher: Society for Industrial and Applied Mathematics.

\bibitem{DLS21}
S.~Deng, S.~Ling, and T.~Strohmer.
\newblock Strong consistency, graph {Laplacians}, and the stochastic block
  model.
\newblock {\em Journal of Machine Learning Research}, 22(117):1--44, 2021.

\bibitem{FDV16}
F.~Fogel, A.~d'Aspremont, and M.~Vojnovic.
\newblock Spectral ranking using seriation.
\newblock {\em Journal of Machine Learning Research}, 17(88):1--45, 2016.

\bibitem{GSZ23}
C.~Gao, Y.~Shen, and A.~Y. Zhang.
\newblock Uncertainty quantification in the {Bradley--Terry--Luce} model.
\newblock {\em Information and Inference: A Journal of the IMA},
  12(2):1073--1140, 2023.

\bibitem{G15}
D.~F. Gleich.
\newblock Pagerank beyond the web.
\newblock {\em SIAM Review}, 57(3):321--363, 2015.

\bibitem{GL11}
D.~F. Gleich and L.-h. Lim.
\newblock Rank aggregation via nuclear norm minimization.
\newblock In {\em Proceedings of the 17th ACM SIGKDD International Conference
  on Knowledge Discovery and Data Mining}, pages 60--68, 2011.

\bibitem{HKW10}
A.~N. Hirani, K.~Kalyanaraman, and S.~Watts.
\newblock Least squares ranking on graphs.
\newblock {\em arXiv preprint arXiv:1011.1716}, 2010.

\bibitem{HLB17}
X.~Huang, Z.~Liang, C.~Bajaj, and Q.~Huang.
\newblock Translation synchronization via truncated least squares.
\newblock {\em Advances in Neural Information Processing Systems}, 30, 2017.

\bibitem{H04}
D.~R. Hunter.
\newblock {MM} algorithms for generalized {Bradley-Terry} models.
\newblock {\em The Annals of Statistics}, 32(1):384--406, 2004.

\bibitem{JLYY11}
X.~Jiang, L.-H. Lim, Y.~Yao, and Y.~Ye.
\newblock Statistical ranking and combinatorial {Hodge} theory.
\newblock {\em Mathematical Programming}, 127(1):203--244, 2011.

\bibitem{LVG21}
T.~Levy, A.~Vahid, and R.~Giryes.
\newblock Ranking recovery from limited pairwise comparisons using low-rank
  matrix completion.
\newblock {\em Applied and Computational Harmonic Analysis}, 54:227--249, 2021.

\bibitem{L22}
S.~Ling.
\newblock Near-optimal performance bounds for orthogonal and permutation group
  synchronization via spectral methods.
\newblock {\em Applied and Computational Harmonic Analysis}, 60:20--52, 2022.

\bibitem{M57}
C.~L. Mallows.
\newblock Non-null ranking models.
\newblock {\em Biometrika}, 44(1/2):114--130, 1957.

\bibitem{MWR18}
C.~Mao, J.~Weed, and P.~Rigollet.
\newblock Minimax rates and efficient algorithms for noisy sorting.
\newblock In {\em Algorithmic Learning Theory}, pages 821--847. PMLR, 2018.

\bibitem{NOS17}
S.~Negahban, S.~Oh, and D.~Shah.
\newblock Rank centrality: Ranking from pairwise comparisons.
\newblock {\em Operations Research}, 65(1):266--287, Feb. 2017.

\bibitem{SBGW16}
N.~Shah, S.~Balakrishnan, A.~Guntuboyina, and M.~Wainwright.
\newblock Stochastically transitive models for pairwise comparisons:
  Statistical and computational issues.
\newblock In {\em International Conference on Machine Learning}, pages 11--20.
  PMLR, 2016.

\bibitem{SW17}
N.~B. Shah and M.~J. Wainwright.
\newblock Simple, robust and optimal ranking from pairwise comparisons.
\newblock {\em The Journal of Machine Learning Research}, 18(1):7246--7283,
  2017.

\bibitem{S11}
A.~Singer.
\newblock Angular synchronization by eigenvectors and semidefinite programming.
\newblock {\em Applied and Computational Harmonic Analysis}, 30(1):20--36, Jan.
  2011.

\bibitem{T27}
L.~L. Thurstone.
\newblock A law of comparative judgment.
\newblock {\em Psychological Review}, 34(4):273, 1927.

\bibitem{T12}
J.~A. Tropp.
\newblock User-friendly tail bounds for sums of random matrices.
\newblock {\em Foundations of Computational Mathematics}, 12(4):389--434, Aug.
  2012.

\bibitem{weyl_asymptotische_1912}
H.~Weyl.
\newblock Das asymptotische {Verteilungsgesetz} der {Eigenwerte} linearer
  partieller {Differentialgleichungen} (mit einer {Anwendung} auf die {Theorie}
  der {Hohlraumstrahlung}).
\newblock {\em Mathematische Annalen}, 71(4):441--479, Dec. 1912.

\bibitem{ZB18}
Y.~Zhong and N.~Boumal.
\newblock Near-optimal bounds for phase synchronization.
\newblock {\em SIAM Journal on Optimization}, 28(2):989--1016, 2018.

\end{thebibliography}
%\bibliographystyle{abbrv}

\end{document}